\def\A{{\bf A}}
\def\a{{\bf a}}
\def\B{{\bf B}}
\def\C{{\bf C}}
\def\D{{\bf D}}
\def\d{{\bf d}}
\def\e{{\bf e}}
\def\G{{\bf G}}
\def\H{{\bf H}}
\def\I{{\bf I}}
\def\K{{\bf K}}
\def\k{{\bf k}}
\def\LL{{\bf L}}
\def\N{{\bf N}}
\def\PP{{\bf P}}
\def\Q{{\bf Q}}
\def\R{{\bf R}}
\def\S{{\bf S}}
\def\T{{\bf T}}
\def\U{{\bf U}}
\def\u{{\bf u}}
\def\V{{\bf V}}
\def\v{{\bf v}}
\def\W{{\bf W}}
\def\w{{\bf w}}
\def\X{{\bf X}}
\def\x{{\bf x}}
\def\Y{{\bf Y}}
\def\y{{\bf y}}
\def\Z{{\bf Z}}
\def\0{{\bf 0}}
\def\1{{\bf 1}}
\def\NM{{\mathcal N}}
\def\OM{{\mathcal O}}
\def\PM{{\mathcal P}}
\def\SM{{\mathcal S}}
\def\RB{{\mathbb R}}
\def\RBmn{{\RB^{m\times n}}}
\def\EB{{\mathbb E}}
\def\PB{{\mathbb P}}
\def\Ph{\mbox{\boldmath$\Phi$\unboldmath}}
\def\Si{\mbox{\boldmath$\Sigma$\unboldmath}}
\def\Lam{\mbox{\boldmath$\Lambda$\unboldmath}}
\def\Ph{\mbox{\boldmath$\Phi$\unboldmath}}
\def\argmin{\mathop{\rm argmin}}
\def\nnz{\mathrm{nnz}}
\def\range{\mathrm{range}}
\def\tr{\mathrm{tr}}
\def\rk{\mathrm{rank}}
\def\diag{\mathsf{diag}}
\def\poly{\mathrm{poly}}
\def\nystrom{{Nystr\"{o}m} }
\def\Knys{{\tilde{\K}_{c}^{\textrm{nys}}}}
\def\Kmod{{\tilde{\K}_{c}^{\textrm{proto}}}}
\def\Kgen{{\tilde{\K}_{c,s}^{\textrm{fast}}}}
\def\Unys{{\U^{\textrm{nys}}}}
\def\Umod{{\U^{\star}}}
\def\Ugen{{\U^{\textrm{fast}}}}
\newtheorem{question}{Question}
\newtheorem{assumption}{Assumption}
\begin{document}
\title{Towards More Efficient SPSD Matrix Approximation\\ and CUR Matrix Decomposition}

\author{\name  Shusen Wang  \email shusen@berkeley.edu \\
        \addr Department of Statistics \\
                University of California at Berkeley \\
                Berkeley, CA 94720, USA \\
        \AND
        \name Zhihua Zhang \email zhzhang@math.pku.edu.cn \\
        \addr School of Mathematical Sciences \\
		        Peking University \\
		        Beijing 100871, China
        \AND
        \name Tong Zhang \email tzhang@stat.rutgers.edu \\
        \addr Department of Statistics\\
		        Rutgers University\\
		        Piscataway, New Jersey 08854, USA
        }

\editor{Gert Lanckriet}

\maketitle


\begin{abstract}%
	Symmetric positive semi-definite (SPSD) matrix approximation methods have been extensively used to speed up large-scale eigenvalue computation and kernel learning methods.
	The standard sketch based method, which we call the prototype model, produces relatively accurate approximations, but is inefficient on large square matrices.
	The Nystr\"om method is highly efficient, but can only achieve low accuracy.
	In this paper we propose a novel model that we call the {\it fast SPSD matrix approximation model}.
	The fast model is nearly as efficient as the Nystr\"om method and as accurate as the prototype model.
	We show that the fast model can potentially solve eigenvalue problems and kernel learning problems in linear time with respect to the matrix size $n$ to achieve $1+\epsilon$ relative-error,
	whereas both the prototype model and the Nystr\"om method cost at least quadratic time to attain comparable error bound.
	Empirical comparisons among the prototype model, the Nystr\"om method, and our fast model demonstrate the superiority of the fast model.
	We also contribute new understandings of the Nystr\"om method.
	The Nystr\"om method is a special instance of our fast model
	and is approximation to the prototype model.
	Our technique can be straightforwardly applied to make the CUR matrix decomposition more efficiently computed without much affecting the accuracy.
\end{abstract}

\begin{keywords}
Kernel approximation, matrix factorization, the Nystr\"om method, CUR matrix decomposition
\end{keywords}


\section{Introduction} \label{sec:introduction}

With limited computational and storage resource,
machine-precision inversion and decompositions of large and dense matrix are prohibitive.
In the past decade matrix approximation techniques have
been extensively studied by
the theoretical computer science community \citep{woodruff2014sketching}, the
machine learning community \citep{mahoney2011ramdomized},
and the numerical linear algebra community \citep{halko2011ramdom}.

In  machine learning, many graph analysis techniques and kernel methods require expensive matrix computations on symmetric matrices.
The truncated eigenvalue decomposition (that is to find a few eigenvectors corresponding to the greatest eigenvalues)
is widely used in graph analysis such as spectral clustering, link prediction in social networks \citep{shin2012multi},
graph matching \citep{patro2012global}, etc.
Kernel methods \citep{scholkopf2002learning} such as kernel PCA and manifold learning
require the truncated eigenvalue decomposition.
Some other kernel methods such as Gaussian process regression/classification require solving  $n\times n$ matrix inversion, where $n$ is the number of training samples. 
The rank $k$ ($k\ll n$) truncated eigenvalue decomposition ($k$-eigenvalue decomposition for short) of an $n\times n$ matrix costs time $\tilde\OM(n^2 k)$\footnote{The $\tilde\OM$
	notation hides the logarithm factors.};
the matrix inversion costs time $\OM (n^3)$.
Thus, the standard matrix computation approaches are infeasible when $n$ is large.

For kernel methods, we are typically given $n$ data samples of dimension $d$,
while the $n\times n$ kernel matrix $\K$ is unknown beforehand and should be computed.
This adds to the additional $\OM(n^2 d)$ time cost.
When $n$ and $d$ are both large, computing the kernel matrix is
prohibitively expensive.
Thus, a good kernel approximation method should avoid the computation of the entire kernel matrix.

Typical SPSD matrix approximation methods speed up matrix computation by
efficiently forming a low-rank decomposition $\K \approx \C \U \C^T$
where $\C\in \RB^{n\times c}$ is a sketch of $\K$ (e.g., randomly sampled $c$ columns of $\K$)
and $\U\in \RB^{c\times c}$ can be computed in different ways.
With such a low-rank approximation at hand,
it takes only $\OM (n c^2)$ additional time to approximately compute the rank $k$ ($k \leq c$) eigenvalue decomposition
or the matrix inversion.
Therefore, if $\C$ and $\U$ are obtained in linear time (w.r.t.\ $n$) and $c$ is independent of $n$,
then the aforementioned eigenvalue decomposition and matrix inversion can be approximately solved in linear time.

The \nystrom method 
is perhaps the most widely used kernel approximation method.
Let $\PP$ be an $n\times c$ sketching matrix
such as uniform sampling \citep{williams2001using,gittens2011spectral},
adaptive sampling \citep{kumar2012sampling},
leverage score sampling \citep{gittens2013revisiting}, etc.
The \nystrom method computes $\C$ by $\C = \K \PP \in \RB^{n\times c}$
and $\U$ by $\U = (\PP^T \C)^\dag \in \RB^{c\times c}$.
This way of computing $\U$ is very efficient, but it
incurs relatively large approximation error even if $\C$ is a good sketch of $\K$.
As a result, the \nystrom method is reported to have low
approximation accuracy in real-world applications \citep{dai2014scalable,hsieh2014fast,si2014multi}.
In fact, the \nystrom is impossible to attain $1+\epsilon$ bound relative to $\|\K - \K_k\|_F^2$
unless $c \geq \Omega \big(\sqrt{n k /\epsilon}\big)$
\citep{wang2013improving}.
Here $\K_k$ denotes the best rank-$k$ approximation of $\K$.
The requirement that $c$ grows at least linearly with $\sqrt{n}$ is a very pessimistic result.
It implies that in order to attain $1+\epsilon$ relative-error bound,
the time cost of the \nystrom method is of order $ n c^2 =
\Omega (n^2 k/\epsilon)$ for solving the $k$-eigenvalue
decomposition or matrix inversion, which is quadratic in $n$.
Therefore, under the $1+\epsilon$ relative-error requirement,
the \nystrom method is not a linear time method.

The main reason for the low accuracy of the \nystrom method is due to the way
that the $\U$ matrix is calculated. In fact, much higher accuracy can be obtained
if $\U$ is calculated by solving the minimization problem $\min_\U \|\K - \C \U \C^T\|_F^2$,
which is a standard way to approximate symmetric matrices \citep{halko2011ramdom,gittens2013revisiting,wang2013improving,wang2014modified}.
This is the randomized SVD for symmetric matrices \citep{halko2011ramdom}.
\citet{wang2014modified} called this approach the prototype model
and provided an algorithm that samples $c = \OM(k/\epsilon)$ columns of $\K$ to form $\C$
such that $\min_\U \|\K - \C \U \C^T\|_F^2 \leq (1+\epsilon) \|\K - \K_k\|_F^2$.
Unlike the Nystr\"om method, the prototype model does not require
$c$ to grow with $n$.
The downside of the prototype model is the high computational cost.
It requires the full observation of $\K$
and $\OM(n^2 c)$ time to compute $\U$.
Therefore when applied to kernel approximation, the time cost cannot be less than $\OM(n^2 d + n^2 c)$.
To reduce the computational cost, this paper considers the problem of efficient calculation of $\U$ with
fixed $\C$ while achieving an accuracy comparable to the prototype model.

More specifically,
the key question we try to answer in this paper can be described as follows.
\begin{question} \label{question:linear}
	For any fixed $n\times n$ symmetric matrix $\K$,
	target rank $k$, and parameter $\gamma$,
	assume that
	\begin{small}
		\begin{enumerate}
			\vspace{-2mm}
			\item[A1]
			We are given a sketch matrix $\C \in \RB^{n\times c}$ of $\K$, which
			is obtained in time $\textrm{Time}(\C)$;
			\vspace{-1mm}
			\item[A2] The matrix $\C$ is a good sketch of $\K$ in that
			$\min_{\U} \|\K - \C \U \C^T\|_F^2 \leq (1+\gamma) \|\K - \K_k\|_F^2$.
		\end{enumerate}
	\end{small}
	
	Then we would like to know whether for an arbitrary $\epsilon$,
	it is possible to compute $\C$ and $\tilde{\U}$ such that the following two requirements are satisfied:
	\begin{small}
		\begin{enumerate}
			\vspace{-2mm}
			\item[R1] The matrix
			$\tilde{\U}$ has the following error bound:
			\[
			\|\K - \C \tilde{\U} \C^T  \|_F^2 \leq (1+\epsilon) (1+\gamma) \|\K - \K_k\|_F^2.
			\vspace{-1mm}
			\]
			\item[R2]
			The procedure of computing $\C$ and $\tilde\U$ and approximately solving the aforementioned
			$k$-eigenvalue decomposition or the matrix inversion run in time
			${\OM} \big( n \cdot \poly (k , \gamma^{-1}, \epsilon^{-1}) \big) +  \textrm{Time}(\C) $.
		\end{enumerate}
	\end{small}
\end{question}

Unfortunately, the following theorem shows that neither the \nystrom method nor the prototype model enjoys such desirable properties.
We prove the theorem in Appendix~\ref{sec:thm_linear}.

\begin{theorem}\label{thm:linear}
	Neither the \nystrom method nor the prototype model satisfies the two requirements in Question~\ref{question:linear}.
	To make requirement R1 hold, both  the \nystrom method and the prototype model cost time no less than
	${\OM} \big( n^2 \cdot \poly (k , \gamma^{-1}, \epsilon^{-1}) \big) + \textrm{Time} (\C)$
	which is at least quadratic in $n$.
\end{theorem}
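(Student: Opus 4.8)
The plan is to handle the two methods separately. On every instance the prototype model satisfies R1 automatically but violates R2, while the \nystrom method satisfies R2 only when the sketch $\C$ is so wide that R1 fails; in both cases the running time is forced to be quadratic in $n$.

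\emph{Prototype model.} Requirement R1 is immediate. The prototype $\U$ is the exact minimizer $\U^\star=\argmin_{\U}\|\K-\C\U\C^T\|_F^2=\C^\dag\K(\C^\dag)^T$, so by A2 we get $\|\K-\C\U^\star\C^T\|_F^2=\min_{\U}\|\K-\C\U\C^T\|_F^2\le(1+\gamma)\|\K-\K_k\|_F^2\le(1+\epsilon)(1+\gamma)\|\K-\K_k\|_F^2$. The obstacle is R2. Computing $\U^\star$ requires the full observation of $\K$, as already noted in Section~\ref{sec:introduction}: concretely, for a generic full-column-rank $\C$ one has $\partial(\U^\star)_{ab}/\partial\K_{ij}=(\C^\dag)_{ai}(\C^\dag)_{bj}\ne 0$ for all $i,j$, so altering any single entry of $\K$ changes $\U^\star$, forcing $\Omega(n^2)$ time; and the standard closed-form evaluation (the $(c\times n)\times(n\times n)$ product $\C^\dag\K$, followed by a multiplication by $(\C^\dag)^T$) costs $\OM(n^2 c+nc^2)=\OM(n^2 c)$. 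Moreover A2 forces $c\ge k$ on any $\K$ with $\sigma_k(\K)^2>\gamma\|\K-\K_k\|_F^2$, since $\rk\C<k$ would give $\min_{\U}\|\K-\C\U\C^T\|_F^2\ge\|\K-\K_{k-1}\|_F^2=\sigma_k(\K)^2+\|\K-\K_k\|_F^2>(1+\gamma)\|\K-\K_k\|_F^2$. Hence on such matrices the prototype model runs in time $\textrm{Time}(\C)+\Theta(n^2 c)$ with $c\ge k$, that is, at least $\textrm{Time}(\C)+\Omega(n^2 k)$, and $\textrm{Time}(\C)+\OM(n^2 k/\epsilon)$ with the $c=\OM(k/\epsilon)$-column sketch of \citet{wang2014modified} --- quadratic in $n$, so R2 fails.

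\emph{\nystrom method.} Here R2 is cheap, since the \nystrom $\U=(\PP^T\C)^\dag$ uses only the $c\times c$ intersection submatrix of $\K$; it is R1 that breaks. I would invoke the lower bound recalled in Section~\ref{sec:introduction} \citep{wang2013improving}: there is a family of $n\times n$ SPSD matrices on which the \nystrom approximation $\C(\PP^T\C)^\dag\C^T$, for every column-sampling $\PP$ with $c$ columns, satisfies $\|\K-\C(\PP^T\C)^\dag\C^T\|_F^2>(1+\delta)\|\K-\K_k\|_F^2$ unless $c=\Omega(\sqrt{nk/\delta})$. Taking $1+\delta=(1+\epsilon)(1+\gamma)$ (so that $\delta=\Theta(\epsilon+\gamma)$ for bounded $\epsilon,\gamma$), requirement R1 forces $c=\Omega\big(\sqrt{nk/(\epsilon+\gamma)}\big)$. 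For $n$ large this $c$ far exceeds the $\OM(k/\gamma)$-column count at which a leverage-score sketch already meets A2, so A2 still holds and Question~\ref{question:linear} genuinely applies to this instance. But then computing the rank-$k$ eigendecomposition, or the inverse, of $\C(\PP^T\C)^\dag\C^T$ costs $\OM(nc^2)$ as in Section~\ref{sec:introduction}, which with $c=\Omega(\sqrt{nk/(\epsilon+\gamma)})$ equals $\Omega\big(n^2 k/(\epsilon+\gamma)\big)$ --- again quadratic in $n$, so R2 fails.

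\emph{Main obstacle.} The delicate step is the \nystrom half: one must exhibit a single matrix family that simultaneously admits a good sketch (so that assumptions A1--A2 of Question~\ref{question:linear} hold) and forces the \nystrom $\U$ to be inaccurate unless $c$ grows like $\sqrt n$ --- exactly the construction behind the bound of \citet{wang2013improving} --- after which only routine bookkeeping of the $(1+\gamma)$ slack and of the $\OM(nc^2)$ downstream cost remains. For the prototype half the only subtle point is promoting the standard $\OM(n^2 c)$ upper bound for computing $\U^\star$ into an $\Omega(n^2)$ lower bound that is robust to the method's freedom in choosing $\C$; the genericity argument above, together with the constraint A2 imposes on $\C$, supplies this.
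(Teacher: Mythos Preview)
Your proposal is correct and follows essentially the same approach as the paper: the prototype model satisfies R1 trivially but computing $\U^\star=\C^\dag\K(\C^\dag)^T$ costs $\OM(n^2c)$, violating R2; for the \nystrom method you invoke the lower bound of \citet{wang2013improving} to force $c=\Omega(\sqrt{nk/(\epsilon+\gamma)})$, making the downstream $\OM(nc^2)$ cost quadratic. The paper's proof is slightly more concrete in that it names the explicit adversarial family---the block-diagonal matrix $\K=\diag(\B,\ldots,\B)$ with $\B=(1-a)\I_p+a\1_p\1_p^T$, $a\to1$---and cites specific results showing that this family simultaneously admits a good sketch (A2 holds with $c=3k\gamma^{-1}(1+o(1))$) and defeats the \nystrom $\U$; your ``Main obstacle'' paragraph correctly anticipates that exactly such a construction is what is needed, and your additional genericity argument for the prototype lower bound is a bit more careful than the paper's one-line appeal to the $\OM(n^2c)$ cost.
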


In this paper we give an affirmative answer to the above question. In particular, it has the following consequences.
First, the overall approximation has high accuracy in the sense that $\|\K - \C \tilde\U \C^T\|_F^2$
is comparable to $\min_\U \|\K - \C \U \C^T\|_F^2$,
and is thereby comparable to the  best rank $k$ approximation.
Second, with $\C$ at hand, the matrix $\tilde\U$ is
obtained efficiently (linear in $n$).
Third, with $\C$ and $\tilde\U$ at hand, it takes extra time which is also linear in $n$
to compute the aforementioned eigenvalue decomposition or linear
system. Therefore, with a good $\C$, we can use linear time to
obtain desired $\U$ matrix such that the accuracy is comparable to the best possible low-rank approximation.

The CUR matrix decomposition \citep{mahoney2009matrix} is
closely related to the prototype model and troubled by the same computational problem.
The CUR matrix decomposition is an extension of the prototype model from symmetric matrices to general matrices.
Given any $m\times n$ fixed matrix $\A$, the CUR matrix decomposition
selects $c$ columns of $\A$ to form $\C \in \RB^{m\times c}$
and $r$ rows of $\A$ to form $\R \in \RB^{r\times n}$,
and computes matrix $\U\in \RB^{c\times r}$ such that $\|\A - \C \U \R\|_F^2$ is small.
Traditionally, it costs time
\[
\OM (m n \cdot \min\{ c, r\})
\]
to compute the optimal $\U^\star = \C^\dag \A \R^\dag$ \citep{stewart1999four,wang2013improving,boutsidis2014optimal}.
How to efficiently compute a high-quality $\U$ matrix for CUR is unsolved.

\subsection{Main Results}

This work is motivated by an intrinsic connection between
the \nystrom method and the prototype model.
Based on a generalization of this observation, we propose the {\it fast SPSD matrix approximation model} for approximating any symmetric matrix.
We show that the fast model satisfies the requirements in Question~\ref{question:linear}.
Given $n$ data points of dimension $d$,
the fast model computes $\C$ and $\Ugen$ and approximately solves the truncated eigenvalue decomposition or matrix inversion in time
\vspace{-1mm}
\[
\OM\big( n c^3 /\epsilon + n c^2 d /\epsilon \big) + \textrm{Time} (\C).
\vspace{-1mm}
\]%
Here $ \textrm{Time} (\C)$ is defined in Question~\ref{question:linear}.

The fast SPSD matrix approximation model achieves the desired properties in
Question~\ref{question:linear} by solving $\min_\U \|\K - \C \U \C^T\|_F $ approximately rather than exactly while ensuring
\[
\|\K - \C \Ugen \C^T \|_F^2
\leq (1+\epsilon) \min_\U \|\K - \C \U \C^T \|_F^2.
\]%
The time complexity for computing $\Ugen$ is linear in $n$, which is far
less than the time complexity $\OM(n^2 c)$ of the prototype model.
Our method also avoids computing the entire kernel matrix $\K$;
instead, it computes a block of $\K$ of size $\frac{\sqrt{n}c}{\epsilon} \times \frac{\sqrt{n}c}{\epsilon}$,
which is substantially smaller than $n\times n$.
The lower bound in Theorem~\ref{thm:lower_bound} indicates that the $\sqrt{n}$ factor here is optimal,
but the dependence on $c$ and $\epsilon$ are suboptimal and can be potentially improved.

This paper provides a new perspective on the Nystr\"om method.
We show that, as well as our fast model,
the Nystr\"om method is approximate solution to the problem $\min_{\U} \|\C \U \C^T - \K\|_F^2$.
Unfortunately, the approximation is so rough that the quality of the \nystrom method is low.

Our method can also be applied to improve the CUR matrix decomposition of the general matrices which are not necessarily square.
Given any matrices $\A \in \RB^{m\times n}$, $\C\in \RB^{m\times c}$, and $\R \in \RB^{r\times n}$,
it costs time $\OM (m n \cdot \min\{c, r\})$ to compute the matrix $\U = \C^\dag \A \R^\dag$.
Applying our technique, the time cost drops to only
\[
\OM \big( c r \epsilon^{-1}\cdot \min \{ m ,n \} \cdot \min \{ c , r\} \big) ,
\]
while the approximation quality is nearly the same.

\subsection{Paper Organization}

The remainder of this paper is organized as follows.
Section~\ref{sec:notation} defines the notation used in this paper.
Section~\ref{sec:related_work} introduces the related work of matrix sketching and SPSD matrix approximation.
Section~\ref{sec:gennystrom} describes our fast model and
analyze the time complexity and error bound.
Section~\ref{sec:cur} applies the technique of the fast model to compute the CUR matrix decomposition more efficiently.
Section~\ref{sec:experiments} conducts empirical comparisons to show the effect of the $\U$ matrix.
The proofs of the theorems are in the appendix.


\section{Notation} \label{sec:notation}

The notation used in this paper are defined as follows.
Let $[n]=\{1, \ldots, n\}$, $\I_n$ be the $n{\times}n$ identity matrix,
and $\1_n$ be the $n\times 1$  vector of all ones.
We let $x \in y \pm z$  denote $y-z \leq x \leq y+ z$.
For an $m {\times} n$ matrix $\A=[A_{i j}]$, we let $\a_{i:}$ be its $i$-th row,
$\a_{:j}$ be its $j$-th column, $\nnz(\A)$ be the number of nonzero entries of $\A$,
$\|\A\|_F = (\sum_{i,j} A_{i j}^2)^{1/2}$ be its Frobenius norm, and
$\|\A\|_2 = \max_{\x\neq \0} \|\A \x\|_2 / \|\x\|_2$ be its spectral norm.

Let $\rho = \rk (\A ) $.
The condensed singular value decomposition (SVD) of $\A$ is defined as
\[
\A
\; = \; \U \Si \V^T
\; = \; \sum_{i=1}^\rho \sigma_i \u_i \v_i^T
\]
where $\sigma_1 , \cdots , \sigma_r$ are the positive singular values in the descending order.
We also use $\sigma_i (\A)$ to denote the $i$-th largest singular value of $\A$.
Unless otherwise specified, in this paper ``SVD'' means the condensed SVD.
Let $\A_k = \sum_{i=1}^k \sigma_i \u_i \v_i^T $ be the top $k$ principal components of $\A$ for any positive integer $k$ less than $\rho$.
In fact, $\A_k$ is the closest to $\A$ among all the rank $k$ matrices.
Let $\A^\dag = \V \Si^{-1} \U^T$ be the {\it Moore-Penrose inverse} of $\A$.

Assume that $\rho = \rk (\A) < n$.
The column leverage scores of $\A$ are $l_i = \|\v_{i:}\|_2^2 $ for $ i = 1$ to $n$.
Obviously, $l_1 + \cdots + l_n = \rho$.
The column coherence is defined by $\nu (\A) = \frac{n}{\rho } \max_{j\in [n]} \|\v_{j:}\|_2^2 $.
If $\rho = \rk (\A) < m$, the row leverage scores and coherence are similarly defined.
The row leverage scores are $\|\u_{1:}\|_2^2 , \cdots , \|\u_{m:}\|_2^2$
and the row coherence is $\mu (\A) = \frac{m}{\rho} \max_{i\in [m]} \|\u_{i:}\|_2^2 $.

\begin{table}[t]
	\caption{A summary of the notation.}
	\label{tab:notation}
	\begin{center}
		\begin{small}
			\begin{tabular}{l |l }
				\hline
				Notation & Description \\
				\hline
				$n$ & number of data points\\
				$d$ & dimension of the data point \\
				$\K$ &  $n\times n$ kernel matrix \\
				$\PP$, $\S$ & sketching matrices \\
				$\C$ & $n\times c$ sketch computed by $\C = \K \PP$\\
				$\Umod$ & $\C^\dag \K (\C^\dag)^T \in \RB^{c\times c}$---the $\U$ matrix of the prototype model\\
				$\Unys$ & $(\PP^T \K)^\dag \in \RB^{c\times c}$---the $\U$ matrix of the Nystr\"om method\\
				$\Ugen$ & $(\S^T \C)^\dag (\S^T \K \S) (\C^T \S)^\dag \in \RB^{c\times c}$---the $\U$ matrix of the fast model\\
				\hline
			\end{tabular}
		\end{small}
	\end{center}
	\vspace{-8mm}
\end{table}

We also list some frequently used notation  in Table~\ref{tab:notation}.
Given the decomposition $\tilde\K = \C \U \C^T \approx \K$ which has rank at most $c$,
it takes $\OM (n c^2)$ time to compute the eigenvalue decomposition of $\tilde{\K}$
and  $\OM (n c^2)$ time to solve the linear system $(\tilde\K + \alpha \I_n) \w = \y$ to obtain $\w$
(see Appendix~\ref{sec:approx_eigen_linear} for more discussions).
The truncated eigenvalue decomposition and  linear system are
the bottleneck of many kernel methods,
and thus an accurate and efficient low-rank approximation can help to accelerate the computation of
kernel learning.

\section{Related Work} \label{sec:related_work}

In Section~{\ref{sec:sketching}} we introduce matrix sketching.
In Section~{\ref{sec:related_nystrom}} we describe two SPSD matrix approximation methods.

\subsection{Matrix Sketching} \label{sec:sketching}

Popular matrix sketching methods include uniform sampling,
leverage score sampling \citep{drineas2006sampling,drineas2008cur,woodruff2014sketching},
Gaussian projection \citep{johnson1984extensions},
subsampled randomized Hadamard transform (SRHT) \citep{drineas2011faster,lu2013faster,tropp2011improved},
count sketch \citep{charikar2004finding,clarkson2013low,meng2013low,nelson2013osnap,pham2013fast,thorup2012tabulation,weinberger2009feature}, etc.

\subsubsection{Column Sampling} \label{sec:sketching_sampling}

Let $p_1 , \cdots , p_n \in (0, 1)$ with $\sum_{i=1}^n p_i = 1$ be the sampling probabilities.
Let each integer in $[n]$ be independently sampled with probabilities $s p_1, \cdots , s p_n$,
where $s \in [n]$ is integer.
Assume that $\tilde{s}$ integers are sampled from $[n]$.
Let $i_1 , \cdots , i_{\tilde{s}}$ denote the selected integers, and let $\EB [\tilde{s}]=s$.
We scale each selected column by $\frac{1}{\sqrt{s p_{i_1}}} , \cdots , \frac{1}{\sqrt{s p_{i_{\tilde{s}}}}}$, respectively.
Uniform sampling means that the sampling probabilities are $p_1  = \cdots = p_n = \frac{1}{n}$.
Leverage score sampling means that the sampling probabilities are proportional to the leverage scores $\l_1, \cdots , \l_n$ of a certain matrix.

We can equivalently characterize column selection by the matrix $\S \in \RB^{n\times \tilde{s}}$.
Each column of $\S$ has exactly one nonzero entry;
let $(i_j, j)$ be the position of the nonzero entry in the $j$-th column for $j \in [\tilde{s}]$.
For $j = 1$ to $\tilde{s}$, we set
\begin{equation} \label{eq:def_S}
S_{i_j, j} = \frac{1}{\sqrt{s p_{i_j}}} .
\end{equation}
The expectation $\EB [\tilde{s}]$ equals to $s$,
and $\tilde{s} = \Theta (s)$ with high probability.
For the sake of simplicity and clarity, in the rest of this paper we will not distinguish ${\tilde{s}}$ and $s$.

\subsubsection{Random Projection}

Let $\G \in \RB^{n\times s}$ be a standard Gaussian matrix, namely each entry is sampled independently from $\NM (0, 1)$.
The matrix $\S = \frac{1}{\sqrt{s}} \G$ is a Gaussian projection matrix.
Gaussian projection is also well known as the Johnson-Lindenstrauss (JL) transform \citep{johnson1984extensions};
its theoretical property is  well established.
It takes $\OM (mns)$ time to apply $\S\in \RB^{n\times s}$ to any $m\times n$ dense matrix,
which makes Gaussian projection inefficient.

The subsampled randomized Hadamard transform (SRHT) is usually a more efficient alternative of Gaussian projection.
Let $\H_n \in \RB^{n\times n}$ be the Walsh-Hadamard matrix with $+1$ and $-1$ entries,
$\D \in \RB^{n\times n}$ be a diagonal matrix with diagonal entries sampled uniformly from $\{+1, -1\}$,
and $\PP \in \RB^{n\times s}$ be the uniform sampling matrix defined above.
The matrix $\S = \frac{1}{\sqrt{n}} \D \H_n \PP \in \RB^{n\times s}$ is an SRHT matrix,
and it can be applied to any $m\times n$ matrix in $\OM(mn\log s)$ time.

Count sketch stems from the data stream literature \citep{charikar2004finding,thorup2012tabulation} and
has been applied to speedup matrix computation.
The count sketch matrix $\S \in \RB^{n\times s}$ can be applied to any matrix $\A$ in $\OM (\nnz (\A))$ time
where $\nnz$ denotes the number of non-zero entries.
The readers can refer to \citep{woodruff2014sketching} for detailed descriptions of count sketch.

\subsubsection{Theories}

The following lemma shows important properties of the matrix sketching methods.
In the lemma, leverage score sampling means that the sampling probabilities are proportional to the row leverage scores of the column orthogonal matrix $\U \in \RB^{n\times k}$.
(Here $\U$ is different from the notation elsewhere in the paper.)
We prove the lemma in Appendix~\ref{sec:proof:lem:product}.

\begin{lemma} \label{lem:product}
	Let $\U \in \RB^{n\times k}$ be any fixed matrix with orthonormal columns and
	$\B \in \RB^{n\times d}$ be any fixed matrix.
	Let $\S \in \RB^{n\times s}$ be any sketching matrix considered in this section;
    the order of $s$ (with the $\OM$-notation omitted) is listed in Table~\ref{tab:product_average}.
	Then
	\begin{eqnarray*}
		\PB \Big\{ \big\|\U^T \S \S^T \U - \I_k \big\|_2 \geq \eta \Big\}
		\; \leq \; \delta_1
		& \qquad \textrm{(Property 1)}, \\
		\PB \Big\{\big\| \U^T \B - \U^T \S \S^T \B \big\|_F^2 \geq {\epsilon} \|\B\|_F^2 \Big\}
		\; \leq \; \delta_2
		& \qquad \textrm{(Property 2)} , \\
		\PB \Big\{\big\| \U^T \B - \U^T \S \S^T \B \big\|_2^2 \geq {\epsilon'} \|\B\|_2^2 + \frac{\epsilon'}{k} \|\B\|_F^2 \Big\}
		\; \leq \; \delta_3
		& \qquad \textrm{(Property 3)}.
	\end{eqnarray*}
\end{lemma}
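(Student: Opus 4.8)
The plan is to treat the three properties as instances of standard matrix-concentration bounds for approximate matrix multiplication, specialized to the particular sketching families listed in Table~\ref{tab:product_average}, and to cite the corresponding results from the literature (\citealt{woodruff2014sketching} for count sketch; \citealt{drineas2008cur,drineas2011faster,tropp2011improved} for leverage-score sampling and SRHT; \citealt{johnson1984extensions} and subsequent refinements for Gaussian projection) with the sketch sizes $s$ tuned so that the stated failure probabilities hold. The key unifying notion is the \emph{subspace embedding} property: a sketching matrix $\S$ with $s = \OM(k \epsilon^{-2} \operatorname{polylog})$ rows (the exact polylog and $\delta$-dependence varying by family) satisfies $\|\U^T \S \S^T \U - \I_k\|_2 \leq \eta$ with probability $\geq 1 - \delta_1$ whenever $\U$ has $k$ orthonormal columns; this is exactly Property~1, and it is where the $\OM$-order of $s$ in the table is calibrated.

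For Property~2, I would invoke the approximate-matrix-multiplication guarantee in Frobenius norm: for fixed matrices $\U$ (with orthonormal columns, so $\|\U\|_F^2 = k$ and $\|\U\|_2 = 1$) and $\B$, one has $\EB\|\U^T \S \S^T \B - \U^T\B\|_F^2 \leq \frac{c_0}{s}\|\U\|_F^2\|\B\|_F^2 = \frac{c_0 k}{s}\|\B\|_F^2$ for the column-sampling and random-projection families, with an analogous statement (up to logarithmic factors) for SRHT and count sketch. Choosing $s = \OM(k \epsilon^{-1} \delta_2^{-1})$ — or the sharper $s = \OM(k \epsilon^{-1}\operatorname{polylog}(k/\delta_2))$ with a Markov/Chernoff-style boosting argument — and applying Markov's inequality gives the tail bound. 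Property~3 is the spectral-norm refinement: here I would use a matrix-Bernstein argument (as in \citealt{tropp2011improved}) to control $\|\U^T\S\S^T\B - \U^T\B\|_2$, which yields a bound of the form $\epsilon' \|\B\|_2^2 + \frac{\epsilon'}{k}\|\B\|_F^2$; the intrinsic-dimension / stable-rank term $\|\B\|_F^2/k$ appears precisely because the variance proxy in the Bernstein bound splits into a spectral part and a Frobenius part, and the $k$ in the denominator comes from $\|\U\|_F^2 = k$.

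I expect the main obstacle to be bookkeeping rather than conceptual: the four sketching families have genuinely different sketch-size requirements and different proof techniques (exact moment computations for i.i.d.\ sampling and Gaussians versus Hadamard/Rademacher concentration for SRHT versus the hashing-based analysis for count sketch), so the work is in verifying that a single value of $s$ in each row of Table~\ref{tab:product_average} simultaneously delivers all three properties at the claimed failure probabilities, and in absorbing the logarithmic overheads into the $\OM$-notation consistently. The cleanest route is to establish Property~1 first (subspace embedding), then derive Properties~2 and~3 as multiplication bounds that hold on the same event or with an extra union bound, so that the stated $\delta_1,\delta_2,\delta_3$ can all be taken to be small constants (or polynomially small in $k$) with $s$ chosen accordingly.
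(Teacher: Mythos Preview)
Your treatment of Properties~1 and~2 is essentially the paper's: cite subspace-embedding and Frobenius-norm approximate-matrix-multiplication bounds for each family, with Markov's inequality converting the expectation bound into a tail bound. The paper does exactly this, collecting the sampling results into a single auxiliary lemma (their Lemma~\ref{lem:sampling_property}), handling SRHT by first flattening leverage scores via the randomized Hadamard transform \citep{tropp2011improved} and then applying the uniform-sampling bound, and handling the Gaussian case via singular-value concentration for Gaussian matrices.

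Where you diverge is Property~3. You propose a direct matrix-Bernstein argument and attribute the $\epsilon'/k$ term to $\|\U\|_F^2 = k$ entering the variance proxy. The paper instead invokes the blocking reduction of \citet{cohen2015optimal}: if a \emph{data-independent} sketch satisfies Property~1 for \emph{arbitrary} $2r$-dimensional orthonormal $\U$ with sufficiently small failure probability, then Property~3 follows automatically with the mixed $\|\B\|_2^2 + \tfrac{1}{r}\|\B\|_F^2$ form; setting $r=k$ gives the statement. This has two consequences you should note. First, the $1/k$ in the bound arises from the block size in the Cohen et al.\ reduction, not from a Bernstein variance computation; a raw matrix-Bernstein argument would not naturally produce this exact mixed form (and for Gaussian projection, unbounded entries make Bernstein awkward anyway). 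Second, and more importantly, the reduction requires Property~1 to hold for arbitrary orthonormal $\U$, which rules out leverage-score sampling and uniform sampling (whose guarantees depend on the specific $\U$) and count sketch (whose Property~1 holds only with constant, not arbitrarily small, failure probability). This is why Table~\ref{tab:product_average} has ``---'' in the Property~3 column for those three methods, a distinction your proposal does not make.
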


\begin{table}[!h]\setlength{\tabcolsep}{0.3pt}
	\caption{The leverage score sampling is w.r.t.\ the row leverage scores of $\U$.
		For uniform sampling, the notation $\mu (\U ) \in [1, n]$ is the row coherence of $\U$.
	}
	\label{tab:product_average}
	\begin{center}
		\begin{tabular}{c c c c}
			\hline
			{\bf Sketching}             &~~~Property 1~~~&~~~Property 2~~~&~~~Property 3~~~\\
			\hline
			Leverage Sampling  &~~~$ \frac{k}{\eta^2} \log \frac{k}{\delta_1} $~~~
			&~~~$ \frac{k}{\epsilon \delta_2}  $~~~
			& --- \\
			Uniform Sampling &~~~$ \frac{\mu (\U) k }{ \eta^2} \log \frac{k}{\delta_1} $~~~
			&~~~$ \frac{\mu (\U) k }{\epsilon \delta_2} $~~~
			& --- \\
			Gaussian Projection\ &~~~$\frac{  k +  \log (1/\delta_1) }{\eta^2}$~~~
			& ~~~$\frac{k}{\epsilon \delta_2}$~~~
			& ~~~$\frac{1}{\epsilon' }  \big(k + \log \frac{d}{k\delta_3}  \big)$~~~    \\
			SRHT            &~~~$\frac{k + \log n}{\eta^2} \log \frac{k}{\delta_1}$~~~
			& ~~~$\frac{ k + \log n}{\epsilon  \delta_2  }$~~~
			& ~~~$  \frac{1}{ \epsilon'} \big( k + \log \frac{n {d}}{k \delta_1} \big) \log \frac{d }{\delta_3 }$~~~\\
			Count Sketch &~~~$\frac{k^2 }{\delta_1 \eta^2}$~~~
			& ~~~$\frac{ k}{\epsilon \delta_2}$~~~
			& ~~~---~~~    \\
			\hline
		\end{tabular}
	\end{center}
\end{table}

Property~1 is known as the subspace embedding property~\citep{woodruff2014sketching}.
It shows that all the singular values of $\S^T \U$ are close to one.
Properties~2 and 3 show that sketching preserves
the multiplication of a row orthogonal matrix and an arbitrary matrix.

For the SPSD/CUR matrix approximation problems,
the three properties are all we need to
capture the randomness in the sketching methods.
Leverage score sampling, uniform sampling, and count sketch
do not enjoy Property~3,
but it is fine---
Frobenius norm (Property~2) will be used as a loose upper bound on the spectral norm (Property~3).
Gaussian projection and SRHT satisfy all the three properties;
when applied to the SPSD/CUR problems,
their error bounds are stronger than the leverage score sampling, uniform sampling, and count sketch.

\subsection{SPSD Matrix Approximation Models} \label{sec:related_nystrom}

We first describe the prototype model and the Nystr\"om method, which are most relevant to this work.
We then introduce several other SPSD matrix approximation methods.

\subsubsection{Most Relevant Work} \label{sec:related_nystrom_most}

Given an $n\times n$ matrix $\K$ and an $n\times c$ sketching matrix $\PP$,
we let $\C = \K \PP$ and $\W = \PP^T \C = \PP^T \K \PP$.
{\it The prototype model} \citep{wang2013improving} is defined by
\begin{eqnarray}  \label{eq:def_modified}
\Kmod \;\triangleq \; \C \Umod \C^T \;=\; \C \C^\dag \K (\C^\dag)^T  \C^T ,
\end{eqnarray}%
and {\it the \nystrom method} is defined by
\begin{eqnarray} \label{eq:def_standard}
\Knys & \triangleq & \C \U^{\textrm{nys}} \C^T \;=\; \C \W^\dag \C^T \nonumber \\
& = & \C \big(\PP^T \C \big)^\dag \big(\PP^T \K \PP \big) \big(\C^T \PP \big)^\dag \C^T .
\end{eqnarray}%
The only difference between the two models is their $\U$ matrices,
and the difference leads to big difference in their approximation accuracies.
\citet{wang2013improving} provided a lower error bound of the \nystrom method,
which shows that no algorithm can select less than $\Omega (\sqrt{n k /\epsilon })$ columns of $\K$ to form $\C$ such that
\[
\| \K - \C \Unys \C^T \|_F^2 \leq (1+\epsilon) \|\K - \K_k\|_F^2 .
\]%
In contrast, the prototype model can attain the $1+\epsilon$ relative-error bound with $c = \OM(k / \epsilon)$
\citep{wang2014modified},
which is optimal up to a constant factor.

While we have mainly discussed the time complexity of kernel approximation in the previous sections,
the memory cost is often a more important issue in large scale
problems due to the limitation of computer memory.
The \nystrom method and the prototype model require $\OM (n c)$ memory
to hold $\C$ and $\U$ to approximately solve the aforementioned eigenvalue decomposition
or the linear system.\footnote{The memory costs of the prototype model
	is $\OM(n c + nd)$ rather than $\OM(n^2)$.
	This is because we can hold the $n\times d$ data matrix and the $c\times n$ matrix $\C^\dag$ in memory,
	compute a small block of $\K$ each time, and then compute $\C^\dag \K$ block by block.}
Therefore, we hope to make $c$ as small as possible while achieving a
low  approximation error.
There are two elements: (1) a good sketch $\C = \K \PP$, and (2)  a high-quality $\U$ matrix.
We focus on the latter in this paper.

\subsubsection{Less Relevant Work} \label{sec:related_nystrom_less}

We note that there are many other kernel approximation approaches in the literature.
However, these approaches do not directly address the issue we consider here,
so they are complementary to our work.
These studies are either less effective or inherently rely on the \nystrom method.

The Nystr\"om-like models such as MEKA~\citep{si2014memory} and the ensemble \nystrom method~\citep{kumar2012sampling}
are reported to significantly outperform the \nystrom method in terms
of approximation accuracy, but their key components are still the \nystrom method
and the component can be replaced by any other methods such as the
method studied in this work.
The spectral shifting \nystrom method~\citep{wang2014improving} also outperforms the \nystrom method in certain situations,
but the spectral shifting strategy can be used for any other kernel approximation models beyond the prototype model.
We do not compare with these methods in this paper because
MEKA, the ensemble \nystrom method, and the spectral shifting \nystrom method
can all be improved if we replace the underlying \nystrom method or the prototype model by the new method developed here.

The column-based low-rank approximation model \citep{kumar2009sampling}
is another SPSD matrix approximation approach different from the Nystr\"om-like methods.
Let $\PP \in \RB^{n\times c}$ be any sketching matrix and $\C = \K \PP$.
The column-based model approximates $\K$ by $\C (\C^T \C)^{-1/2} \C^T = (\C \C^T)^{1/2}$.
Equivalently, it approximates $\K^2$ by
\[
\K^T \K
\; \approx \; \C \C^T
\; = \; \K^T \PP \PP^T \K .
\]
From Lemma~\ref{lem:product} we can see that it is a typical sketch based approximation to the matrix multiplication.
Unfortunately, the approximate matrix multiplication is effective only when $\K$ has much more rows than columns,
which is not true for the kernel matrix.
The column-based model does not have good error bound
and is not empirically as good as the Nystr\"om method
\citep{kumar2009sampling}.

The random feature mapping \citep{rahimi2007random} is a family of kernel approximation methods.
Each random feature mapping method is applicable
to certain kernel rather than arbitrary SPSD matrix.
Furthermore, they are known to be noticeably less effective than the \nystrom method \citep{yang2012nystrom}.


\section{The Fast SPSD Matrix Approximation Model} \label{sec:gennystrom}

In Section~\ref{sec:motivation} we present the motivation behind the fast model.
In Section~\ref{sec:interpretation} we provide an alternative perspective on our fast model and the \nystrom method
by formulating them as approximate solutions to an optimization problem.
In Section~\ref{sec:theory} we analyze the error bound of the fast model.
Theorem~\ref{thm:faster_spsd} is the main theorem, which shows that
in terms of the Frobenius norm approximation,
the fast model is almost as good as the prototype model.
In Section~\ref{sec:algorithm} we describe the implementation of the fast model and analyze the time complexity.
In Section~\ref{sec:implementation} we give some implementation details that help to improve the approximation quality.
In Section~\ref{sec:lower_bound_sn} we show that our fast model exactly recovers $\K$ under certain conditions,
and we provide a lower error bound of the fast model.


\subsection{Motivation} \label{sec:motivation}

Let $\PP \in \RB^{n\times c}$ be sketching matrix and $\C = \K \PP \in \RB^{n\times c}$.
{The fast SPSD matrix approximation model} is defined by
\begin{eqnarray*}
	\Kgen & \triangleq &  \C \big(\S^T \C \big)^\dag \big(\S^T \K \S \big) \big( \C^T \S \big)^\dag \C^T,
\end{eqnarray*}%
where $\S$ is $n\times s$ sketching matrix.

From (\ref{eq:def_modified}) and (\ref{eq:def_standard})
we can see that the \nystrom method is a special case of the fast model
where $\S$ is defined as $\PP$
and that the prototype model is a special case where $\S$ is defined as  $\I_n$.

The fast model allows us to trade off the accuracy and the computational cost---larger $s$ leads
to higher accuracy and higher time cost, and vice versa.
Setting $s$ as small as $c$ sacrifices too much accuracy, whereas setting $s$ as large as $n$ is unnecessarily expensive.
Later on, we will show that $s = \OM(c \sqrt{n / \epsilon}) \ll n $ is a good choice.
The setting $s \ll n$  makes the fast model much cheaper to compute than the prototype model.
When applied to kernel methods,
the fast model avoids computing the entire kernel matrix.
We summarize the time complexities of the three matrix approximation methods in Table~\ref{tab:time};
the middle column lists the time cost for computing the $\U$ matrices given $\C$ and $\K$;
the right column lists the number of entry of $\K$ which much be observed.
We show a very intuitive comparison in Figure~\ref{fig:illustrate}.

\begin{table}[t]
	\caption{Summary of the time cost of the models for computing the $\U$ matrices and
		the number of entries of $\K$ required to be observed in order to compute the $\U$ matrices.
		As for the fast model, assume that $\S$ is column selection matrix.
		The notation is defined previously in Table~\ref{tab:notation}.}
	\label{tab:time}
	\begin{center}
		\begin{tabular}{l c c }
			\hline
			& Time & \#Entries \\
			\hline
			Nystr\"om   & $\OM (c^3)$     & $nc$ \\
			Prototype   & $\OM \big( \nnz (\K) c + n c^2 \big)$   & $n^2 $ \\
			Fast  & $ \OM (n c^2 + s^2 c) $ & $n c  + (s-c)^2  $ \\
			\hline
		\end{tabular}
	\end{center}
	\vspace{-4mm}
\end{table}

\begin{figure}[t]
	\begin{center}
		\centering
		\includegraphics[width=0.7\textwidth]{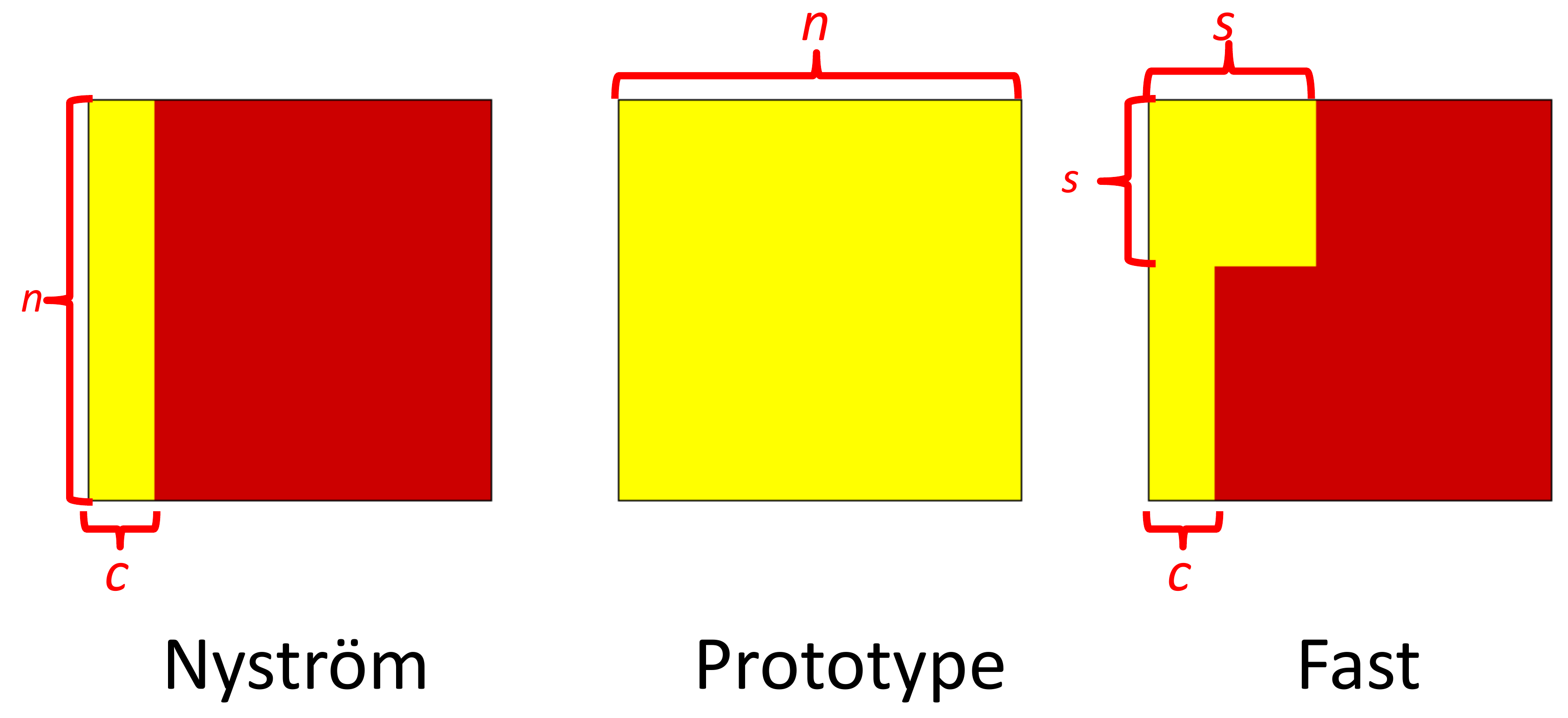}
		\vspace{-8mm}
	\end{center}
	\caption{The {\color{Goldenrod} yellow blocks} denote the submatrices of $\K$
		that must be seen by the kernel approximation models.
		The \nystrom method computes an $n\times c$ block of $\K$,
		provided that $\PP$ is column selection matrix;
		the prototype model computes the entire $n\times n$ matrix $\K$;
		the fast model computes an $n\times c$ block and an $(s-c)\times (s-c)$ block of $\K$ (due to the symmetry of $\K$),
		provided that $\PP$ and $\S$ are column selection matrices.}
	\label{fig:illustrate}
\end{figure}


\subsection{Optimization Perspective} \label{sec:interpretation}

With the sketch $\C = \K \PP \in \RB^{n\times c}$ at hand, we want to find the $\U$ matrix
such that $\C \U \C^T \approx \K$.
It is very intuitive to solve the following problem to make the approximation tight:
\begin{align}
\Umod  \; = \; \argmin_{\U}
\big\| \C \U \C^T - \K \big\|_F^2
\; = \; \C^\dag \K (\C^\dag)^T. \label{eq:def_x_opt}
\end{align}%
This is the prototype model.
Since solving this system is time expensive,
we propose to draw a sketching matrix $\S \in \RB^{n\times s}$ and solve the following problem instead:
\begin{eqnarray}
\Ugen
& = &  \argmin_{\U} \big\| \S^T (\C \U \C^T -  \K) \S \big\|_F^2 \nonumber \\
& = &  \argmin_{\U} \big\| (\S^T \C) \U (\S^T \C)^T - \S^T \K \S \big\|_F^2 \nonumber \\
& = & (\S^T \C)^\dag (\S^T \K \S) (\C^T \S)^\dag , \label{eq:def_x_approx}
\end{eqnarray}%
which results in the fast model.
Similar ideas have been exploited to efficiently solve the least squares regression problem
\citep{drineas2006sampling,drineas2011faster,clarkson2013low},
but their analysis can not be directly applied to the more complicated system \eqref{eq:def_x_approx}.

This approximate linear system interpretation offers a new perspective on the \nystrom method.
The $\U$ matrix of the \nystrom method is in fact an approximate solution to the problem
$\min_\U \|\C \U \C^T - \K \|_F^2$.
The \nystrom method uses $\S = \PP$ as the sketching matrix, which leads to the solution
\begin{eqnarray*}
	\Unys
	\; = \;  \argmin_{\U} \big\| \PP^T (\C \U \C^T -  \K) \PP \big\|_F^2
	\; =\; (\PP^T \K \PP)^\dag \; = \; \W^\dag .
\end{eqnarray*}%


\subsection{Error Analysis} \label{sec:theory}

Let $\Ugen$ correspond to the fast model \eqref{eq:def_x_approx}.
Any of the five sketching methods in Lemma~\ref{lem:product} can be used to compute $\Ugen$,
although column selection is more useful than random projection in this application.
In the following we show that $\Ugen$ is nearly as good as $\Umod$ in terms of the objective function value.
The proof is in Appendix~\ref{sec:proof_faster_spsd}.

\begin{theorem}[Main Result] \label{thm:faster_spsd}
	Let $\K$ be any $n\times n$ fixed symmetric matrix,
	$\C$ be any $n\times c$ fixed matrix, $k_c = \rk (\C)$,
	and $\Ugen$ be the $c\times c$ matrix defined in (\ref{eq:def_x_approx}).
	Let $\S \in \RB^{n\times s}$ be any of the five sketching matrices defined in Table~\ref{tab:faster_spsd}.
	Assume that $\epsilon^{-1} = o(n)$ or $\epsilon^{-1} = o(n/c)$.
	The inequality
	\begin{small}
		\begin{eqnarray}
		\big\| \K - \C \Ugen \C^T \big\|_F^2
		&  \leq & (1+\epsilon) \; \min_{\U} \big\| \K - \C \U \C^T \big\|_F^2 \label{eq:thm:faster_spsd}
		\end{eqnarray}%
	\end{small}%
	holds with probability at least $0.8$.
\end{theorem}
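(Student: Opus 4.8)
I would first reduce the claim to an inequality about a small ($k_c\times k_c$) matrix. Let $\Pii=\C\C^\dag$ be the orthogonal projection onto $\range(\C)$ and pick $\U_{\C}\in\RB^{n\times k_c}$ with orthonormal columns and $\range(\U_{\C})=\range(\C)$, so $\Pii=\U_{\C}\U_{\C}^T$. Every $\C\U\C^T$ equals $\Pii(\C\U\C^T)\Pii$, while $\K-\Pii\K\Pii=(\I_n-\Pii)\K+\Pii\K(\I_n-\Pii)$ is Frobenius-orthogonal to every matrix supported on the $\Pii(\cdot)\Pii$ block, so by Pythagoras
\begin{eqnarray*}
\big\|\K-\C\U\C^T\big\|_F^2 \;=\; \big\|\C\U\C^T-\Pii\K\Pii\big\|_F^2 \;+\; \Delta^2 ,\qquad \Delta^2:=\big\|\K-\Pii\K\Pii\big\|_F^2 ,
\end{eqnarray*}
for every $\U$; hence $\min_{\U}\|\K-\C\U\C^T\|_F^2=\Delta^2$ and the target (\ref{eq:thm:faster_spsd}) is equivalent to $\|\C\Ugen\C^T-\Pii\K\Pii\|_F\le\sqrt{\epsilon}\,\Delta$.

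Next I would produce a closed form for $\C\Ugen\C^T$. Applying Property~1 of Lemma~\ref{lem:product} to $\U_{\C}$ with $\eta=\tfrac12$ shows that, with probability $\ge 1-\delta_1$, all singular values of $\S^T\U_{\C}$ lie in $[\tfrac1{\sqrt2},\sqrt{\tfrac32}]$, so $\S^T\U_{\C}$ and $\S^T\C$ have full column rank $k_c$; a routine pseudoinverse-factorization computation then gives $\C(\S^T\C)^\dag=\U_{\C}(\S^T\U_{\C})^\dag$. Writing $\G:=(\S^T\U_{\C})^\dag\S^T=\Q^{-1}\U_{\C}^T\S\S^T$ with $\Q:=\U_{\C}^T\S\S^T\U_{\C}$, this yields $\C\Ugen\C^T=\U_{\C}\,\G\K\G^T\,\U_{\C}^T$, so $\|\C\Ugen\C^T-\Pii\K\Pii\|_F=\|\G\K\G^T-\U_{\C}^T\K\U_{\C}\|_F$. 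Now $\U_{\C}^T\K\U_{\C}=\Q^{-1}(\Q\,\U_{\C}^T\K\U_{\C}\,\Q)\Q^{-1}$ and $\Q\,\U_{\C}^T\K\U_{\C}\,\Q=\U_{\C}^T\S\S^T(\Pii\K\Pii)\S\S^T\U_{\C}$, giving the exact identity
\begin{eqnarray*}
\G\K\G^T-\U_{\C}^T\K\U_{\C} \;=\; \Q^{-1}\Big(\U_{\C}^T\S\S^T\,\De\,\S\S^T\U_{\C}\Big)\Q^{-1} ,\qquad \De:=\K-\Pii\K\Pii ,
\end{eqnarray*}
with $\|\De\|_F=\Delta$. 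Since Property~1 also gives $\|\Q^{-1}\|_2\le(1-\eta)^{-1}\le2$, it remains to bound the middle factor in Frobenius norm by $O(\sqrt{\epsilon}\,\Delta)$.

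To do that I would expand $\U_{\C}^T\S\S^T\De\,\S\S^T\U_{\C}$ using the elementary relations $\De\Pii=(\I_n-\Pii)\K\Pii$, $(\I_n-\Pii)\De=(\I_n-\Pii)\K$, $\U_{\C}^T\S\S^T\Pii=\Q\U_{\C}^T$, and $\Pii\S\S^T\U_{\C}=\U_{\C}\Q$, splitting off $\Pii$ from each side. Using the symmetry of $\K$ and $\Pii$, every resulting piece except one has the form $\mathbf{M}_1(\U_{\C}^T\S\S^T\B)\mathbf{M}_2$ or its transpose, with $\|\mathbf{M}_i\|_2\le\|\Q\|_2\le2$, $\U_{\C}^T\B=\0$, and $\B\in\{(\I_n-\Pii)\K\U_{\C},\,\N\}$ where $\N:=(\I_n-\Pii)\K(\I_n-\Pii)$; both choices of $\B$ obey $\|\B\|_F\le\Delta$ since $\Delta^2=\|(\I_n-\Pii)\K\|_F^2+\|\Pii\K(\I_n-\Pii)\|_F^2$. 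Property~2 of Lemma~\ref{lem:product}, applied to the $O(1)$ fixed matrices $\B$, bounds each of these pieces by $O(\sqrt{\epsilon_0}\,\Delta)$, $\epsilon_0$ being the multiplication parameter that the chosen $s$ affords. The single exceptional piece is $\U_{\C}^T\S\S^T\N\,\S\S^T\U_{\C}$: because $\U_{\C}^T\N=\N\U_{\C}=\0$ it equals $(\U_{\C}^T\S\S^T\N)((\S\S^T-\I_n)\U_{\C})$, so that
\begin{eqnarray*}
\big\|\U_{\C}^T\S\S^T\N\,\S\S^T\U_{\C}\big\|_F \;\le\; \big\|\U_{\C}^T\S\S^T\N\big\|_F\,\big\|(\S\S^T-\I_n)\U_{\C}\big\|_2 \;\le\; \sqrt{\epsilon_0}\,\Delta\;\big\|(\S\S^T-\I_n)\U_{\C}\big\|_2 ,
\end{eqnarray*}
the last step again from Property~2. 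The hard part will be exactly this term: it cannot be handled by Property~2 alone, because inserting $\I_n-\Pii$ as a ``$\B$'' there would cost a hopeless factor $\|\I_n-\Pii\|_F=\sqrt{n-k_c}$. What is genuinely needed is a spectral bound $\|(\S\S^T-\I_n)\U_{\C}\|_2=O(\sqrt{n/s})$ (up to coherence/logarithmic factors) — for Gaussian and SRHT this is essentially Property~3 of Lemma~\ref{lem:product}, and for uniform/leverage sampling and count sketch it follows from a concentration-plus-net argument (using the coherence of $\U_{\C}$, or the Frobenius norm as a loose surrogate).

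Putting the pieces together, the middle factor of the key identity has Frobenius norm $O(\sqrt{\epsilon_0}\,\Delta\,\sqrt{n/s})$, and the value of $s$ prescribed in Table~\ref{tab:faster_spsd} — of order $c\sqrt{n/\epsilon}$ up to $\poly(c)$ and logarithmic factors — is chosen precisely so that the relevant Property-2 parameter is $\epsilon_0=O(1/s)$ and $\sqrt{\epsilon_0\,n/s}=O(\sqrt{\epsilon})$; hence $\|\G\K\G^T-\U_{\C}^T\K\U_{\C}\|_F\le\sqrt{\epsilon}\,\Delta$, which by the reduction in the first paragraph is (\ref{eq:thm:faster_spsd}). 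A union bound over the $O(1)$ failure events (Property~1, the $O(1)$ instances of Property~2, and the spectral estimate) yields probability at least $0.8$, and the hypothesis $\epsilon^{-1}=o(n)$ (resp.\ $o(n/c)$) is used only to guarantee that this $s$ is $\ll n$, so the fast model is genuinely cheaper than the prototype model. The fact that $s$ must grow with $\sqrt{n}$ is not an artifact of the bookkeeping — it is forced, consistently with the lower bound of Theorem~\ref{thm:lower_bound}.
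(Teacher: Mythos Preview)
Your reduction and the exact identity $\G\K\G^T-\U_{\C}^T\K\U_{\C}=\Q^{-1}\big(\U_{\C}^T\S\S^T\De\,\S\S^T\U_{\C}\big)\Q^{-1}$ are correct, and the overall strategy succeeds. The route, however, differs from the paper's in the treatment of the ``hard'' bilinear piece $\U_{\C}^T\S\S^T\N\,\S\S^T\U_{\C}$ with $\N=(\I_n-\Pii)\K(\I_n-\Pii)$. The paper (Lemmas~\ref{lem:spsd_det}--\ref{lemma:spsd_det1}) introduces a free exponent $\alpha$ and, for the sampling/count-sketch case, sets $\alpha=\tfrac12$ so that the cross term becomes $\sqrt{g_2 g_F}$ with $g_F\le\epsilon_0\|(\I_n-\Pii)\K^{1/2}\|_F^2=\epsilon_0\|(\I_n-\Pii)\K\|_*\le\epsilon_0\sqrt{n}\,\Delta$; this is where the SPSD structure (existence of $\K^{1/2}$) is genuinely used. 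You instead bound the same term by two applications of Property~2---once with $\B=\N$ and once with $\B=\I_n$---obtaining $\sqrt{\epsilon_0}\,\Delta\cdot\sqrt{\epsilon_0 n}=\epsilon_0\sqrt{n}\,\Delta$, the identical order, without ever invoking $\K^{1/2}$. Your argument therefore works for arbitrary symmetric $\K$, not only SPSD $\K$; the paper's $\K^{1/2}$ device is slightly slicker but narrower. For Gaussian/SRHT the paper switches to $\alpha=0$ and uses Property~3 to gain the extra $1/k_c$, which your Property-3 bound on $\|(\S\S^T-\I_n)\U_{\C}\|_2$ recovers in the same way.

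Two minor bookkeeping slips to clean up: for leverage/uniform sampling the Property-2 parameter is $\epsilon_0=\Theta(c/s)$, not $O(1/s)$, so the spectral surrogate gives $\|(\S\S^T-\I_n)\U_{\C}\|_2\le\sqrt{cn/s}$ rather than $\sqrt{n/s}$; plugging $s=c\sqrt{n/\epsilon}$ still yields the target $\epsilon_0\sqrt{n}=\sqrt{\epsilon}$. Second, the hypothesis $\epsilon^{-1}=o(n)$ is not merely to keep $s\ll n$: it is what lets the first-order pieces $O(\sqrt{\epsilon_0}\,\Delta)=O((\epsilon/n)^{1/4}\Delta)$ be absorbed into the dominant $O(\sqrt{\epsilon}\,\Delta)$, exactly as in the paper's passage from Lemma~\ref{lem:spsd_det} to Lemma~\ref{lemma:spsd_det1}.
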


In the theorem, Gaussian projection and SRHT require smaller sketch size than the other three methods.
It is because Gaussian projection and SRHT enjoys all of Properties~1, 2, 3 in Lemma~\ref{lem:product},
whereas leverage score sampling, uniform sampling, and count sketch does not enjoy Property~3.

\begin{table}[!h]\setlength{\tabcolsep}{0.3pt}
	\caption{
		Leverage score sampling means sampling according to the row leverage scores of $\C$.
		For uniform sampling, the parameter $\mu (\C) \in [1, n]$ is the row coherence of $\C$.}
	\label{tab:faster_spsd}
	\begin{center}
		\begin{footnotesize}
			\begin{tabular}{c c c c c}
				\hline
				{\bf Sketching}            &Order of $s$ & ~~~Assumption~~~ & ~~~~~$T_{\textrm{sketch}}$~~~~~ & ~~~~~\#Entries~~~~~ \\
				\hline
				Leverage Score Sampling  &~~~~~$ c  \sqrt{n/\epsilon}$~~~
				& $\epsilon = o (n)$
				& $\OM (nc^2 + s^2) $
				& $n c + (s-c)^2 $    \\
				Uniform Sampling  &~~~~~$ \mu (\C) c \sqrt{n/\epsilon}$~~~
				& $\epsilon = o (n)$
				& $\OM (s^2) $
				& $n c  + (s-c)^2$     \\
				Gaussian Projection\
				&~~$ \sqrt{\frac{n}{c\epsilon}} \big( c  + \log \frac{n}{c} \big)$~~
				& $\epsilon = o (n / c)$
				& ~~$\OM \big( \nnz (\K) s \big) $~~
				& $n^2 $      \\
				SRHT
				&~~~$\sqrt{\frac{n}{c\epsilon}} (c + \log n) \log (n)$~~~
				& $\epsilon = o (n / c)$
				& $\OM (n^2 \, \log s) $
				& $n^2 $    \\
				Count Sketch    &~~$ c \sqrt{n/\epsilon}$~~
				& $\epsilon = o (n)$
				& $\OM \big( \nnz (\K)   \big) $
				& $n^2 $     \\
				\hline
			\end{tabular}
		\end{footnotesize}
	\end{center}
\end{table}

\begin{remark} \label{remark:optimal_nystrom}
	\citet{wang2014modified} showed that there exists an algorithm (though not linear-time algorithm) attaining the error bound
	\[
	\big\| \K - \C \C^\dag \K (\C^\dag)^T \C^T \big\|_F^2
	\; \leq \; (1+\epsilon) \big\| \K - \K_k \big\|_F^2
	\]%
	with high probability by sampling $c = \OM( k / \epsilon )$ columns of $\K$ to form $\C$.
	Let $\C \in \RB^{n\times c}$ be formed by this algorithm
	and $\S \in \RB^{n\times s}$ be the leverage score sampling matrix.
	With $c = \OM( k / \epsilon )$ and $s = \tilde\OM (n^{1/2} k \epsilon^{-3/2} )$,
	the fast model satisfies
	\[
	\big\| \K - \C \U^{\textrm{fast}} \C^T \big\|_F^2
	\; \leq \; (1+\epsilon) \big\| \K - \K_k \big\|_F^2
	\]
	with high probability.
\end{remark}


\subsection{Algorithm and Time Complexity} \label{sec:algorithm}

We describe the whole procedure of the fast model in Algorithm~\ref{alg:gennystrom},
where $\S \in \RB^{n\times s}$ can be one of the five sketching matrices described in Table~\ref{tab:faster_spsd}.
Given $\C$ and (the whole or a part of) $\K$, it takes time
\[
\OM \big( s^2 c \big) + T_{\textrm{sketch}}
\]
to compute $\Ugen$,
where $T_{\textrm{sketch}}$ is the time cost of forming the sketches $\S^T \C$ and $\S^T \K \S$ and is described in Table~\ref{tab:faster_spsd}.
In Table~\ref{tab:faster_spsd} we also show the number of entries of $\K$ that must be observed.
From Table~\ref{tab:faster_spsd} we can see that column selection is much more efficient than random projection,
and column selection does not require the full observation of $\K$.

We are particularly interested in the column selection matrix $\S$ corresponding to the row leverage scores of $\C$.
The leverage score sampling described in Algorithm~\ref{alg:leverage}
can be efficiently performed.
Using the leverage score sampling,
it takes time $\OM (n c^3/ \epsilon)$ (excluding the time of computing $\C = \K \PP$)
to compute $\Ugen$.
For the kernel approximation problem,
suppose
that we are given $n$ data points of dimension $d$ and that the kernel matrix $\K$ is unknown beforehand.
Then it takes $\OM (n c^2 d / \epsilon)$ additional time to evaluate the kernel function values.

\begin{algorithm}[tb]
	\caption{The Fast SPSD Matrix Approximation Model.}
	\label{alg:gennystrom}
	\begin{small}
		\begin{algorithmic}[1]
			\STATE {\bf Input:} an $n\times n$ symmetric matrix $\K$ and the number of selected columns or target dimension of projection $c$ $(< n)$.
			\STATE {Sketching:} $\C=\K \PP$ using an arbitrary $n\times c$
			sketching matrix $\PP$ (not studied in this work); \label{alg:gennystrom:sketch1}
			\STATE Optional: replace $\C$ by any orthonormal bases of the columns of $\C$;  \label{alg:gennystrom:optional}
			\STATE Compute another $n\times s$ sketching matrix $\S$, e.g.\ the leverage score sampling in Algorithm~\ref{alg:leverage};  \label{alg:gennystrom:formS}
			\STATE Compute the sketches $\S^T \C \in \RB^{s\times c}$ and $\S^T \K \S\in \RB^{s\times s}$; \label{alg:gennystrom:sketch2}
			\STATE Compute $\Ugen = (\S^T \C)^\dag (\S^T \K \S) (\C^T \S)^\dag \in \RB^{c\times c}$; \label{alg:gennystrom:intersection}
			\STATE {\bf Output:} $\C$ and $\Ugen$ such that $\K \approx \C \Ugen \C^T$.
		\end{algorithmic}
	\end{small}
\end{algorithm}

\begin{algorithm}[tb]
	\caption{The Leverage Score Sampling Algorithm.}
	\label{alg:leverage}
	\begin{small}
		\begin{algorithmic}[1]
			\STATE {\bf Input:} an $n\times c$ matrix $\C$, an integer $s$.
			\STATE Compute the condensed SVD of $\C$ (by discarding the zero singular values) to obtain the orthonormal bases $\U_\C \in \RB^{n\times \rho}$,
			where $\rho = \rk (\C) \leq c$;
			\STATE Compute the sampling probabilities $p_i = s \ell_i / \rho$, where $\ell_i = \|\e_i^T \U_\C \|_2^2$ is the $i$-th leverage score;
			\STATE Initialize $\S$ to be an matrices of size $n\times 0$;
			\FOR{$i = 1$ to $n$}
			\STATE With probability $p_i$, add $\sqrt{\frac{c}{s \ell_i }}  \e_i $ to be a new column of $\S$,
			where $\e_i$ is the $i$-th standard basis;
			\ENDFOR
			\STATE {\bf Output:} $\S$, whose expected number of columns is $s$.
		\end{algorithmic}
	\end{small}
\end{algorithm}



\subsection{Implementation Details} \label{sec:implementation}

In practice, the approximation accuracy and numerical stability can be significantly improved by the following techniques and tricks.

If $\PP$ and $\S$ are both random sampling matrices, then empirically speaking,
enforcing $\PM \subset \SM$ significantly improves the approximation accuracy.
Here $\PM$ and $\SM$ are the subsets of $[n]$ selected by $\PP$ and $\S$, respectively.
Instead of directly sampling $s$ indices from $[n]$ by Algorithm~\ref{alg:leverage},
it is better to sample $s$ indices from $[n] \setminus \PM$ to form $\SM'$
and let $\SM = \SM' \cup \PM$.
In this way, $s+c$ columns are sampled.
Whether the requirement $\PM \subset \SM $ improves the accuracy is unknown to us.

\begin{corollary} \label{cor:sketch_nystrom_restrict}
	Theorem~\ref{thm:faster_spsd} still holds when we restrict $\PM \subset \SM$.
\end{corollary}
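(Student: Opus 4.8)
The plan is to reduce the corollary to Lemma~\ref{lem:product}. The proof of Theorem~\ref{thm:faster_spsd} uses the random matrix $\S$ only through Properties~1--3 of Lemma~\ref{lem:product} (as the discussion following that lemma makes clear), instantiated with $\U$ equal to an orthonormal basis $\U_\C\in\RB^{n\times k_c}$ of the column space of $\C$ (so $k_c=\rk(\C)$ and $\sum_{i\in[n]}\ell_i=k_c$ for the scores $\ell_i=\|\e_i^{T}\U_\C\|_2^{2}$) and with a few fixed matrices $\B$ built from $\K$ and $\C$. Hence it suffices to re-establish these three properties for the \emph{restricted} sampling matrix with the same orders of $s$ as in Table~\ref{tab:faster_spsd}; Theorem~\ref{thm:faster_spsd} then carries over verbatim. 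To set up, write $\PM^{c}=[n]\setminus\PM$ and decompose the restricted sketch as $\S=[\,\bar{\S},\ \S'\,]$, where $\bar{\S}\in\RB^{n\times c}$ deterministically selects, with unit scaling, the $c$ columns indexed by $\PM$, and $\S'\in\RB^{n\times s}$ is leverage-score sampling (Algorithm~\ref{alg:leverage}) run on the coordinates in $\PM^{c}$ alone, with the scalings renormalized so the overall sketch stays unbiased. Then $\S\S^{T}=\bar{\S}\bar{\S}^{T}+\S'\S'^{T}$ with $\bar{\S}\bar{\S}^{T}=\sum_{i\in\PM}\e_i\e_i^{T}$ deterministic and $\EB[\S'\S'^{T}]=\sum_{i\in\PM^{c}}\e_i\e_i^{T}=\I_n-\bar{\S}\bar{\S}^{T}$, so $\EB[\S\S^{T}]=\I_n$; moreover each sampled rank-one term of $\U_\C^{T}\S'\S'^{T}\U_\C$ has spectral norm at most $\big(\sum_{j\in\PM^{c}}\ell_j\big)/s\le k_c/s$, the same per-term bound available in the unrestricted analysis.

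Next I would re-derive Properties~1--3 for $\S$ by concentrating the \emph{centered} random part $\S'$ about its mean. For Property~1, since $\U_\C^{T}\bar{\S}\bar{\S}^{T}\U_\C+\EB[\U_\C^{T}\S'\S'^{T}\U_\C]=\U_\C^{T}\U_\C=\I_{k_c}$, we get $\U_\C^{T}\S\S^{T}\U_\C-\I_{k_c}=\U_\C^{T}\S'\S'^{T}\U_\C-\EB[\U_\C^{T}\S'\S'^{T}\U_\C]$, a centered sum of independent rank-one matrices; each summand has spectral norm at most $k_c/s$ and the variance proxy is bounded exactly as in the proof of Lemma~\ref{lem:product}, so the same matrix Bernstein/Chernoff estimate gives $\|\U_\C^{T}\S\S^{T}\U_\C-\I_{k_c}\|_2\le\eta$ for the same order of $s$. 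Properties~2 and~3 are identical in structure: $\U_\C^{T}\B-\U_\C^{T}\S\S^{T}\B=\EB[\U_\C^{T}\S'\S'^{T}\B]-\U_\C^{T}\S'\S'^{T}\B$ is again centered, and the variance/Markov argument behind Property~2 and the spectral-concentration argument behind Property~3 depend on $\S'$ only through $\sum_{i\in\PM^{c}}\ell_i\le k_c$ and the per-coordinate scalings, both of which are unchanged. Thus the restricted $\S$, which has $s+c=\Theta(s)$ columns, satisfies Lemma~\ref{lem:product} at the same asymptotic cost, and Theorem~\ref{thm:faster_spsd} applies. For uniform sampling one simply replaces $\ell_i$ by $k_c/n$ throughout; since the constraint $\PM\subset\SM$ is not meaningful for the dense sketches (Gaussian projection and SRHT), the corollary is read as concerning the column-sampling instances.

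I expect the main obstacle to be that the deterministic block $\bar{\S}$ absorbs part of the column space of $\C$, so $\S'$ is effectively a sketch of the \emph{non-orthonormal} matrix $\big(\sum_{i\in\PM^{c}}\e_i\e_i^{T}\big)\U_\C$ rather than of an orthonormal basis, and one cannot invoke Lemma~\ref{lem:product} as a black box with this matrix in place of $\U$ --- its smallest singular value may even vanish, precisely when $\PM$ already spans a direction of $\U_\C$. The way around it is that no relative guarantee on that small block is ever needed: the only facts used downstream are the \emph{additive} statements $\U_\C^{T}\S\S^{T}\U_\C\approx\I_{k_c}$ and $\U_\C^{T}\S\S^{T}\B\approx\U_\C^{T}\B$, which follow from concentration of the centered part $\S'$ once the identity $\EB[\S'\S'^{T}]=\I_n-\bar{\S}\bar{\S}^{T}$ is used to supply the missing expectation. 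The last point to verify is that confining the random samples to $\PM^{c}$ does not blow up the sample size, which holds because $\sum_{i\in\PM^{c}}\ell_i\le k_c$, so the coherence-type parameters controlling $s$ in Table~\ref{tab:faster_spsd} are no larger than before.
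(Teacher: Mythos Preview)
Your proposal is correct but substantially more laborious than the paper's argument. The paper dispatches the corollary in two lines: restricting $\PM\subset\SM$ is the same as sampling each column $i$ with a modified probability $\tilde p_i$ equal to $1$ for $i\in\PM$ and to the original $p_i$ otherwise; since $\tilde p_i\ge p_i$ for all $i$, Remark~\ref{remark:column_sampling} (increasing any sampling probability cannot worsen the bound, because the governing parameter in Lemma~\ref{lem:sampling_property} is $\max_i \ell_i/p_i$) immediately transfers Lemma~\ref{lem:product} and hence Theorem~\ref{thm:faster_spsd}.

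Your route---decomposing $\S=[\bar\S,\S']$ into a deterministic block on $\PM$ and a random block on $\PM^c$, then re-running the matrix-Bernstein argument on the centered sum $\U_\C^T\S'\S'^T\U_\C-\EB[\U_\C^T\S'\S'^T\U_\C]$---works and is more self-contained: it makes the scaling choices explicit and does not require spotting the monotonicity buried inside the proof of Lemma~\ref{lem:product}. The price is that you essentially re-prove the concentration lemma from scratch, whereas the paper's monotonicity observation lets it inherit the lemma as a black box. One small cleanup: for the column-sampling sketches at issue here, only Properties~1 and~2 feed into Theorem~\ref{thm:faster_spsd} (via Assumption~\ref{assumption:faster_spsd1} and Lemma~\ref{lemma:spsd_det1}); Property~3 is used only for Gaussian projection and SRHT, which, as you rightly note, are not subject to the constraint $\PM\subset\SM$, so you need not re-establish it.
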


\begin{proof}
	Let $p_1, \cdots , p_n$ be the original sampling probabilities without the restriction $\PM \subset \SM$.
	We define the modified sampling probabilities by
	\[
	\tilde{p}_i
	\; = \; \left\{
	\begin{array}{l l}
	1 & \textrm{if } i \in \PM ; \\
	p_i & \textrm{otherwise .} \\
	\end{array}
	\right.
	\]
	The column sampling with restriction $\PM \subset \SM$ amounts to sampling columns according to $\tilde{p}_1, \cdots , \tilde{p}_n$.
	Since $\tilde{p}_i \geq p_i$ for all $i \in [n]$,
	it follows from Remark~\ref{remark:column_sampling} that the error bound will not get worse if $p_i$ is replaced by $\tilde{p}_i$.
\end{proof}

If $\S$ is the leverage score sampling matrix, we find it better not to scale the entries of $\S$,
although the scaling is necessary for theoretical analysis.
According to our observation, the scaling sometimes makes the approximation numerically unstable.


\subsection{Additional Properties} \label{sec:lower_bound_sn}

When $\K$ is a low-rank matrix,
the \nystrom method and the prototype model are guaranteed to exactly recover $\K$ \citep{kumar2009sampling,talwalkar2010matrix,wang2014modified}.
We show in the following theorem that the fast model has the same property.
We prove the theorem in Appendix~\ref{sec:proof_exact}.

\begin{theorem}[Exact Recovery] \label{thm:exact}
	Let $\K$ be any $n\times n$ symmetric matrix,
	$\PP \in \RB^{n\times c}$ and $\S \in \RB^{n\times s}$ be any sketching matrices,
	$\C = \K \PP$, and $\W = \PP^T \C$.
	Assume that $\rk (\S^T \C) \geq \rk (\W) $.
	Then $\K = \C (\S^T \C)^\dag (\S^T \K \S) (\C^T \S)^\dag \C^T$
	if and only if $\rk (\K) = \rk (\C)$.
\end{theorem}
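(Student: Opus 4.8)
The plan is to prove both directions of the equivalence, using the identity $\C = \K\PP$ and the range/rank relations that it forces. First, for the ``if'' direction: assume $\rk(\K) = \rk(\C)$. Since $\C = \K\PP$, we always have $\range(\C) \subseteq \range(\K)$; combined with the rank equality this gives $\range(\C) = \range(\K)$, so in particular $\K = \C\C^\dag\K = \K\C(\C^\dag)^T$ (using that $\C\C^\dag$ is the orthogonal projector onto $\range(\C)$ and that $\K$ is symmetric, so its row space equals $\range(\C)$ too). The goal is then to show $\C(\S^T\C)^\dag(\S^T\K\S)(\C^T\S)^\dag\C^T = \K$. The natural route is to write $\K = \C\M\C^T$ for the matrix $\M = \C^\dag\K(\C^\dag)^T$ (this is exactly $\Umod$, and the factorization holds because $\range(\K) = \range(\C)$), and then substitute: $\S^T\K\S = (\S^T\C)\M(\C^T\S)$, so
\[
\C(\S^T\C)^\dag(\S^T\K\S)(\C^T\S)^\dag\C^T = \C(\S^T\C)^\dag(\S^T\C)\M(\C^T\S)(\C^T\S)^\dag\C^T.
\]
Now $(\S^T\C)^\dag(\S^T\C)$ is the orthogonal projector onto the row space of $\S^T\C$, which has dimension $\rk(\S^T\C) \ge \rk(\W)$. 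The key sub-step is to argue that this projector acts as the identity on everything that matters, i.e. that $\C(\S^T\C)^\dag(\S^T\C) = \C$; this needs $\range(\C^T) = \range((\S^T\C)^T)$, equivalently $\rk(\S^T\C) = \rk(\C)$. Since $\rk(\S^T\C) \le \rk(\C)$ always and the hypothesis gives $\rk(\S^T\C)\ge\rk(\W)$, I would show $\rk(\W) = \rk(\C)$ under the assumption $\rk(\K) = \rk(\C)$: indeed $\W = \PP^T\K\PP$ and $\C = \K\PP$, and when $\range(\C) = \range(\K)$ one checks $\rk(\W) = \rk(\PP^T\C) = \rk(\C)$ because $\PP^T$ restricted to $\range(\C)=\range(\K)$ loses no rank (this is the same fact that makes the Nystr\"om method exact on low-rank $\K$, cf.\ the cited exact-recovery results). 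Hence $\rk(\S^T\C)=\rk(\C)$, the projector is the identity on $\range(\C^T)$, the middle factors collapse, and we recover $\C\M\C^T = \K$.

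For the ``only if'' direction: suppose $\K = \C(\S^T\C)^\dag(\S^T\K\S)(\C^T\S)^\dag\C^T$. The right-hand side has the form $\C\,(\text{something})\,\C^T$, so its range is contained in $\range(\C)$; therefore $\range(\K) \subseteq \range(\C)$. But $\C = \K\PP$ forces $\range(\C)\subseteq\range(\K)$, so $\range(\K) = \range(\C)$ and thus $\rk(\K) = \rk(\C)$. This direction is short and purely about ranges.

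The main obstacle I anticipate is the rank bookkeeping in the ``if'' direction — specifically, cleanly establishing that $\rk(\S^T\C) = \rk(\C)$ from the hypotheses $\rk(\S^T\C)\ge\rk(\W)$ and $\rk(\K)=\rk(\C)$, which hinges on the intermediate claim $\rk(\W) = \rk(\C)$. That claim is where the structural assumption $\C = \K\PP$ does real work: one must verify that multiplying $\C$ on the left by $\PP^T$ does not drop the rank, which uses $\range(\C) = \range(\K)$ together with symmetry of $\K$ (so that $\PP$ restricted to the row space of $\C$ is injective). Once the projectors $(\S^T\C)^\dag(\S^T\C)$ and $(\C^T\S)(\C^T\S)^\dag$ are shown to fix $\range(\C^T)$ and $\range(\C)$ respectively, the rest is a one-line substitution. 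I would also double-check the degenerate cases (e.g.\ $\C = \0$, or $\rk(\W) = 0$) to make sure the pseudoinverse manipulations remain valid, but these are routine.
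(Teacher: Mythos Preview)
Your proposal is correct, and the ``only if'' direction matches the paper's argument exactly. For the ``if'' direction you take a genuinely different route. The paper writes $\K = \C\X$ (from $\rk(\K)=\rk(\C)$), observes $\C^T = \PP^T\K = \W\X$ to get $\rk(\W)=\rk(\C)$, and hence the factorization $\K = \X^T \W \X$; it then substitutes $\C = \X^T\W$ and $\Ph = \X\S$ into the fast model and uses the eigenvalue decomposition $\W = \U_\W\Lam_\W\U_\W^T$ to split $(\Ph^T\W)^\dag = (\Lam_\W\U_\W^T)^\dag(\Ph^T\U_\W)^\dag$ and collapse the expression. You instead factor $\K = \C\M\C^T$ with $\M = \Umod$, reduce the fast model to $\C\,(\S^T\C)^\dag(\S^T\C)\,\M\,(\C^T\S)(\C^T\S)^\dag\,\C^T$, and argue directly that the projectors $(\S^T\C)^\dag(\S^T\C)$ and $(\C^T\S)(\C^T\S)^\dag$ fix $\range(\C^T)$ once $\rk(\S^T\C)=\rk(\C)$. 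Both approaches hinge on the same rank identity $\rk(\W)=\rk(\C)$ (your sketch of why $\PP^T$ loses no rank on $\range(\C)$ is exactly the paper's $\C^T = \W\X$ observation), but yours avoids the eigendecomposition of $\W$ and the pseudoinverse splitting, making the ``if'' direction a cleaner two-line substitution. The paper's route, on the other hand, makes the role of $\W$ more explicit and connects the argument visibly to the \nystrom structure.
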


In the following we establish a lower error bound of the fast model,
which implies that to attain the $1+\epsilon$ Frobenius norm bound relative to the best rank $k$ approximation,
the fast model must satisfy
\[
c \geq \Omega \big(k / \epsilon \big)
\quad \textrm{ and } \quad
s \geq \Omega \big(\sqrt{n k / \epsilon} \big) .
\]
Notice that the theorem only holds for column selection matrices $\PP$ and $\S$.
We prove the theorem in Appendix~\ref{sec:proof_lower_bound}.

\begin{theorem} [Lower Bound] \label{thm:lower_bound}
	Let $\PP \in \RB^{n\times c}$ and $\S \in \RB^{n\times s}$ be any two column selection matrices
	such that $\PM \subset \SM \subset [n]$,
	where $\PM$ and $\SM$ are the index sets formed by $\PP$ and $\S$, respectively.
	There exists an $n\times n$ symmetric matrix $\K$ such that
	\begin{eqnarray} \label{eq:sn_lower_bound}
	\frac{\|\K - \Kgen \|_F^2}{\| \K - \K_k\|_F^2 }
	& \geq & \frac{n-c}{n-k} \Big( 1 + \frac{2 k }{c} \Big) + \frac{n-s}{n-k} \frac{k (n-s)}{s^2} ,
	\end{eqnarray}
	where $k$ is arbitrary positive integer smaller than $n$,
	$\C = \K \PP \in \RB^{n\times c}$, and
	\[
	\Kgen
	\; = \;
	\C (\S^T \C)^\dag (\S^T \K \S) (\C^T \S)^\dag \C^T
	\]
	is the fast model.
\end{theorem}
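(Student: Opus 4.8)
The plan is to fix one bad instance $\K$ and then split the fast-model error \emph{exactly} into the two summands on the right-hand side of \eqref{eq:sn_lower_bound}. Write $\PP_\C=\C\C^\dag$ for the orthogonal projector onto $\mathrm{colspace}(\C)$. The optimal prototype approximation $\C\Umod\C^T$ equals $\PP_\C\K\PP_\C$, and it is the Frobenius-orthogonal projection of $\K$ onto the linear subspace $\{\PP_\C\X\PP_\C:\X\in\RB^{n\times n}\}$; since $\Kgen=\C\M\C^T$ with $\M=(\S^T\C)^\dag(\S^T\K\S)(\C^T\S)^\dag$ lies in that same subspace (indeed $\Kgen=\PP_\C\Kgen\PP_\C$), Pythagoras gives
\[
\big\|\K-\Kgen\big\|_F^2 \;=\; \underbrace{\big\|\K-\PP_\C\K\PP_\C\big\|_F^2}_{\text{(A)}} \;+\; \underbrace{\big\|\PP_\C\K\PP_\C-\Kgen\big\|_F^2}_{\text{(B)}} .
\]
Term (A) is the prototype-model error and will supply the factor $\tfrac{n-c}{n-k}\big(1+\tfrac{2k}{c}\big)$ times $\|\K-\K_k\|_F^2$; term (B) is the extra cost of solving the sketched system \eqref{eq:def_x_approx} instead of \eqref{eq:def_x_opt} and will supply $\tfrac{n-s}{n-k}\cdot\tfrac{k(n-s)}{s^2}$ times $\|\K-\K_k\|_F^2$. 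Adding the two, with the instance chosen so that $\|\K-\K_k\|_F^2=n-k$, yields the claim.

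For (A) I would invoke (or re-derive) the worst-case bound behind the $c=\Omega(k/\epsilon)$ optimality of the prototype model \citep{wang2014modified}: there is a high-coherence symmetric matrix for which no choice of $c$ columns lets $\PP_\C\K\PP_\C$ approximate the top-$k$ part of $\K$ to within better than the factor $\tfrac{n-c}{n-k}(1+\tfrac{2k}{c})$. For (B) I would exploit the fact that, for a column-selection $\S$, the quantity $\S^T\K\S$ depends on $\K$ only through the principal submatrix $\K_{\SM,\SM}$ (up to diagonal rescaling), and $\C=\K_{:,\PM}$ only through the columns in $\PM\subset\SM$; hence the entries of $\K$ indexed by $([n]\setminus\SM)\times([n]\setminus\SM)$ never enter $\Kgen$ even though they do enter $\PP_\C\K\PP_\C$. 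Placing a controlled amount of ``signal'' precisely in those coordinates --- a flat, spread-out construction in the spirit of the Nystr\"om lower bound of \citet{wang2013improving}, with $s$ now playing the role of the number of sampled columns there --- and computing $(\S^T\C)^\dag,(\C^T\S)^\dag$ explicitly on the resulting structured matrix should give $\|\PP_\C\K\PP_\C-\Kgen\|_F^2\ge\tfrac{k(n-s)^2}{s^2}$. To make (A) and (B) both hold for one $\K$, I expect to take $\K=\blkdiag(\A,\B)$ with $\A$ the prototype-hard block and $\B$ the sketch-hard block; because a column-selection $\PP$ and $\S$ must split each of its columns into one block or the other, $\C$, $\S^T\C$, and $\S^T\K\S$ are block-diagonal and the whole computation decouples across the two blocks. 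It then remains to fix $n_A+n_B=n$, $k_A+k_B=k$, and the two tail energies so that $\|\K-\K_k\|_F^2=n-k$ and so that for \emph{every} admissible apportionment $c=c_A+c_B$, $s=s_A+s_B$ (subject to $\PM\subset\SM$) the two block contributions still sum to at least the stated expression in the global parameters; since each block's error is monotone in the number of columns it receives, this is a finite, if delicate, optimization.

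The main obstacle is precisely this last reconciliation: exhibiting a single matrix that is simultaneously near-extremal for the column-selection obstruction and for the sketching obstruction, and making the constants line up so that the two lower bounds \emph{add} --- which is legitimate only because of the Pythagorean identity above --- while surviving the adversary's freedom in choosing $\PP$ and $\S$. A secondary technical nuisance is term (B): unlike (A), it is not a rank/column-space obstruction, so projection alone does not suffice and the pseudoinverses $(\S^T\C)^\dag$, $(\C^T\S)^\dag$ must be handled by hand; this becomes manageable only by exploiting the explicit block structure of the construction so that $\S^T\C$ and $\S^T\K\S$ (and hence their pseudoinverses) have closed forms.
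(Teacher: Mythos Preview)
Your Pythagorean split into (A) and (B) is correct, and indeed both terms on the right of \eqref{eq:sn_lower_bound} can be read off as the prototype error and the extra sketching error respectively. But the plan to realize them with \emph{two different} hard blocks $\A$ and $\B$ glued as $\blkdiag(\A,\B)$ is based on a misconception and will not close. The prototype lower bound of \citet{wang2014modified} and the Nystr\"om lower bound of \citet{wang2013improving} are achieved by the \emph{same} adversarial matrix: $k$ identical diagonal blocks, each a $p\times p$ matrix $(1-\alpha)\I_p+\alpha\1_p\1_p^T$ with $p=n/k$ and $\alpha\to 1$. There is no separate ``sketch-hard'' instance to place in a second block. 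And even if there were, your ``finite, if delicate, optimization'' over the adversary's allocation $(c_A,c_B,s_A,s_B)$ is the whole difficulty; you have not indicated why the minimum over allocations would still sum to the stated global expression with the exact constants.

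The paper instead takes the single $k$-block construction above and computes the fast-model error on \emph{one} block exactly: for a $p\times p$ block with $c_i$ columns in $\C$ and $s_i$ columns in $\S$ (with $\PM\subset\SM$), one shows
\[
\lim_{\alpha\to 1}\frac{\|\B-\tilde\B\|_F^2}{(1-\alpha)^2}
\;=\;(p-c_i)\Big(1+\frac{2}{c_i}\Big)+\frac{(p-s_i)^2}{s_i^2}.
\]
Both summands emerge from the same direct calculation; there is no need to separate (A) and (B). The pseudoinverse $(\S^T\C)^\dag$ is obtained in closed form via the partitioned-pseudoinverse formula together with Sherman--Morrison (since every matrix in sight is of the form $a\I+b\1\1^T$), after which the block error is a rational function of $\alpha$ whose limit at $\alpha=1$ gives the displayed expression. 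Summing over the $k$ blocks and minimizing $\sum_i 1/c_i$, $\sum_i 1/s_i$, $\sum_i 1/s_i^2$ under $\sum_i c_i=c$, $\sum_i s_i=s$ by convexity (equal allocation is optimal) yields the right-hand side of \eqref{eq:sn_lower_bound}; the denominator $\|\K-\K_k\|_F^2=(1-\alpha)^2(n-k)$ comes from the standard spectrum of $\K$. So the missing idea is precisely that one construction does both jobs at once, and the argument is a single explicit computation rather than a combination of two separate obstructions.
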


Interestingly, Theorem~\ref{thm:lower_bound} matches the lower bounds of the \nystrom method and the prototype model.
When $s=c$, the right-hand side of \eqref{eq:sn_lower_bound} becomes $\Omega (1+ k n / c^2 )$,
which is the lower error bound of the \nystrom method given by \citet{wang2013improving}.
When $s=n$, the right-hand side of \eqref{eq:sn_lower_bound} becomes $\Omega (1+ k/c )$,
which is the lower error bound of the prototype model given by \citet{wang2014modified}.


%


\section{Extension to CUR Matrix Decomposition} \label{sec:cur}

In Section~\ref{sec:cur_classic} we describe the CUR matrix decomposition and establish an improved error bound of CUR in Theorem~\ref{thm:optimal_cur}.
In Section~\ref{sec:faster_cur} we use sketching to more efficiently compute the $\U$ matrix of CUR.
Theorem~\ref{thm:optimal_cur} and Theorem~\ref{thm:cur} together show that our fast CUR method satisfies $1+\epsilon$ error bound relative to the best rank $k$ approximation.
In Section~\ref{sec:cur_experiments} we provide empirical results to intuitively illustrate the effectiveness of our fast CUR.
In Section~\ref{sec:cur_discussions} we discuss the application of our results beyond the CUR decomposition.

\subsection{The CUR Matrix Decomposition} \label{sec:cur_classic}

Given any $m\times n$ matrix $\A$,
the CUR matrix decomposition is computed by
selecting $c$ columns of $\A$ to form $\C \in \RB^{m\times c}$
and $r$ rows of $\A$ to form $\R \in \RB^{r\times n}$
and computing the $\U$ matrix such that $\|\A - \C \U \R \|_F^2$ is small.
CUR preserves the sparsity and non-negativity properties of $\A$;
it is thus more attractive than SVD in certain applications~\citep{mahoney2009matrix}.
In addition, with the CUR of $\A$ at hand, the truncated SVD of $\A$ can be very efficiently computed.

A standard way to finding the $\U$ matrix is by minimizing $\|\A - \C \U \R \|_F^2$ to obtain
the optimal $\U$ matrix
\begin{align} \label{def:cur_u_opt}
\U^\star \; = \;
\argmin_{\U} \|\A - \C \U \R \|_F^2
\; = \; \C^\dag \A \R^\dag ,
\end{align}
which has been used by \citet{stewart1999four,wang2013improving,boutsidis2014optimal}.
This approach costs time $\OM (m c^2 + n r^2)$ to compute the Moore-Penrose inverse
and $\OM (m n \cdot \min\{ c, r\})$ to compute the matrix product.
Therefore, even if $\C$ and $\R$ are uniformly sampled from $\A$,
the time cost of CUR is $\OM (m n \cdot \min\{ c, r\})$.

At present the strongest theoretical guarantee is by \citet{boutsidis2014optimal}.
They use the adaptive sampling algorithm to select $c = \OM (k/\epsilon)$ column and $r = \OM (k /\epsilon)$ rows to form $\C$ and $\R$, respectively,
and form $\U^\star = \C^\dag \A \R^\dag$.
The approximation error is bounded by
\[
\| \A - \C \U^\star \R \|_F^2
\; \leq \;
(1+\epsilon ) \| \A - \A_k \|_F^2.
\]
This result matches the theoretical lower bound up to a constant factor.
Therefore this CUR algorithm is near optimal.
We establish in Theorem~\ref{thm:optimal_cur} an improved error bound of the
adaptive sampling based CUR algorithm,
and the constants in the theorem are better than the those in \citep{boutsidis2014optimal}.
Theorem~\ref{thm:optimal_cur} is obtained by following the idea of \citet{boutsidis2014optimal}
and slightly changing the proof of \citet{wang2013improving}.
The proof is in Appendix~\ref{sec:optimal_cur_proof}.

\begin{theorem} \label{thm:optimal_cur}
	Let $\A$ be any given $m\times n$ matrix, $k$ be any positive integer less than $m$ and $n$,
	and $\epsilon \in (0, 1)$ be an arbitrary error parameter.
	Let $\C \in \RB^{m\times c}$ and $\R \in \RB^{r\times n}$ be
	columns and rows of $\A$ selected by the near-optimal column selection algorithm of \cite{boutsidis2011NOCshort}.
	When $c$ and $r$ are both greater than $ 4 k \epsilon^{-1} \big( 1+o(1) \big)$,
	the following inequality holds:
	\[
	\EB \big\| \A - \C \C^\dag \A \R^\dag \R \|_F^2
	\; \leq \;
	(1+\epsilon ) \| \A - \A_k \|_F^2 ,
	\]
	where the expectation is taken w.r.t.\ the random column and row selection.
\end{theorem}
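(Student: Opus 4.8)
The plan is to reduce the CUR error bound to two applications of a one-sided column-selection guarantee, handled sequentially: first pick the columns $\C$ to control the error of approximating $\A$ by its projection onto the column space of $\C$, then pick the rows $\R$ conditionally to control the residual of the row-space projection. Concretely, I would start from the identity
\[
\A - \C\C^\dag \A \R^\dag \R
\; = \; \bigl(\A - \C\C^\dag \A\bigr) + \C\C^\dag\bigl(\A - \A \R^\dag \R\bigr),
\]
and observe that the two terms on the right live in orthogonal row spaces (the first has rows in $\range(\R^\dag\R)^\perp$ only after a bit of care — actually the cleaner split is $\A-\C\C^\dag\A\R^\dag\R=(\A-\C\C^\dag\A)+(\C\C^\dag\A-\C\C^\dag\A\R^\dag\R)$, whose two summands have column spaces that are orthogonal), so that by the Pythagorean theorem
\[
\bigl\|\A - \C\C^\dag\A\R^\dag\R\bigr\|_F^2
\; = \;
\bigl\|\A-\C\C^\dag\A\bigr\|_F^2
+ \bigl\|\C\C^\dag(\A-\A\R^\dag\R)\bigr\|_F^2
\; \le \;
\bigl\|\A-\C\C^\dag\A\bigr\|_F^2
+ \bigl\|\A-\A\R^\dag\R\bigr\|_F^2 .
\]
This is the standard decoupling used by \citet{wang2013improving} and \citet{boutsidis2014optimal}; the point of reproving it here is to carry sharp constants through.

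Next I would invoke the near-optimal column selection guarantee of \citet{boutsidis2011NOCshort} twice. For the first term, selecting $c$ columns gives, in expectation, $\EB\|\A-\C\C^\dag\A\|_F^2 \le \bigl(1 + \tfrac{k}{c-k+1}\bigr)^{?}\cdots$ — more precisely I would use the form $\EB\|\A-\C\C^\dag\A\|_F^2 \le (1+\tfrac{4k}{c}(1+o(1)))\|\A-\A_k\|_F^2$ that their algorithm achieves when $c = \OM(k/\epsilon)$, which under $c \ge 4k\epsilon^{-1}(1+o(1))$ becomes $\le (1+\tfrac\epsilon2)\|\A-\A_k\|_F^2$. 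For the second term I apply the same algorithm to $\A^T$ (or to an appropriately projected matrix — one has to be slightly careful whether rows are sampled from $\A$ or from $\A-\C\C^\dag\A$; following Boutsidis–Woodruff, sampling rows of $\A$ itself suffices because $\|\A-\A\R^\dag\R\|_F^2 \le \|\A_k - \A_k\R^\dag\R\|_F^2 + \|\A-\A_k\|_F^2$ when $\R$ is chosen w.r.t.\ the top-$k$ right singular space), giving another $(1+\tfrac\epsilon2)\|\A-\A_k\|_F^2$ bound in expectation over the row selection. Adding the two, taking total expectation, and collecting the $1+o(1)$ factors yields $\EB\|\A-\C\C^\dag\A\R^\dag\R\|_F^2 \le (1+\epsilon)\|\A-\A_k\|_F^2$.

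The main obstacle I anticipate is bookkeeping the constants cleanly enough to beat \citet{boutsidis2014optimal}: their looser constant comes from a slightly wasteful triangle-inequality step, and the claim here is that routing the second-term bound through $\|\A_k-\A_k\R^\dag\R\|_F^2$ (as in \citet{wang2013improving}) rather than through $\|\A-\A\R^\dag\R\|_F^2$ directly saves a factor, so the delicate part is verifying that the residual $\|\A-\C\C^\dag\A\|_F^2$ and the row-side residual combine additively with coefficient exactly $1$ on each, not $2$ somewhere. I would also need to double-check that the near-optimal column selection bound can be stated with the $\tfrac{4k}{c}$-type dependence uniformly (the original statement is often $c/k$ large), and that the $o(1)$ terms from that algorithm do not interact badly with the $\epsilon/2 + \epsilon/2$ split — i.e., that $(1+\tfrac\epsilon2+o(1))^2$ or the sum of two such terms still collapses to $1+\epsilon$ under the hypothesis $c,r \ge 4k\epsilon^{-1}(1+o(1))$. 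Everything else is the routine Pythagorean/projection algebra already spelled out in the cited proofs.
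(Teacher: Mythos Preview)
Your decoupling inequality
\[
\big\|\A - \C\C^\dag\A\R^\dag\R\big\|_F^2
\;=\;
\big\|\A-\C\C^\dag\A\big\|_F^2
+ \big\|\C\C^\dag(\A-\A\R^\dag\R)\big\|_F^2
\;\le\;
\big\|\A-\C\C^\dag\A\big\|_F^2
+ \big\|\A-\A\R^\dag\R\big\|_F^2
\]
is correct, but it cannot give a $(1+\epsilon)$ bound. Each summand on the right is bounded below by $\|\A-\A_k\|_F^2$ (since $\C\C^\dag\A$ and $\A\R^\dag\R$ have rank at most $c$ and $r$), so no matter how good the column and row selection is, the right-hand side is at least $2\|\A-\A_k\|_F^2$. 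Your ``$(1+\tfrac\epsilon2)+(1+\tfrac\epsilon2)=1+\epsilon$'' arithmetic is simply off by $1$: you would obtain $(2+\epsilon)\|\A-\A_k\|_F^2$, which is the $2+\epsilon$ bound that the paper is trying to \emph{improve} on. Routing the row term through $\|\A_k-\A_k\R^\dag\R\|_F^2$ does not help, because you still have to add the full $\|\A-\C\C^\dag\A\|_F^2 \ge \|\A-\A_k\|_F^2$ from the column side.

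The paper avoids this additive loss by a genuinely different mechanism. It first passes from $\C$ to a rank-$k$ matrix $\Z$ with orthonormal columns and $\range(\Z)\subset\range(\C)$ satisfying $\|\A-\Z\Z^T\A\|_F^2 \le (1+\epsilon)\|\A-\A_k\|_F^2$ (Lemma~3.11 of \citet{boutsidis2014optimal}). Then, rather than bounding the row term independently, it selects rows in two stages (dual-set sparsification plus adaptive sampling) and applies the adaptive sampling theorem of \citet{wang2013improving}, which gives
\[
\EB\big\|\A-\Z\Z^T\A\R^\dag\R\big\|_F^2
\;\le\;
\EB\big\|\A-\Z\Z^T\A\big\|_F^2 + \tfrac{k}{r_2}\,\EB\big\|\A-\A\R_1^\dag\R_1\big\|_F^2.
\]
The crucial point is that the second term carries the factor $k/r_2 = \OM(\epsilon)$ because $\Z$ has rank exactly $k$; this is what makes the row contribution $\OM(\epsilon)\|\A-\A_k\|_F^2$ rather than a second full copy of $\|\A-\A_k\|_F^2$. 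A final Pythagorean step shows $\|\A-\C\C^\dag\A\R^\dag\R\|_F \le \|\A-\Z\Z^T\A\R^\dag\R\|_F$ since $\range(\Z)\subset\range(\C)$. Your proposal is missing both ingredients: the rank-$k$ surrogate $\Z$ and the adaptive-sampling inequality that produces the $k/r$ factor.
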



\subsection{Fast CUR Decomposition} \label{sec:faster_cur}

Analogous to the fast SPSD matrix approximation model,
the CUR decomposition can be sped up while preserving its accuracy.
Let $\S_C \in \RB^{m\times s_c}$ and $\S_R \in \RB^{n\times s_r}$ be any sketching matrices satisfying
the approximate matrix multiplication properties.
We propose to compute $\U$ more efficiently by
\begin{align} \label{def:cur_u_sn}
& \tilde\U \; = \;
\argmin_{\U} \|\S_{C}^T \A \S_{R} - (\S_{C}^T\C) \U (\R\S_{R}) \|_F^2 \nonumber \\
& \qquad = \; \underbrace{(\S_{C}^T\C)^\dag}_{c\times s_c} \underbrace{(\S_{C}^T \A \S_{R} )}_{s_c \times s_r} \underbrace{(\R\S_{R})^\dag}_{s_r\times r} ,
\end{align}
which costs time
\[
\OM (s_r r^2 + s_c c^2 + s_c s_r \cdot \min\{c , r\}) + T_{\textrm{sketch}} ,
\]
where $T_{\textrm{sketch}}$ denotes the time for forming the sketches $\S_{C}^T \A \S_{R}$, $\S_{C}^T\C$, and $\R\S_{R}$.
As for Gaussian projection, SRHT, and count sketch,
$T_{\textrm{sketch}}$ are respectively
$\OM \big(\nnz (\A) \min\{s_c, s_r\}\big)$,
$\OM \big(mn \log ( \min\{s_c, s_r\}) \big)$,
and $\OM \big( \nnz (\A)\big)$.
As for leverage score sampling and uniform sampling,
$T_{\textrm{sketch}}$ are respectively $\OM (m c^2 + n r^2 + s_c s_r)$ and $\OM(s_c s_r)$.
Forming the sketches by column selection is more efficient than by random projection.

The following theorem shows that when $s_c$ and $s_r$ are sufficiently large,
$\tilde\U$ is nearly as good as the best possible $\U$ matrix.
In the theorem, leverage score sampling means that $\S_C$ and $\S_R$ sample columns according to the row leverage scores of $\C$ and $\R^T$, respectively.
The proof is in Appendix~\ref{sec:proof_cur}.

\begin{theorem} \label{thm:cur}
	Let $\A \in \RB^{m\times n}$, $\C \in \RB^{m\times c}$, $\R \in \RB^{r\times n}$ be any fixed matrices with $c \ll n$ and $r \ll m$.
	Let $q = \min\{m, n\}$ and $\tilde{q} = \min\{m/c, n/r \}$.
	The sketching matrices $\S_C \in \RB^{m\times s_c}$ and $\S_R \in \RB^{n\times s_r}$ are described in Table~\ref{tab:curtype}.
	Assume that $\epsilon^{-1} = o(q)$ or $\epsilon^{-1} = o ( \tilde{q} )$, as  shown in the table.
	The matrix $\tilde\U$ is defined in (\ref{def:cur_u_sn}).
	Then the inequality
	\begin{eqnarray*}
		\|\A - \C \tilde\U \R\|_F^2
		& \leq & (1+\epsilon) \min_\U \|\A - \C \U \R\|_F^2
	\end{eqnarray*}
	holds with probability at least $0.7$.
\end{theorem}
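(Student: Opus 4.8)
The plan is to reduce Theorem~\ref{thm:cur} to a pair of approximate matrix multiplication guarantees of the type provided by Lemma~\ref{lem:product}, applied independently on the column side (via $\S_C$) and the row side (via $\S_R$). First I would set $\U^\star = \C^\dag \A \R^\dag$, the optimal solution of $\min_\U \|\A - \C\U\R\|_F^2$, and write $\E = \A - \C\U^\star\R$ for the corresponding residual. The key structural fact is that $\C\C^\dag \E = \0$ and $\E \R^\dag\R = \0$ (the residual of a least-squares fit is orthogonal to the column space of $\C$ and, on the right, to the row space of $\R$); this is exactly what makes the sketched problem a good proxy. I would let $\U_C \in \RB^{m\times k_c}$ and $\U_R \in \RB^{n \times k_r}$ be orthonormal bases for the column spaces of $\C$ and $\R^T$ respectively, where $k_c = \rk(\C)$, $k_r = \rk(\R)$.

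Next I would argue that $\tilde\U$ from (\ref{def:cur_u_sn}) differs from $\U^\star$ only through the ``noise'' that $\S_C$ and $\S_R$ inject, and that this noise is controlled whenever the sketches satisfy the subspace-embedding property (Property~1) for $\U_C$ and $\U_R$ and the approximate-multiplication property (Property~2, or Property~3 for Gaussian/SRHT) for the products $\U_C^T \E$ and $\U_R^T \E^T$. Concretely: Property~1 applied to $\U_C$ and to $\U_R$ guarantees that $\S_C^T\C$ and $\R\S_R$ have the same rank as $\C$ and $\R$ and that their pseudo-inverses are well-conditioned; Property~2/3 applied to $\E$ (sketched on the left by $\S_C$ and on the right by $\S_R$) controls the cross terms $\|\U_C^T\E\S_R\|$ and $\|\S_C^T\E\U_R\|$ that arise when we expand $\|\A - \C\tilde\U\R\|_F^2$. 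Plugging the per-side sketch sizes from Table~\ref{tab:curtype} (each of order roughly $\sqrt{q}$ times a polynomial in $c,r,\epsilon^{-1}$, mirroring the $s \sim c\sqrt{n/\epsilon}$ choice in Theorem~\ref{thm:faster_spsd}) makes each error term at most $O(\epsilon)\|\E\|_F^2 = O(\epsilon)\min_\U\|\A-\C\U\R\|_F^2$.

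The algebra I would carry out is: decompose $\A = \C\U^\star\R + \E$, substitute into $\|\A - \C\tilde\U\R\|_F^2$, and expand using the orthogonality relations $\C\C^\dag\E=\0$, $\E\R^\dag\R=\0$. After using the subspace embedding to replace $(\S_C^T\C)^\dag \S_C^T\C$ and $(\R\S_R)(\R\S_R)^\dag$ by near-identities on the relevant subspaces, the residual splits into $\|\E\|_F^2$ plus terms each bounded by a constant times one of $\|\S_C^T\E\S_R\|_F^2$, $\|\U_C^T(\I-\S_R\S_R^T)\E^T\|$-type quantities, and their symmetric counterparts — precisely the quantities Lemma~\ref{lem:product} bounds. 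A union bound over the (at most four) invocations of the lemma, each with failure probability a small constant, yields overall success probability at least $0.7$. I would handle the two-sided sketching by conditioning: first fix $\S_C$ on a good event, analyze the $\S_R$ randomness, then combine.

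The main obstacle I anticipate is the two-sided nature of the sketch: unlike ordinary sketched least squares where one sketches a single tall matrix, here $\S_C$ and $\S_R$ act simultaneously and the pseudo-inverses $(\S_C^T\C)^\dag$ and $(\R\S_R)^\dag$ do not commute past the middle sketch $\S_C^T\A\S_R$ cleanly. Getting the cross terms to collapse requires carefully tracking how $\S_C$-induced perturbations interact with $\S_R$-induced ones — in particular showing the ``double error'' term $\|\S_C^T\E\S_R\|_F^2$ can itself be bounded by $O(\epsilon)\|\E\|_F^2$ rather than the naive $O(\sqrt\epsilon)$, which is why the sketch sizes must scale like $\sqrt{q}$ rather than being $q$-independent. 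This is the same phenomenon that forces the $\sqrt{n}$ factor in Theorem~\ref{thm:faster_spsd}, and I expect the bookkeeping here to be the heart of the proof; everything else is a routine adaptation of the single-side least-squares sketching analysis of \citet{drineas2006sampling,clarkson2013low} combined with the proof technique of Theorem~\ref{thm:faster_spsd}.
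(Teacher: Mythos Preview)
Your overall strategy matches the paper's: reduce to Properties~1--3 of Lemma~\ref{lem:product} applied independently on each side, with a deterministic decomposition lemma (the paper's Lemma~\ref{lem:curtype_det}) doing the algebraic bookkeeping, then rescale $\epsilon$. Your identification of the double-error term as the source of the $\sqrt{q}$ scaling is also correct.

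However, your structural claim that $\C\C^\dag\E = \0$ and $\E\R^\dag\R = \0$ is false. The first-order optimality condition for $\U^\star = \C^\dag\A\R^\dag$ gives only the \emph{joint} orthogonality $\C^T\E\R^T = \0$ (equivalently $\U_\C^T\E\V_\R = \0$, in the paper's notation where $\V_\R$ plays the role of your $\U_R$), not the two separate one-sided identities. Indeed $\U_\C^T\E = \U_\C^T\A(\I_n - \V_\R\V_\R^T) \neq \0$ in general, so the expansion you describe, where cross terms vanish by orthogonality, does not go through as stated.

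The paper fixes this by splitting the residual into \emph{three} pieces,
\[
\E \;=\; \U_\C\U_\C^T\A(\I_n-\V_\R\V_\R^T) \,+\, (\I_m-\U_\C\U_\C^T)\A\V_\R\V_\R^T \,+\, (\I_m-\U_\C\U_\C^T)\A(\I_n-\V_\R\V_\R^T),
\]
and then uses the identity
\[
(\U_\C^T\S_C\S_C^T\U_\C)(\tilde\Z - \Z^\star)(\V_\R^T\S_R\S_R^T\V_\R) \;=\; \U_\C^T\S_C\S_C^T\,\E\,\S_R\S_R^T\V_\R
\]
(obtained by multiplying the normal equations for $\tilde\Z$ through on both sides) to bound $\|\tilde\Z - \Z^\star\|_F$ by a sum of three terms, one per piece. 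The first two are single-side errors controlled by Property~2, and the third is the genuine double-error term, bounded as a product $\sqrt{g_C\, g_R'}$ with one factor controlled by Property~2 and the other (of size $\sqrt{\epsilon n}$ or $\sqrt{\epsilon n/k_r}$) by Property~2 or~3 applied to $\I_n-\V_\R\V_\R^T$. No conditioning argument is needed; the identity above already decouples the two sketches. Fix the orthogonality claim and use this three-term decomposition, and your plan becomes the paper's proof.
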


\begin{table}[!h]\setlength{\tabcolsep}{0.3pt}
	\caption{
		Leverage score sampling means sampling according to the row leverage scores of $\C$ and
		the column leverage scores of $\R$, respectively.
		For uniform sampling, the parameter $\mu (\C) $ is the row coherence of $\C$
		and $\nu (\R)  $ is the column coherence of $\R$.
	}
	\label{tab:curtype}
	\begin{center}
		\begin{footnotesize}
			\begin{tabular}{c c c c}
				\hline
				{\bf Sketching}            &Order of $s_c$& Order of $s_r $ & ~~Assumption~~ \\
				\hline
				Leverage Score Sampling  &~~~~~$ c \sqrt{q/\epsilon}$~~~
				& ~~~$ r \sqrt{q/\epsilon}$~~~~
				& $\epsilon^{-1} = o ( q )$  \\
				Uniform Sampling  &~~~~~$ \mu (\C) c \sqrt{q/\epsilon}$~~~
				& ~~~$ \nu (\R)r \sqrt{q/\epsilon}$~~~
				& $\epsilon^{-1} = o (q )$  \\
				Gaussian Projection\ &~~$  \sqrt{\frac{m}{ c \epsilon} } \big( c + \log \frac{n}{c} \big)$~~
				&~~$  \sqrt{\frac{n }{r \epsilon} } \big( r + \log \frac{m}{r} \big) $~~
				& $\epsilon^{-1} = o ( \tilde{q} )$ \\
				SRHT            &~~~$  \sqrt{\frac{m}{ c \epsilon} } \big( c + \log \frac{mn}{c} \big)\log ( m)$~~~
				& ~~~$  \sqrt{\frac{n }{r \epsilon} } \big( r + \log \frac{mn}{r} \big) \log (n) $~~~
				& $\epsilon^{-1} = o ( \tilde{q} )$ \\
				Count Sketch    &~~$ c \sqrt{q/\epsilon}$~~
				& ~$ r \sqrt{q/\epsilon}$~~~
				& $\epsilon^{-1} = o (q )$\\
				\hline
			\end{tabular}
		\end{footnotesize}
	\end{center}
\end{table}

As for leverage score sampling, uniform sampling, and count sketch,
the sketch sizes $s_c = \OM (c\sqrt{q/\epsilon})$ and $s_r = \OM (r \sqrt{q/\epsilon})$ suffice,
where $q = \min\{m, n\}$.
As for Gaussian projection and SRHT, much smaller sketch sizes are required:
$s_c = \tilde{\OM} (\sqrt{m c / \epsilon})$ and $s_r = \tilde{\OM} (\sqrt{n r / \epsilon})$ suffice.
However, these random projection methods are inefficient choices in this application and only have theoretical interest.
Only column sampling methods have linear time complexities.
If $\S_C$ and $\S_R$ are leverage score sampling matrices (according to the row leverage scores of $\C$ and $\R^T$, respectively),
it follows from Theorem~\ref{thm:cur} that
$\tilde\U$ with $1+\epsilon$ bound can be computed in time
\[
\OM \big(s_r r^2 + s_c c^2 + s_c s_r \cdot \min\{c , r\} \big) + T_{\textrm{sketch}}
\;\,= \;\,
\OM \big( c r \epsilon^{-1} \cdot \min \{ m ,n \} \cdot \min \{ c , r\} \big) ,
\]
which is linear in $\OM (\min\{m, n\})$.


\subsection{Empirical Comparisons} \label{sec:cur_experiments}

To intuitively demonstrate the effectiveness of our method,
we conduct a simple experiment on a $1920\times 1168$ natural image obtained from the internet.
We first uniformly sample $c=100$ columns to form $\C$ and $r=100$ rows to form $\R$,
and then compute the $\U$ matrix by varying $s_c$ and $s_r$.
We show the image $\tilde\A = \C \U \R$ in Figure~\ref{fig:cur}.

Figure~\ref{fig:cur}(b) is obtained by computing the $\U$ matrix according to \eqref{def:cur_u_opt},
which is the best possible result when $\C$ and $\R$ are fixed.
The $\U$ matrix of Figure~\ref{fig:cur}(c) is computed according to \cite{drineas2008cur}:
\[
\U \; = \; (\PP_R^T \A \PP_C)^\dag ,
\]
where $\PP_C$ and $\PP_R$ are column selection matrices such that $\C = \A \PP_C$ and $\R = \PP_R^T \A$.
This is equivalently to \eqref{def:cur_u_sn} by setting $\S_C = \PP_R$ and $\S_R = \PP_C$.
Obviously, this setting leads to very poor quality.
In Figures~\ref{fig:cur}(c) and (d) the sketching matrices $\S_C$ and $\S_R$ are uniform sampling matrices.
The figures show that when $s_c$ and $s_r$ are moderately greater than $r$ and $c$, respectively,
the approximation quality is significantly improved.
Especially, when $s_c = 4r$ and $s_r = 4c$, the approximation quality is nearly as good as using the optimal $\U$ matrix defined in \eqref{def:cur_u_opt}.

%

\begin{figure}[!ht]
	\begin{center}
		\centering
		\includegraphics[width=0.97\textwidth]{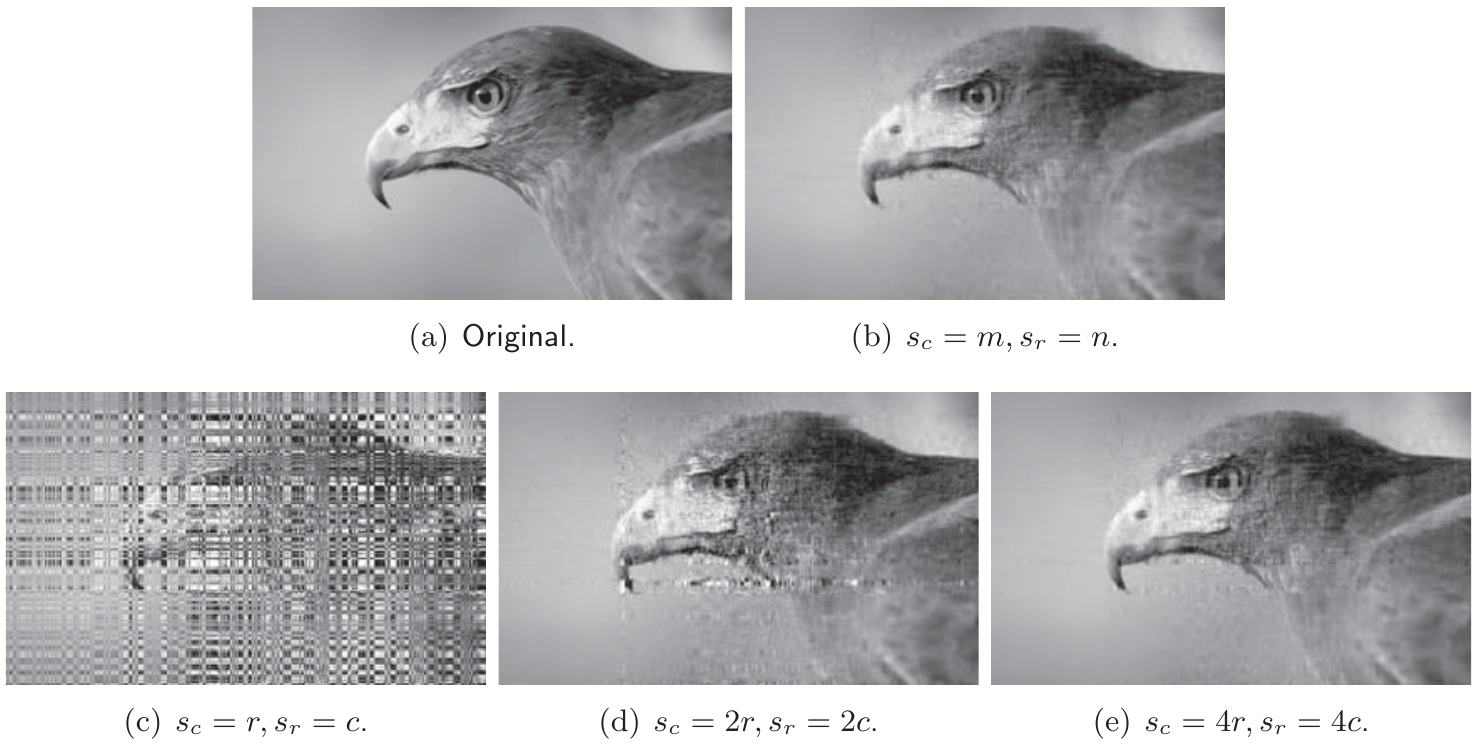}
	\end{center}
	\caption{(a): the original $1920\times 1168$ image. (b) to (e): CUR decomposition with $c=r=100$ and different settings of $s_c$ and $s_r$.}
	\label{fig:cur}
\end{figure}


\subsection{Discussions} \label{sec:cur_discussions}

We note that we are not the first to use row and column sampling to solve the CUR problem more efficiently,
though we are the first to provide rigorous error analysis.
Previous work has exploited similar ideas as heuristics to speed up computation and to avoid visiting every entry of $\A$.
For example, the MEKA method \citep{si2014memory} partitions the kernel matrix $\K$ into $b^2$ blocks
$\K^{(i,j)}$ ($i = 1, \cdots, b$ and $j=1, \cdots , b$),
and requires solving
\[
\LL^{(i,j)} \; = \; \argmin_{\LL} \big\| \W^{(i)} \LL {\W^{(j)}}^T - \K^{(i,j)}\big\|_F^2
\]
for all $i \in [b]$, $j \in [b]$, and $i \neq j$.
Since $\W^{(i)}$ and $\W^{(j)}$ have much more rows than columns,
\citet{si2014memory} proposed to approximately solve the linear system by uniformly sampling
rows from $\W^{(i)}$ and $\K^{(i,j)}$ and columns from $(\W^{(j)})^T$ and $\K^{(i,j)}$,
and they noted that this heuristic works pretty well.
The basic ideas of our fast CUR and their MEKA are the same;
their experiments demonstrate the effectiveness and efficiency of this approach,
and our analysis answers why this approach is correct.
This also implies that our algorithms and analysis may have broad applications and impacts beyond the CUR decomposition and SPSD matrix approximation.

\begin{table}[t]\setlength{\tabcolsep}{0.3pt}
	\caption{A summary of the datasets for kernel approximation.}
	\label{tab:datasets}
	\begin{center}
		\begin{small}
			\begin{tabular}{l c c c c c c  }
				\hline
				{\bf Dataset}  &~~Letters~~&~~PenDigit~~&~~Cpusmall~~&~~Mushrooms~~&~~WineQuality~~\\
				\hline
				{\bf \#Instance}  &$15,000$ & 10,992  & $8,192$ &  $8,124$  &  $4,898$    \\
				{\bf \#Attribute} & $16$    &  16     & $12$    &   $112$   &  $12$       \\
				$\sigma$ (when $\eta = 0.90)$ & $0.400$ & $0.101$ & $0.075$ & $1.141$ & $0.314$ \\
				$\sigma$ (when $\eta = 0.99)$ & $0.590$ & $0.178$ & $0.180$ & $1.960$ & $0.486$ \\
				\hline
			\end{tabular}
		\end{small}
	\end{center}
\end{table}

\begin{figure}[!ht]
	\begin{center}
		\centering
		\includegraphics[width=0.99\textwidth]{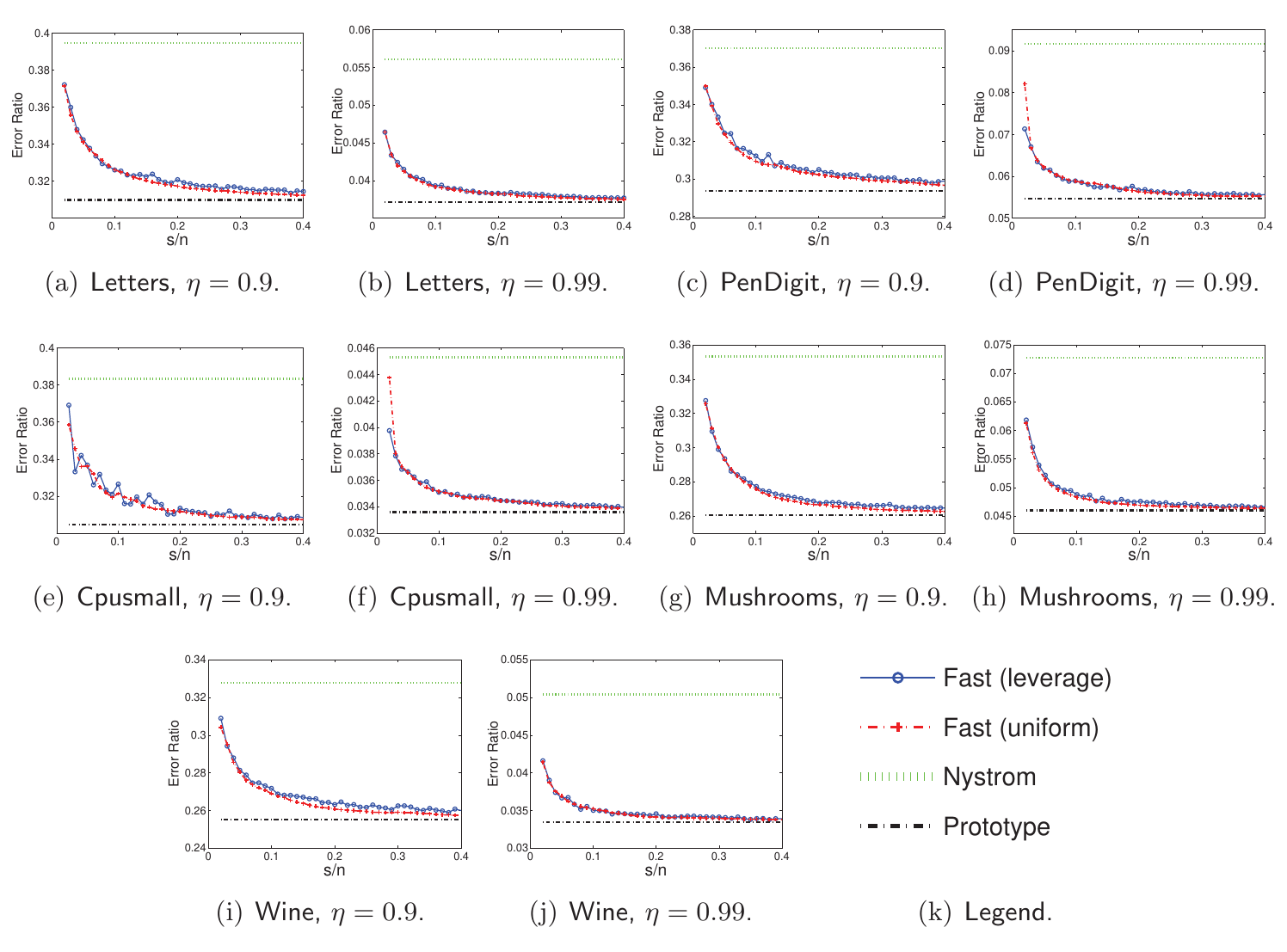}
	\end{center}
	\caption{The plot of $\frac{s}{n}$ against the approximation error $\|\K - \C \U \C^T\|_F^2 / \|\K\|_F^2$,
		where $\C$ contains $c = \lceil n / 100 \rceil$ column of $\K \in \RB^{n\times n}$ selected by uniform sampling.}
	\label{fig:kernel_approx_uniform}
\end{figure}

\begin{figure}[!ht]
	\begin{center}
		\centering
		\includegraphics[width=0.99\textwidth]{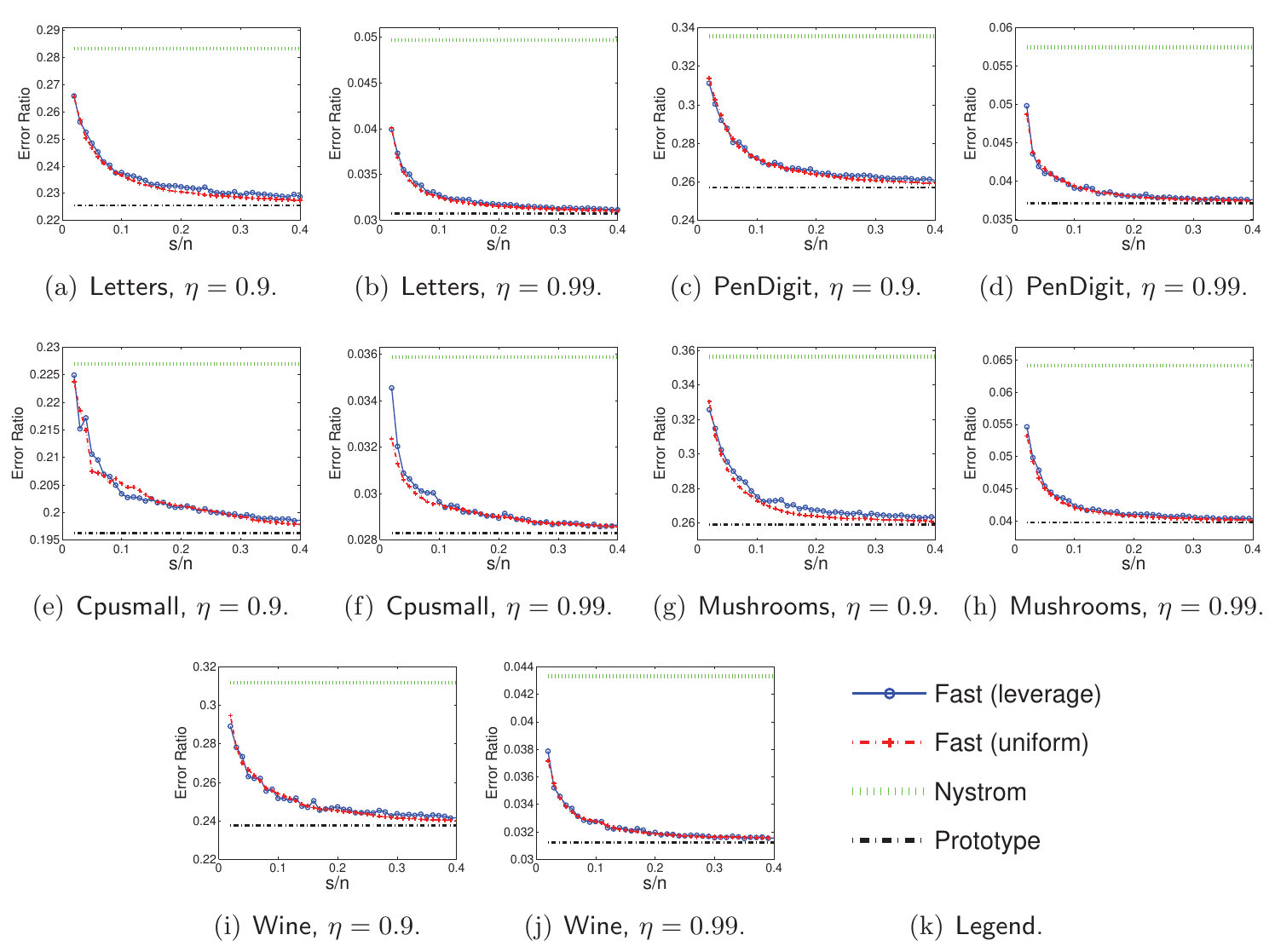}
	\end{center}
	\caption{The plot of $\frac{s}{n}$ against the approximation error $\|\K - \C \U \C^T\|_F^2 / \|\K\|_F^2$,
		where $\C$ contains $c = \lceil n / 100 \rceil$ column of $\K \in \RB^{n\times n}$ selected by the uniform+adaptive$^2$ sampling algorithm~\citep{wang2014modified}.}
	\label{fig:kernel_approx_adaptive}
\end{figure}

\section{Experiments} \label{sec:experiments}

In this section we conduct several sets of illustrative experiments to show the effect of the $\U$ matrix.
We compare the three methods with different settings of $c$ and $s$.
We do not compare with other kernel approximation methods for the
reasons stated in Section~\ref{sec:related_nystrom_less}.


\subsection{Setup}

Let $\X=[\x_1, \ldots, \x_n]$ be the $d\times n$ data matrix, and
$\K$ be the RBF kernel matrix with each entry computed by $K_{i j} = \exp \big(- \frac{\|\x_i - \x_j\|_2^2 }{2\sigma^2} \big)$ where $\sigma$ is the scaling parameter.

When comparing the kernel approximation error $\| \K - \C \U \C^T \|_F^2$,
we set the scaling parameter $\sigma$ in the following way.
We let $k = \lceil n/100 \rceil$ and define
\begin{small}
	\[
	\eta \; = \; \frac{\|\K_k\|_F^2}{\|\K\|_F^2}
	\; = \; \frac{\sum_{i=1}^{k}\sigma_i^2 (\K)}{\sum_{i=1}^{n}\sigma_i^2 (\K)} ,
	\]%
\end{small}%
which indicate the importance of the top one percent singular values of $\K$.
In general $\eta$ grows with $\sigma$.
We set $\sigma$ such that $\eta = 0.9$ or $0.99$.

All the methods are implemented in MATLAB and run on a laptop with Intel i5 2.5GHz CUP and 8GB RAM.
To compare the running time, we set MATLAB in the single thread mode.


\subsection{Kernel Approximation Accuracy}

We conduct experiments on several datasets available at the LIBSVM site.
The datasets are summarized in Table~\ref{tab:datasets}.
In this set of experiments, we study the effect of the $\U$ matrices.
We use two methods to form $\C \in \RB^{n\times c}$: uniform sampling and the uniform+adaptive$^2$ sampling \citep{wang2014modified};
we fix $c = \lceil n/100 \rceil$.
For our fast model, we use two kinds of sketching matrices $\S \in \RB^{n\times s}$: uniform sampling and leverage score sampling;
we vary $s$ from $2c$ to $40c$.
We plot $\frac{s}{n}$ against the approximation error $\|\K - \C \U \C^T\|_F^2 / \|\K\|_F^2$ in Figures~\ref{fig:kernel_approx_uniform} and \ref{fig:kernel_approx_adaptive}.
The \nystrom method and the prototype model are included for comparison.

Figures~\ref{fig:kernel_approx_uniform} and \ref{fig:kernel_approx_adaptive} show that
the fast SPSD matrix approximation model is significantly better than the \nystrom method
when $s$ is slightly larger than $c$, e.g., $s = 2c$.
Recall that the prototype model is a special case of the fast model where $s = n$.
We can see that the fast model is nearly as accurate as the prototype model
when $s$ is far smaller than $n$, e.g., $s = 0.2 n$.

The results also show that using uniform sampling and leverage score sampling to generate $\S$ does not make much difference.
Thus, in practice, one can simply compute $\S$ by uniform sampling.

By comparing the results in Figures~\ref{fig:kernel_approx_uniform} and \ref{fig:kernel_approx_adaptive},
we can see that computing $\C$ by uniform+adaptive$^2$ sampling is substantially better than uniform sampling.
However, adaptive sampling requires the full observation of $\K$;
thus with uniform+adaptive$^2$ sampling, our fast model does not have much advantage over the prototype model in terms of time efficiency.
Our  main focus of this work is the $\U$ matrix,
so in the rest of the experiments we simply use uniform sampling to compute $\C$.

\begin{figure}[!ht]
	\begin{center}
		\centering
		\includegraphics[width=0.99\textwidth]{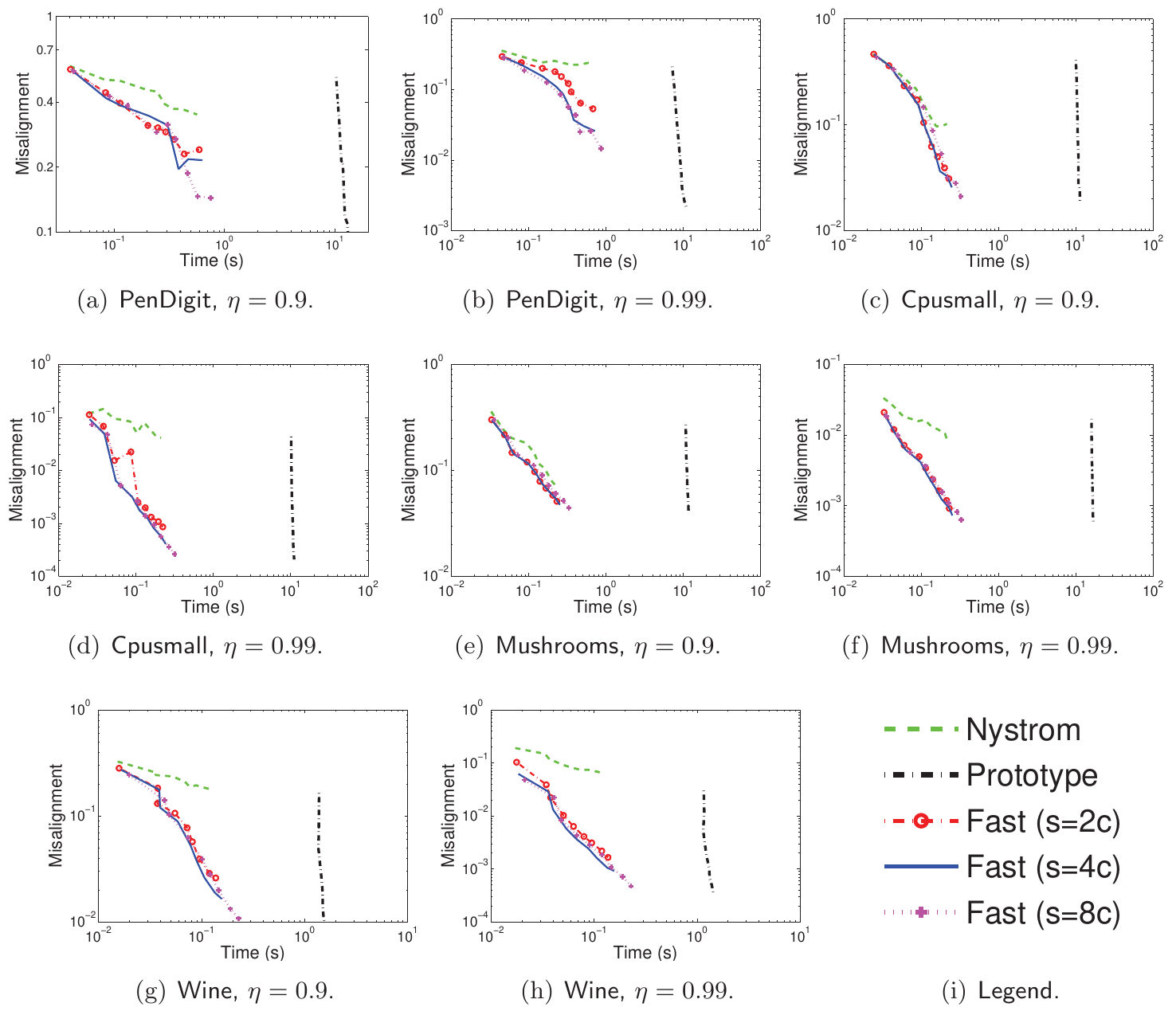}
	\end{center}
	\vspace{-3mm}
	\caption{The plot of (log-scale) elapsed time against the (log-scale) misalignment defined in (\ref{eq:misalignment}).}
	\label{fig:time_vs_misalignment_uniform}
\end{figure}

\begin{figure}[!ht]
	\begin{center}
		\centering
		\includegraphics[width=0.99\textwidth]{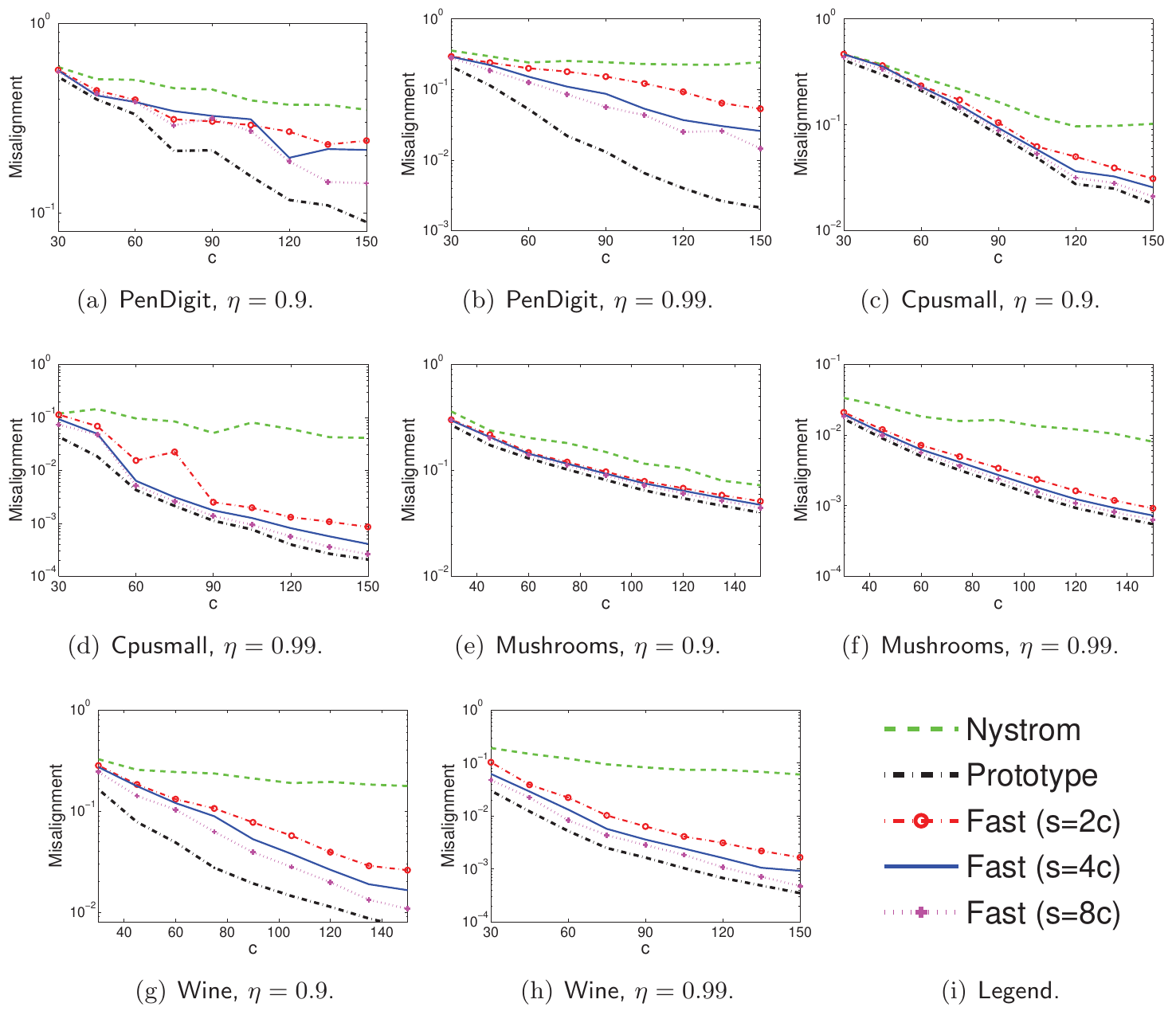}
	\end{center}
	\vspace{-3mm}
	\caption{The plot of $c$ against the (log-scale) misalignment defined in (\ref{eq:misalignment}).}
	\label{fig:memory_vs_misalignment_uniform}
\end{figure}


\subsection{Approximate Kernel Principal Component Analysis}

We apply the three methods to approximately compute kernel principal component analysis (KPCA),
and contrast with the exact solution.
The experiment setting follows \citet{zhang2010clustered}.
We fix $k$ and vary $c$.
For our fast model, we set $s=2c$, $4c$, or $8c$.
Since computing $\S$ by uniform sampling or leverage score sampling yields the same empirical performance,
we use only uniform sampling.
Let $\C \U \C^T$ be the low-rank approximation formed by the three methods.
Let ${\tilde{\V}} {\tilde{\Lam}}{\tilde{\V}}^T$ be the $k$-eigenvalue decomposition of $\C \U \C^T$.

\subsubsection{Quality of the Approximate Eigenvectors}

Let $\U_{\K, k} \in \RB^{n\times k}$ contain the top $k$ eigenvectors of $\K$.
In the first set of experiments,
we measure the distance between $\U_{\K, k}$ and the approximate eigenvectors $\tilde{\V}$ by
\begin{eqnarray} \label{eq:misalignment}
\textrm{Misalignment}
& = &
\frac{1}{k} \big\| \U_{\K, k} - \tilde{\V}\tilde{\V}^T \U_{\K, k} \big\|_F^2
\; \; (\in \; [0, 1]).
\end{eqnarray}
Small misalignment indicates high approximation quality.
We fix $k = 3$.

We conduct experiments on the datasets summarized in Table~\ref{tab:datasets}.
We record the elapsed time of the entire procedure---computing (part of) the kernel matrix,
computing $\C$ and $\U$ by the kernel approximation methods,
computing the $k$-eigenvalue decomposition of $\C \U \C^T$.
We plot the elapsed time against the misalignment defined in Figure~\ref{fig:time_vs_misalignment_uniform}.
Results on the Letters dataset are not reported because the exact $k$-eigenvalue decomposition on MATLAB ran out of memory,
making it impossible to calculate the misalignment.

At the end of Section~\ref{sec:related_nystrom_most} we have mentioned the
importance of memory cost of the kernel approximation methods
and that all three compared methods  cost $\OM (n c + n d)$ memory.
Since $n$ and $d$ are fixed,
we plot $c$ against the misalignment in Figure~\ref{fig:memory_vs_misalignment_uniform} to show the memory efficiency.

The results show that using the same amount of time or memory,
the misalignment incurred by the \nystrom method is usually tens of times higher than our fast model.
The experiment also shows that with fixed $c$, the fast model is nearly as accuracy as the prototype model when $s=8c \ll n$.

\begin{table}[t]
	\caption{A summary of the datasets for clustering and classification.}
	\label{tab:dataset_classification}
	\begin{center}
		\begin{footnotesize}
			\begin{tabular}{l c c c c c  c}
				\hline
				{\bf Dataset}               & ~~MNIST~~~&~~Pendigit~& ~~USPS~~&~Mushrooms~&~Gisette~&~~DNA~\\
				\hline
				{\bf \#Instance}            &  $60,000$ &  $10,992$ & $9,298$ &  $8,124$  & $7,000$ & $2,000$   \\
				{\bf \#Attribute}           &   $780$   &   $16$    & $256$   &   $112$   & $5,000$ &  $180$    \\
				{\bf \#Class}               &   $10$    &   $10$    & $10$    &    $2$    & $2$     &   $3$     \\
				{\bf Scaling Parameter} $\sigma$&    $10$   &    $0.7$  & $15$    &    $3$    &  $50$   &   $4$     \\
				\hline
			\end{tabular}
		\end{footnotesize}
	\end{center}
\end{table}

\begin{figure}[!ht]
	\begin{center}
		\centering
		\includegraphics[width=0.99\textwidth]{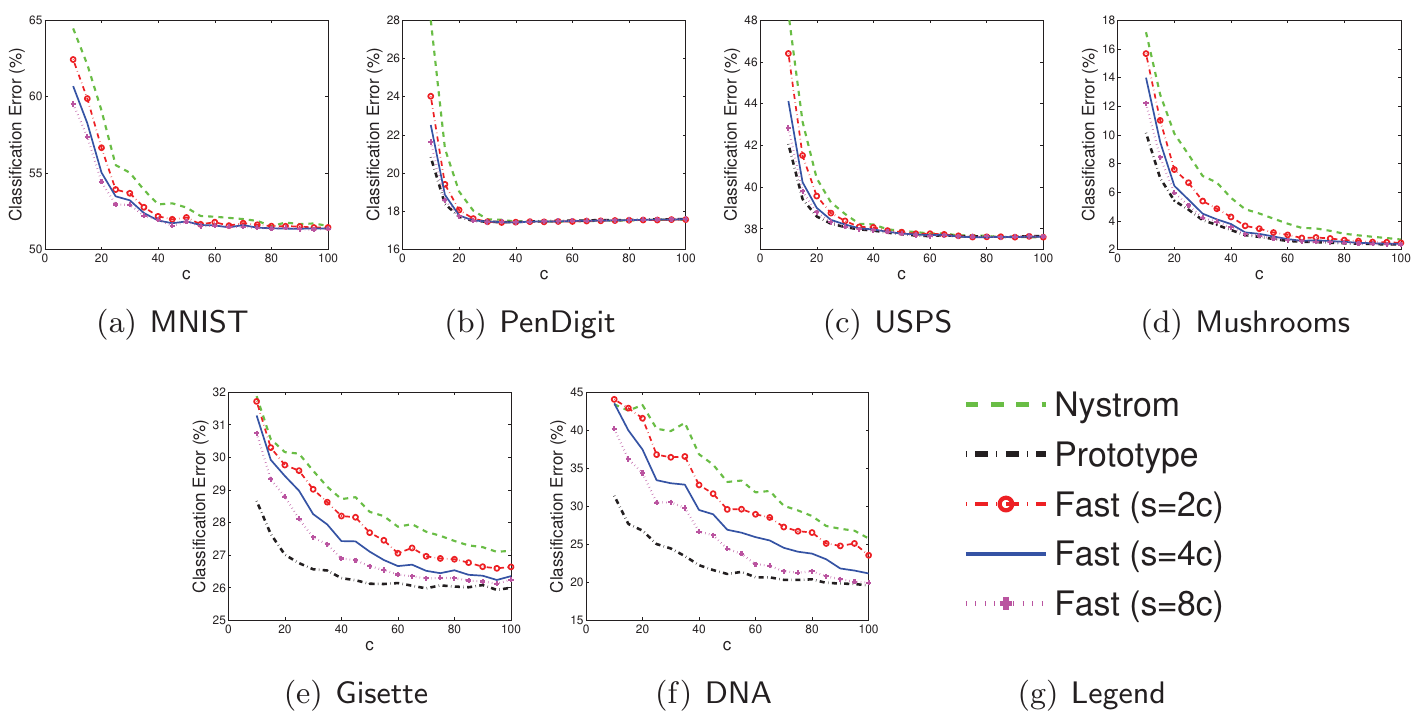}
	\end{center}
	\caption{The plot of $c$ against the classification error. Here $k=3$.}
	\label{fig:knn_memory_k3}
\end{figure}

\begin{figure}[!ht]
	\begin{center}
		\centering
		\includegraphics[width=0.99\textwidth]{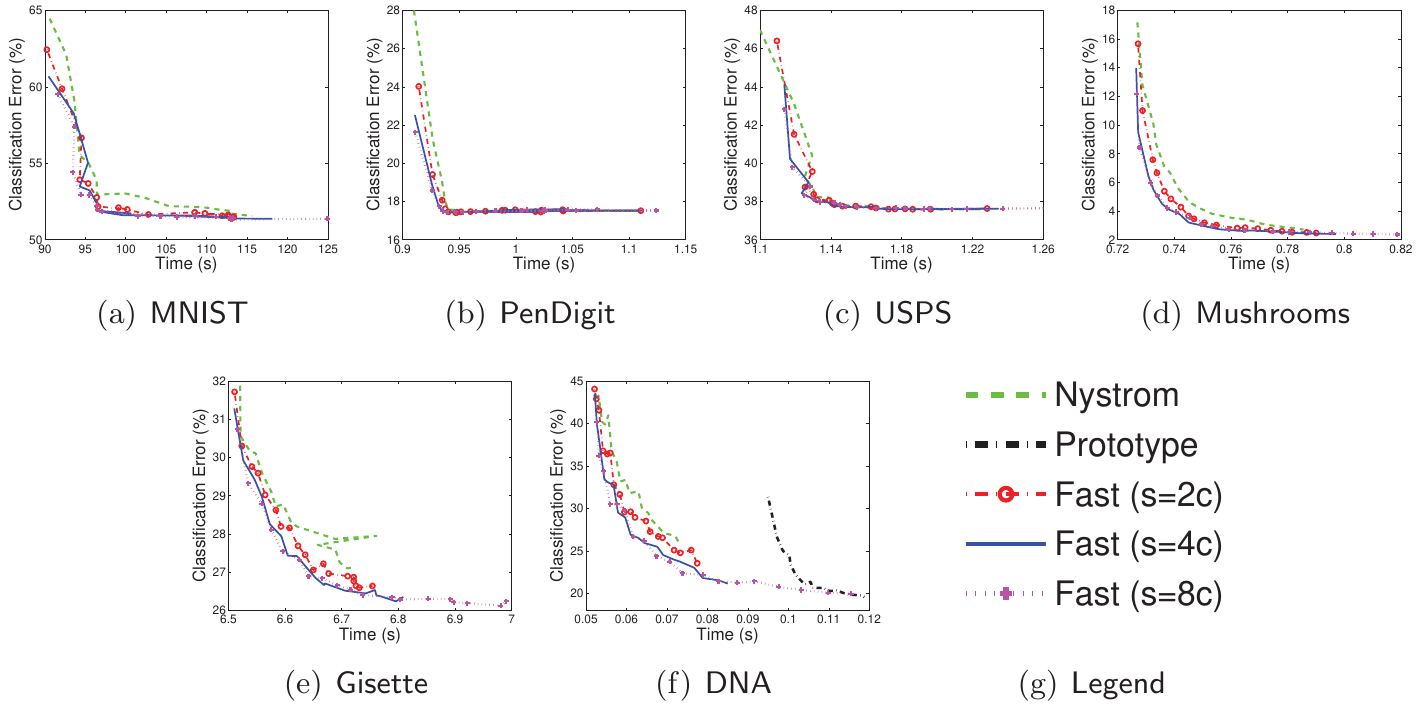}
	\end{center}
	\caption{The plot of elapsed time against the classification error. Here $k=3$.}
	\label{fig:knn_time_k3}
\end{figure}

\begin{figure}[!ht]
	\begin{center}
		\centering
		\includegraphics[width=0.99\textwidth]{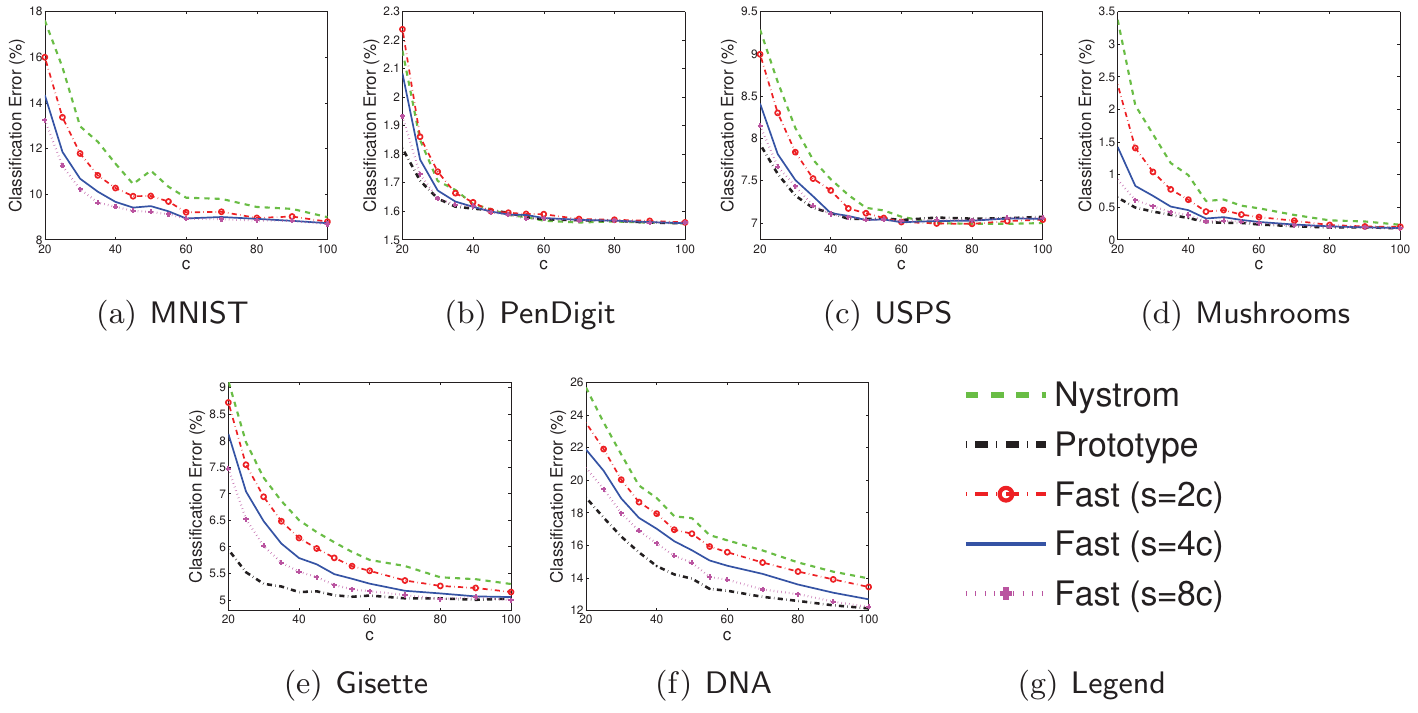}
	\end{center}
	\caption{The plot of $c$ against the classification error. Here $k=10$.}
	\label{fig:knn_memory_k10}
\end{figure}

\begin{figure}[!ht]
	\begin{center}
		\centering
		\includegraphics[width=0.99\textwidth]{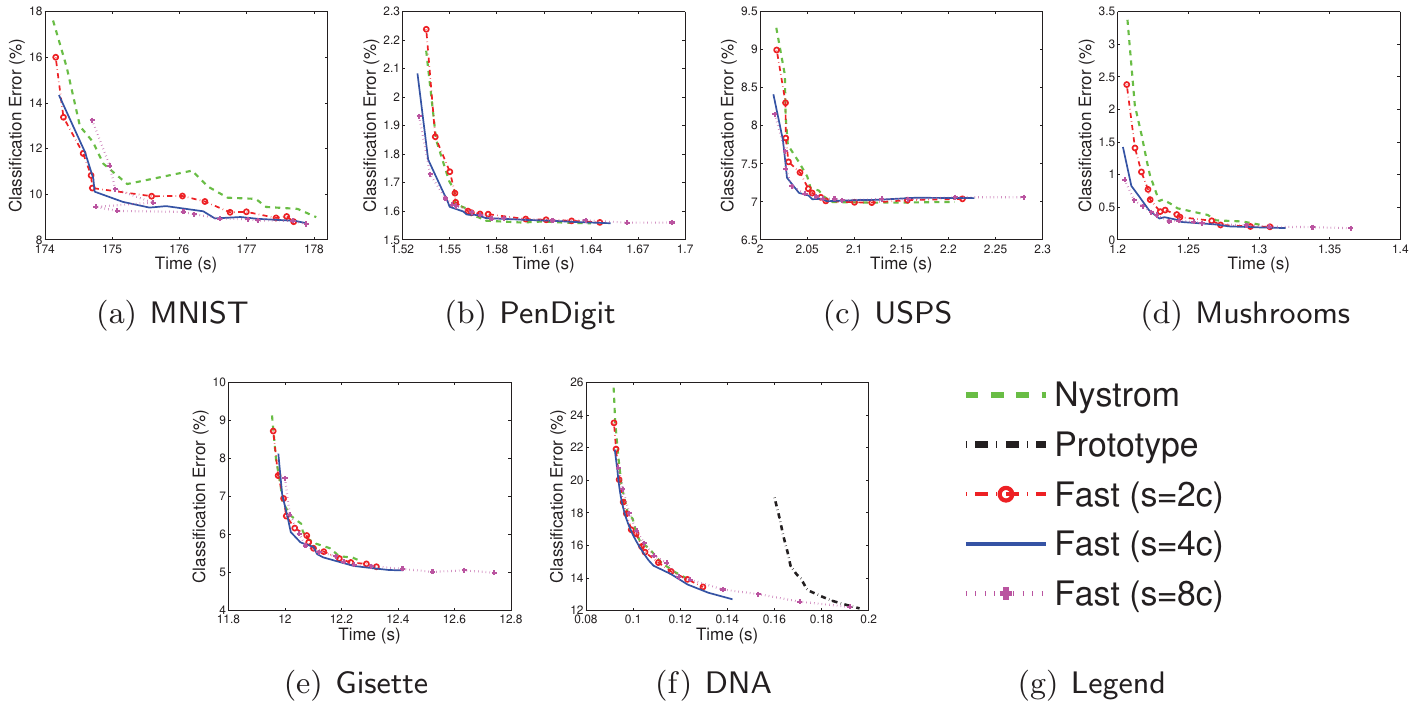}
	\end{center}
	\caption{The plot of elapsed time against the classification error. Here $k=10$.}
	\label{fig:knn_time_k10}
\end{figure}

\subsubsection{Quality of the Generalization}

In the second set of experiments,
we test the generalization performance of the kernel approximation methods on classification tasks.
The classification datasets are described in Table~\ref{tab:dataset_classification}.
For each dataset, we randomly sample $n_1 = 50\% n$ data points for training and the rest $50\% n$ for test.
In this set of experiments, we set $k = 3$ and $k = 10$.

We let $\K\in \RB^{n_1 \times n_1}$ be the RBF kernel matrix of the training data
and $\k (\x) \in \RB^{n_1}$ be defined by $[\k (\x)]_i = \exp \big(-\frac{\|\x - \x_i \|_2^2}{2\sigma^2} \big)$,
where $\x_i$ is the $i$-th training data point.
In the training step, we approximately compute the top $k$ eigenvalues and eigenvectors,
and denote $\tilde\Lam \in \RB^{k\times k}$ and $\tilde\V \in \RB^{n_1\times k}$.
The feature vector (extracted by KPCA) of the $i$-th training data point is the $i$-th column of ${\tilde\Lam}^{0.5} {\tilde\V}^T $.
In the test step, the feature vector of test data $\x$ is ${\tilde\Lam}^{-0.5} {\tilde\V}^T \k (\x)$.
Then we put the training labels and training and test features into the MATLAB K-nearest-neighbor classifier
\texttt{knnclassify} to classify the test data.
We fix the number of nearest neighbors to be $10$.
The scaling parameters of each dataset are listed in Table~\ref{tab:dataset_classification}.
Since the kernel approximation methods are randomized,
we repeat the training and test procedure $20$ times and record the average elapsed time and average classification error.

We plot $c$ against the classification error in Figures~\ref{fig:knn_memory_k3} and \ref{fig:knn_memory_k10},
and plot the elapsed time (excluding the time cost of KNN) against the classification error in Figures~\ref{fig:knn_time_k3} and \ref{fig:knn_time_k10}.
Using the same amount of memory,
the fast model is significantly better than the \nystrom method,
especially when $c$ is small.
Using the same amount of time,
the fast model outperforms the \nystrom method by one to two percent of classification error in many cases,
and it is at least as good as the \nystrom method in the rest cases.
This set of experiments also indicate that the fast model with $s = 4c$ or $8c$ has the best empirical performance.

\begin{figure}[!ht]
	\begin{center}
		\centering
		\includegraphics[width=0.99\textwidth]{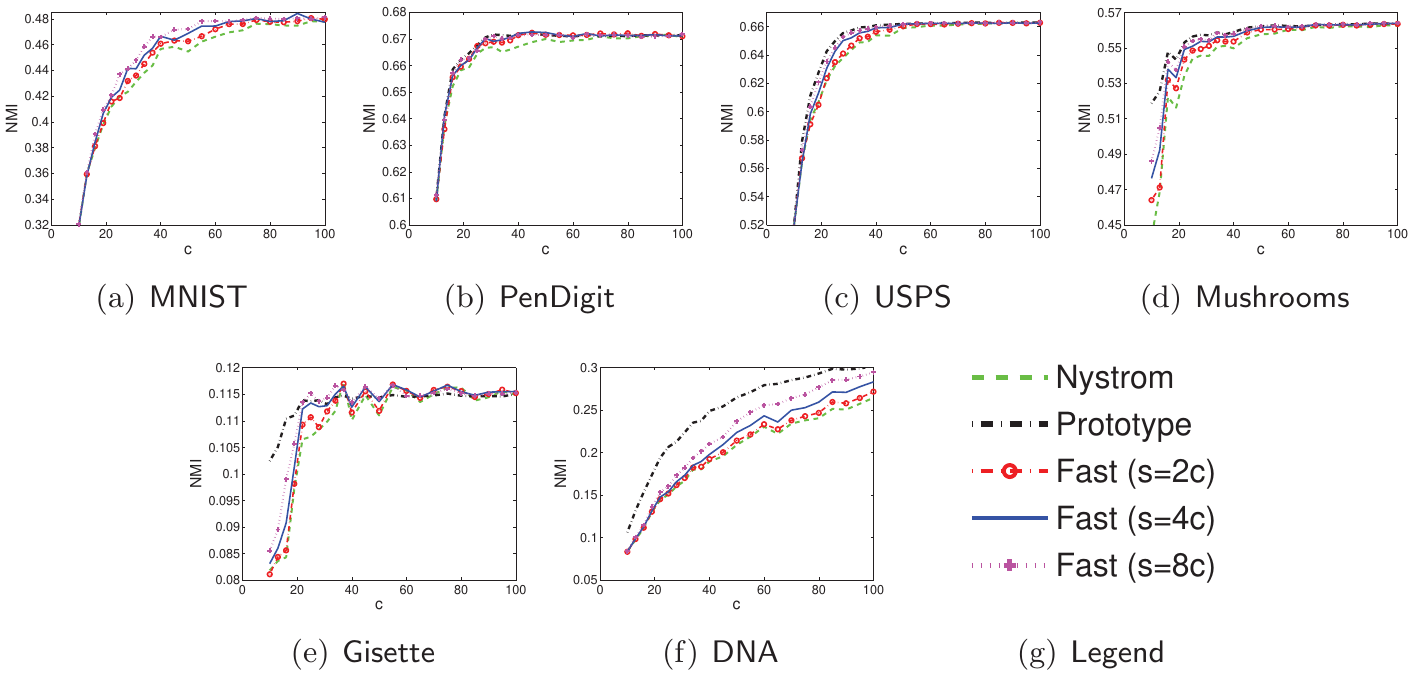}
	\end{center}
	\caption{The plot of $c$ against NMI.}
	\label{fig:sc_memory}
\end{figure}

\begin{figure}[!ht]
	\begin{center}
		\centering
		\includegraphics[width=0.99\textwidth]{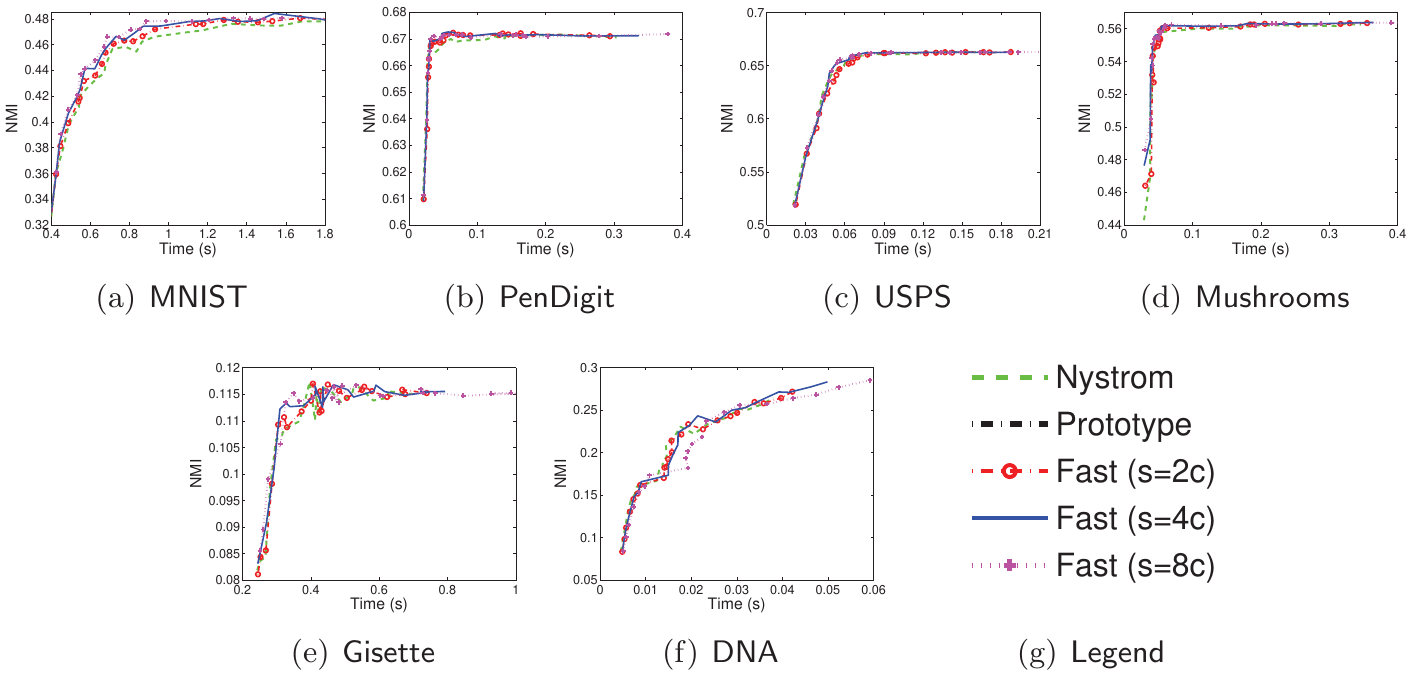}
	\end{center}
	\caption{The plot of elapsed time against NMI.}
	\label{fig:sc_time}
\end{figure}


\subsection{Approximate Spectral Clustering}

Following the work of \cite{fowlkes2004spectral},
we evaluate the performance of the kernel approximation methods on the spectral clustering task.
We conduct experiments on the datasets summarized in Table~\ref{tab:dataset_classification}.

We describe the approximate spectral clustering in the following.
The target is to cluster $n$ data points into $k$ classes.
We use the RBF kernel matrix $\K$ as the weigh matrix and let
$\C \U \C^T \approx \K$ be the low-rank approximation.
The degree matrix $\D = \diag (\d)$ is a diagonal matrix with $\d = \C \U \C^T \1_n$,
and the normalized graph Laplacian is $\LL = \I_n - \D^{-1/2} (\C \U \C^T ) \D^{-1/2}$.
The bottom $k$ eigenvectors of $\LL$ are the top $k$ eigenvectors of
\[
\underbrace{(\D^{-1/2}\C)}_{n\times c} \underbrace{\U}_{c\times c} \underbrace{(\D^{-1/2}\C)^T}_{c\times n} ,
\]
which can be efficiently computed according to Appendix~\ref{sec:approx_eigen_linear}.
We denote the top $k$ eigenvectors by $\tilde\V \in \RB^{n\times k}$.
We normalize the rows of $\tilde\V$ and
take the normalized rows of $\tilde\V$ as the input of the $k$-means clustering.
Since the matrix approximation methods are randomized, we repeat this procedure $20$ times
and record the average elapsed time and the average normalized mutual information (NMI)\footnote{NMI
	is a standard metric of clustering.
	NMI is between 0 and 1.
	Big NMI indicates good clustering performance.}
of clustering.

We plot $c$ against NMI in Figure~\ref{fig:sc_memory}
and the elapsed time (excluding the time cost of $k$-means) against NMI in Figure~\ref{fig:sc_time}.
Figure~\ref{fig:sc_memory} shows that using the same amount of memory,
the performance of the fast model is better than the \nystrom method.
Using the same amount of time,
the fast model and the \nystrom method have almost the same performance,
and they are both better than the prototype model.


\section{Concluding Remarks}

In this paper we have studied  the fast SPSD matrix approximation model for approximating large-scale SPSD matrix.
We have shown that our fast model potentially costs time linear in $n$,
while it is nearly as accurate as the best possible approximation.
The fast model is theoretically better than the \nystrom method and the prototype model because the latter two methods cost time quadratic in $n$ to attain the same theoretical guarantee.
Experiments show that our fast model is nearly as accurate as the prototype model and nearly as efficient as the \nystrom method.

The technique of the fast model can be straightforwardly applied to speed up the CUR matrix decomposition,
and theoretical analysis shows that the accuracy is almost unaffected.
In this way, for any $m\times n$ large-scale matrix,
the time cost of computing the $\U$ matrix drops from $\OM (m n)$ to $\OM (\min\{m, n\})$.



\acks{%
	We thank the anonymous reviewer for their helpful feedbacks.
	Shusen Wang acknowledges the support of Cray Inc.,
	the Defense Advanced Research Projects Agency, the National Science Foundation, and the Baidu Scholarship.
	Zhihua Zhang acknowledges the support of National Natural Science Foundation of China (No.\ 61572017)
	and MSRA Collaborative Research Grant awards.
        Tong Zhang acknowledges NSF IIS-1250985, NSF IIS-1407939,
        and NIH R01AI116744.
}


\appendix

\section{Approximately Solving the Eigenvalue Decomposition and Matrix Inversion} \label{sec:approx_eigen_linear}

In this section we show how to use the SPSD matrix approximation methods to speed up eigenvalue decomposition and linear system.
The two lemmas are well known results.
We show them here for the sake of self-containing.

\begin{lemma}[Approximate Eigenvalue Decomposition]
	Given $\C \in \RB^{n\times c}$ and $\U \in \RB^{c\times c}$.
	Then the eigenvalue decomposition of $\tilde\K = \C \U \C^T$ can be computed in time $\OM (n c^2)$.
\end{lemma}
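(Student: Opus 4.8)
The plan is to reduce the $n\times n$ eigenproblem to a $c\times c$ one by orthonormalizing the columns of $\C$. First I would compute a thin QR factorization $\C = \Q \R$, where $\Q \in \RB^{n\times c}$ has orthonormal columns and $\R \in \RB^{c\times c}$ is upper triangular; this costs $\OM(nc^2)$ time. (If $\C$ is rank-deficient, replace QR by a rank-revealing QR or by the condensed SVD $\C = \U_C \Si_C \V_C^T$ at the same asymptotic cost, and carry $\rk(\C)$ in place of $c$ throughout.) Then $\tilde\K = \C\U\C^T = \Q (\R \U \R^T) \Q^T$, so it suffices to handle the small core matrix $\M \triangleq \R \U \R^T \in \RB^{c\times c}$, which is formed by two $c\times c$ multiplications in $\OM(c^3)\le\OM(nc^2)$ time.

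Next I would compute the symmetric eigendecomposition $\M = \hat\V \Lam \hat\V^T$ of the small matrix, where $\hat\V \in \RB^{c\times c}$ is orthogonal and $\Lam$ is diagonal; since $\M$ is only $c\times c$ this costs $\OM(c^3) = \OM(nc^2)$ time. (Here we use that $\tilde\K$ is symmetric, which is what makes the eigenvalue decomposition into orthonormal eigenvectors meaningful; equivalently $\U = \U^T$, as in every application in this paper, and then $\M$ is genuinely symmetric — otherwise one may first replace $\M$ by $\frac12(\M+\M^T)$ without changing $\Q\M\Q^T$.) Substituting back gives $\tilde\K = (\Q \hat\V) \Lam (\Q \hat\V)^T$, and the matrix $\V \triangleq \Q \hat\V \in \RB^{n\times c}$ is formed by a single $(n\times c)\cdot(c\times c)$ product in $\OM(nc^2)$ time. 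It has orthonormal columns because $\V^T \V = \hat\V^T (\Q^T \Q) \hat\V = \hat\V^T \hat\V = \I_c$, so $\tilde\K = \V \Lam \V^T$ is the desired eigenvalue decomposition, augmented on $\range(\C)^\perp$ by the zero eigenvalue with multiplicity $n-c$.

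Adding the four contributions — the QR of $\C$, forming $\M$, eigendecomposing $\M$, and forming $\Q\hat\V$ — yields $\OM(nc^2) + \OM(c^3) + \OM(c^3) + \OM(nc^2) = \OM(nc^2)$ since $c \le n$. The construction is completely explicit, so there is no genuine obstacle here; the only points deserving a line of care are (i) a rank-deficient $\C$, which the rank-revealing variant disposes of, and (ii) the implicit symmetry of $\tilde\K$, which licenses working with a symmetric $c\times c$ core and guarantees the orthonormality of the returned eigenvectors.
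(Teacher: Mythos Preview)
Your argument is correct and follows essentially the same approach as the paper: orthonormalize the columns of $\C$, reduce to a $c\times c$ symmetric eigenproblem, and lift the result back via an $(n\times c)\cdot(c\times c)$ product. The only cosmetic difference is that the paper orthonormalizes via the condensed SVD $\C=\U_\C\Si_\C\V_\C^T$ (forming the core $\Z=(\Si_\C\V_\C^T)\U(\Si_\C\V_\C^T)^T$) rather than your thin QR, which is an implementation choice with identical $\OM(nc^2)$ cost and which you already flag as the natural fallback for rank-deficient $\C$.
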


\begin{proof}
	It cost $\OM (n c^2)$ time to compute the SVD
	\[
	\C = \underbrace{\U_\C}_{n\times c} \underbrace{\Si_\C}_{c\times c} \underbrace{\V_\C^T}_{c\times c}
	\]
	and $\OM (c^3)$ time to compute $\Z = (\Si_\C \V_\C^T) \U (\Si_\C \V_\C^T)^T \in \RB^{c \times c}$.
	It costs $\OM (c^3)$ time to compute the eigenvalue decomposition $\Z = \V_\Z \Lam_\Z \V_\Z^T$.
	Combining the results above, we obtain
	\begin{eqnarray*}
		\C \U \C^T
		& = & (\U_\C \Si_\C \V_\C^T) \U (\U_\C \Si_\C \V_\C^T)^T \\
		& = & \U_\C \Z \U_\C^T
		\; = \; (\U_\C \V_\Z) \Lam_\Z (\U_\C \V_\Z)^T .
	\end{eqnarray*}
	It then cost time $\OM (n c^2)$ to compute the matrix product $\U_\C \V_\Z$.
	Since $(\U_\C \V_\Z)$ has orthonormal columns and $\Lam_\Z$ is diagonal matrix,
	the eigenvalue decomposition of $\C \U \C^T $ is solved.
	The total time cost is $\OM(n c^2) + \OM(c^3) = \OM(n c^2)$.
\end{proof}

\begin{lemma}[Approximately Solving Matrix Inversion]
	Given $\C \in \RB^{n\times c}$, SPDS matrix $\U \in \RB^{c\times c}$, vector $\y \in \RB^n$,
	and arbitrary positive real number $\alpha$.
	Then it costs time $\OM (n c^2)$ to solve the $n\times n$ linear system
	$(\C \U \C^T + \alpha \I_n) \w = \y$ to obtain $\w\in \RB^n$.
	
	In addition, if the SVD of $\C$ is given, then it takes only $\OM (c^3 + nc)$ time to solve the linear system.
\end{lemma}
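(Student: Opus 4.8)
The plan is to recognize $\C \U \C^T + \alpha \I_n$ as a low-rank perturbation of the scaled identity $\alpha \I_n$ and to invert it with the Sherman--Morrison--Woodbury identity. Since $\U$ is SPSD, $\C \U \C^T \succeq \0$, so $\C \U \C^T + \alpha \I_n \succeq \alpha \I_n \succ \0$ is invertible and the system has a unique solution. First I would compute an eigendecomposition of $\U$ and write $\U = \LL \LL^T$ with $\LL \in \RB^{c \times c'}$, $c' = \rk(\U) \leq c$; this costs $\OM(c^3)$. Setting $\B = \C \LL \in \RB^{n \times c'}$, the system becomes $(\alpha \I_n + \B \B^T) \w = \y$.

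Next, Woodbury gives $(\alpha \I_n + \B \B^T)^{-1} = \tfrac{1}{\alpha} \I_n - \tfrac{1}{\alpha} \B (\alpha \I_{c'} + \B^T \B)^{-1} \B^T$, where $\alpha \I_{c'} + \B^T \B \succ \0$ is invertible, hence $\w = \tfrac{1}{\alpha}\y - \tfrac{1}{\alpha} \B (\alpha \I_{c'} + \B^T \B)^{-1} (\B^T \y)$. Evaluating this right-to-left: forming $\B$ costs $\OM(n c c')$, forming $\B^T \B$ costs $\OM(n c'^2)$, the $c' \times c'$ solve costs $\OM(c'^3)$, and the remaining matrix--vector products cost $\OM(n c')$; since $c' \leq c \leq n$ the total is $\OM(n c^2)$, proving the first claim. (When $\U$ is strictly positive definite one may instead apply Woodbury directly to $\C \U \C^T$, obtaining $(\alpha\I_n + \C\U\C^T)^{-1} = \tfrac1\alpha\I_n - \tfrac1{\alpha^2}\C(\U^{-1} + \tfrac1\alpha\C^T\C)^{-1}\C^T$, with the same cost.)

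For the refinement, suppose the condensed SVD $\C = \U_\C \Si_\C \V_\C^T$ is given, with $\U_\C \in \RB^{n \times \rho}$ ($\rho = \rk(\C) \leq c$) having orthonormal columns. Then $\C \U \C^T = \U_\C \M \U_\C^T$ with $\M = \Si_\C \V_\C^T \U \V_\C \Si_\C \in \RB^{\rho \times \rho}$ SPSD, formed in $\OM(c^3)$ time from products of $\rho \times \rho$ and $c \times c$ matrices. Diagonalize $\M = \V_\M \Lam_\M \V_\M^T$ in $\OM(c^3)$ time; then $\U_\C \V_\M$ has orthonormal columns, so the eigenvalues of $\alpha \I_n + (\U_\C \V_\M) \Lam_\M (\U_\C \V_\M)^T$ are $\alpha + (\Lam_\M)_{ii}$ along $\range(\U_\C \V_\M)$ and $\alpha$ on the orthogonal complement, which yields
\[
(\alpha \I_n + \C \U \C^T)^{-1} = \tfrac{1}{\alpha} \I_n + (\U_\C \V_\M) \big[ (\alpha \I_\rho + \Lam_\M)^{-1} - \tfrac{1}{\alpha} \I_\rho \big] (\U_\C \V_\M)^T .
\]
Applying this to $\y$ without ever forming the product $\U_\C \V_\M$ --- first $\U_\C^T \y$ in $\OM(n\rho)$ time, then the $\rho \times \rho$ operations in $\OM(\rho^2)$ time, then a final multiplication by $\U_\C$ in $\OM(n\rho)$ time --- gives $\w$ in total time $\OM(c^3 + nc)$.

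The only delicate point, rather than a genuine obstacle, is bookkeeping: one must allow $\U$ (and $\C$) to be rank-deficient, so the ``$\U^{-1}$'' form of Woodbury is replaced by the $\B \B^T$ form or by the explicit spectral decomposition above; and in the refinement one must keep $\U_\C \V_\M$ in factored form and apply the operators to $\y$ in the correct order, since materializing $\U_\C \V_\M$ would cost $\OM(nc^2)$ and break the $\OM(c^3 + nc)$ bound. Everything else is a routine flop count.
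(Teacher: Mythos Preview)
Your proof is correct and, for the main $\OM(nc^2)$ claim, takes essentially the same Sherman--Morrison--Woodbury route as the paper; you are in fact slightly more careful than the paper, since you handle the semidefinite case by factoring $\U=\LL\LL^T$, whereas the paper writes $\U^{-1}$ directly.

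For the $\OM(c^3+nc)$ refinement your argument diverges from the paper's. The paper stays with the same Woodbury formula
\[
\w^\star = \alpha^{-1}\y - \alpha^{-1}\C(\alpha\U^{-1}+\C^T\C)^{-1}\C^T\y
\]
and simply observes that, once the SVD $\C=\U_\C\Si_\C\V_\C^T$ is given, the only $\OM(nc^2)$ step --- forming $\C^T\C$ --- collapses to $\V_\C\Si_\C^2\V_\C^T$ in $\OM(c^3)$ time, after which every remaining operation is either $c\times c$ or a single multiply by $\C$ or $\C^T$. Your approach instead builds an explicit eigendecomposition of $\C\U\C^T$ via $\M=\Si_\C\V_\C^T\U\V_\C\Si_\C$ and applies the resulting spectral inverse to $\y$ in factored form. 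Both are valid and hit the same cost; the paper's version is shorter and reuses the first-part formula verbatim, while yours has the advantage of working without modification when $\U$ is singular and of producing the eigendecomposition of $\C\U\C^T+\alpha\I_n$ as a byproduct.
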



\begin{proof}
	Since the matrix $(\C \U \C^T + \alpha \I_n)$ is nonsingular when $\alpha > 0$ and $\U$ is SPSD,
	the solution is $\w^\star = (\C \U \C^T + \alpha \I_n)^{-1} \y$.
	Instead of directly computing the matrix inversion,
	we can expand the matrix inversion by the Sherman-Morrison-Woodbury matrix identity and obtain
		\begin{eqnarray*}
			(\C \U \C^T + \alpha \I_n)^{-1}
			= \alpha^{-1} \I_n -  \alpha^{-1} \C (\alpha \U^{-1} + \C^T \C)^{-1} \C^T .
		\end{eqnarray*}
	Thus the solution to the linear system is
	\begin{small}
		\begin{eqnarray*}
			\w^\star
			= \alpha^{-1} \y -  \alpha^{-1} \underbrace{\C}_{n\times c} \underbrace{(\alpha \U^{-1} + \C^T \C)^{-1}}_{c\times c} \underbrace{\C^T}_{c\times n} \y .
		\end{eqnarray*}
	\end{small}%

	Suppose we are given only $\C$ and $\U$.
	The matrix multiplication $\C^T \C$ costs time $\OM(n c^2)$,
	the matrix inversions cost time $\OM (c^3)$,
	and multiplying matrix with vector costs time $\OM (n c)$.
	Thus the total time cost is $\OM (n c^2) + \OM (c^3) + \OM (n c) = \OM (n c^2)$.
	
	Suppose we are given $\U$ and the SVD $\C = \U_\C \Si_\C \V_\C^T$.
	The matrix product
	\[
	\C^T \C = \V_\C \Si_\C \U_\C^T \U_\C \Si_\C \V_\C = \V_\C \Si_\C^2 \V_\C
	\]%
	can be computed in time $\OM (c^3)$.
	Thus the total time cost is merely $\OM (c^3 + nc)$.
\end{proof}


\section{Proof of Theorem~\ref{thm:linear}} \label{sec:thm_linear}

The prototype model trivially satisfies requirement R1 with $\epsilon = 0$.
However, it violates requirement R2 because
computing the $\U$ matrix by solving $\min_{\U} \|\K - \C \U \C^T\|_F^2$ costs time $\OM(n^2 c)$.

For the Nystr\"om method,
we provide such an adversarial case that assumptions A1 and A2 can both be satisfied and that requirements R1 and R2 cannot hold simultaneously.
The adversarial case is the block diagonal matrix
\[
\K = \diag (\underbrace{\B, \cdots , \B}_{k \textrm{ blocks}}),
\]
where
\[
\B = (1-a) \I_p + a \1_p \1_p^T, \qquad a < 1 , \qquad \textrm{and } p = \frac{n}{k},
\]
and let $a \rightarrow 1$.
\citet{wang2014modified} showed that sampling $c = 3 k \gamma^{-1} \big( 1+o(1) \big)$ columns of $\K$ to form $\C$ makes
assumptions A1 and A2 in Question~\ref{question:linear} be satisfied.
This indicates that $\C$ is a good sketch of $\K$.
The problem is caused by the way the $\Unys$ matrix is computed.
\citet[Theorem 12]{wang2013improving} showed that to make requirement R1 in Question~\ref{question:linear} satisfied,
$c$ must be greater than $\Omega (\sqrt{n k /(\epsilon + \gamma)})$.
Thus it takes time $\OM (n c^2)  = \Omega (n^2 k / (\epsilon + \gamma) )$ to compute
the rank-$k$ eigenvalue decomposition of $\C \Unys \C^T$
or the linear system $ (\C \Unys \C^T + \alpha \I_n ) \w = \y$.
Thus, requirement R2 is violated.


\section{Proof of Lemma~\ref{lem:product}} \label{sec:proof:lem:product}

Lemma~\ref{lem:product} is a simplified version of Lemma~\ref{lem:product2}.
We prove Lemma~\ref{lem:product2} in the subsequent subsections.
In the lemma, leverage score sampling means that the sampling probabilities are proportional to the row leverage scores of $\U \in \RB^{n\times k}$.
For uniform sampling, $\mu (\U )$ is the row coherence of $\U$.

\begin{lemma} \label{lem:product2}
	Let $\U \in \RB^{n\times k}$ be any fixed matrix with orthonormal columns and
	$\B \in \RB^{n\times d}$ be any fixed matrix.
	Let $\S \in \RB^{n\times s}$ be any sketching matrix described in Table~\ref{tab:product2}.
	Then
	\begin{eqnarray*}
		\PB \Big\{ \big\|\U^T \S \S^T \U - \I_k \big\|_2 \geq \eta \Big\}
		\; \leq \; \delta_1
		& \qquad \textrm{(Property 1)}, \\
		\PB \Big\{\big\| \U^T \B - \U^T \S \S^T \B \big\|_F^2 \geq {\epsilon} \|\B\|_F^2 \Big\}
		\; \leq \; \delta_2
		& \qquad \textrm{(Property 2)} , \\
		\PB \Big\{\big\| \U^T \B - \U^T \S \S^T \B \big\|_2^2 \geq {\epsilon'} \|\B\|_2^2 + \frac{\epsilon'}{k} \|\B\|_F^2 \Big\}
		\; \leq \; \delta_3
		& \qquad \textrm{(Property 3)}.
	\end{eqnarray*}
\end{lemma}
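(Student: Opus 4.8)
The plan is to treat the three properties separately and, for each, dispatch the five sketching matrices either by a direct moment computation (the two column-sampling schemes) or by invoking the established oblivious-subspace-embedding machinery (Gaussian projection, SRHT, count sketch). The unifying observation for the sampling case is that $\S \S^T = \frac{1}{s}\sum_{t=1}^{s} \frac{1}{p_{i_t}} \e_{i_t} \e_{i_t}^T$, so that $\U^T \S \S^T \U$ and $\U^T \S \S^T \B$ are averages of $s$ i.i.d.\ terms with the correct means $\I_k$ and $\U^T \B$; the key quantity controlling everything is $\max_i \|\U^T \e_i\|_2^2 / p_i$, which equals $k$ for leverage-score sampling and $\mu(\U) k$ for uniform sampling.

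For Property~1, I would write $\U^T \S \S^T \U - \I_k = \frac{1}{s}\sum_{t=1}^s \big(\tfrac{1}{p_{i_t}}(\U^T \e_{i_t})(\U^T \e_{i_t})^T - \I_k\big)$, a sum of i.i.d.\ mean-zero self-adjoint matrices whose per-term spectral norm is bounded by $k/s$ (resp.\ $\mu(\U)k/s$), and apply the matrix Chernoff/Bernstein inequality to obtain $s = \OM(k\eta^{-2}\log(k/\delta_1))$ (resp.\ with the extra $\mu(\U)$ factor). For Gaussian projection, I would use that $\U^T \S = \frac{1}{\sqrt{s}}\U^T \G$ has i.i.d.\ $\Ncal(0,1/s)$ entries because $\U$ is column-orthonormal, so the claim is exactly a non-asymptotic singular-value bound for a Gaussian matrix. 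For SRHT and count sketch I would cite the standard subspace-embedding guarantees, which after the coherence-flattening factor ($\log n$ for SRHT) and the $k^2$ dependence for count sketch give precisely the table entries.

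For Property~2, the plan is a second-moment computation followed by Markov's inequality. Expanding $\EB\big\|\U^T \S \S^T \B - \U^T \B\big\|_F^2$ for the sampling schemes, the i.i.d.\ structure collapses the cross terms and leaves a bound of the form $\frac{1}{s}\sum_i \frac{\|\U^T \e_i\|_2^2 \,\|\e_i^T \B\|_2^2}{p_i} \le \frac{1}{s}\big(\max_i \tfrac{\|\U^T \e_i\|_2^2}{p_i}\big)\|\B\|_F^2$, i.e.\ $\frac{k}{s}\|\B\|_F^2$ or $\frac{\mu(\U)k}{s}\|\B\|_F^2$; Markov then yields $s = \OM(k(\epsilon\delta_2)^{-1})$ and its variants. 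The same second-moment bound $\EB\|\U^T \S \S^T \B - \U^T \B\|_F^2 \le \frac{c_0}{s}\|\U\|_F^2 \|\B\|_F^2 = \frac{c_0 k}{s}\|\B\|_F^2$ holds for Gaussian, SRHT, and count sketch by their JL-moment property, so Markov again closes the argument.

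For Property~3, which only Gaussian projection and SRHT satisfy, the plan is to upgrade the Frobenius bound to a spectral bound via a matrix-concentration argument that is sensitive to the stable rank of $\B$: decompose $\U^T \S \S^T \B - \U^T \B$ into a sum of independent matrices (one per row of $\S$), bound the per-term spectral norm and the matrix variance in terms of $\|\B\|_2$ and $\|\B\|_F$, and apply an intrinsic-dimension version of matrix Bernstein (or an $\epsilon$-net over the unit ball); the variance term is exactly what produces the additive $\frac{\epsilon'}{k}\|\B\|_F^2$ correction, and for SRHT one first conditions on the randomized-Hadamard step flattening the row norms of $\B$ and $\U$, which is where the extra $\log(nd/\ldots)$ and $\log(d/\delta_3)$ factors enter. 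I expect Property~3 to be the main obstacle — it is the only genuinely delicate piece, since it needs a concentration inequality exploiting intrinsic rather than ambient dimension and forces a two-stage argument for SRHT — whereas Properties~1 and 2 reduce to standard moment bounds plus Markov or to black-box subspace-embedding citations.
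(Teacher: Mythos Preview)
Your treatment of Properties~1 and~2 is essentially the paper's: matrix Chernoff for the two sampling schemes (the paper invokes this via a cited lemma), Vershynin-type singular-value bounds for Gaussian, the Hadamard-flattens-leverage reduction for SRHT, and literature citations for count sketch; the second-moment plus Markov argument for Property~2 is exactly what is used.

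For Property~3 you take a genuinely different route. You propose a direct spectral-norm concentration argument (intrinsic-dimension matrix Bernstein or an $\epsilon$-net) applied to the column-wise decomposition of $\U^T \S \S^T \B - \U^T \B$. The paper instead \emph{derives Property~3 from Property~1} via the optimal-approximate-matrix-product lemma of Cohen et~al.\ (2015): if a \emph{data-independent} sketch satisfies Property~1 for \emph{every} $2r$-column orthonormal $\U$ with failure probability $r^2\delta_3/(\tilde{k}\tilde{d})$, then Property~3 follows automatically with $r=k$. This reduction buys two things. First, it avoids the technical work you anticipate: for Gaussian the summands are unbounded so matrix Bernstein needs a truncation step or a subgaussian variant, and for SRHT your proposed ``flatten the row norms of $\B$'' conditioning does not obviously control the spectral norm of the residual. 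Second, it makes transparent \emph{why} leverage/uniform sampling and count sketch are excluded: sampling is data-dependent (its Property~1 holds only for the specific $\U$ whose leverage scores were used), and count sketch's Property~1 holds only with constant probability, so neither can be fed into the Cohen et~al.\ reduction. Your direct approach could be pushed through, but the paper's black-box route is shorter and explains the table's missing entries for free.
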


\begin{table}[!h]\setlength{\tabcolsep}{0.3pt}
	\caption{The sketch size $s$ for satisfying the three properties.
		For SRHT, we define $\lambda = \big(1+ \sqrt{8 k^{-1} \log (100n)} \big)^2$ and
		$\lambda' = \Big( 1 + \sqrt{4k^{-1} \log \frac{n {d}}{k \delta_1}} \Big)^2$.
	}
	\label{tab:product2}
	\begin{center}
		\begin{small}
			\begin{tabular}{c c c c}
				\hline
				{\bf Sketching}             &~~~Property 1~~~&~~~Property 2~~~&~~~Property 3~~~\\
				\hline
				Leverage Sampling  &~~~$ k \frac{6+2\eta}{3\eta^2} \log \frac{k}{\delta_1} $~~~
				&~~~$ \frac{k}{\epsilon \delta_2}  $~~~
				& --- \\
				Uniform Sampling  &~~~$ \mu (\U) k \frac{6+2\eta}{3\eta^2} \log \frac{k}{\delta_1} $~~~
				&~~~$ \frac{\mu (\U) k }{\epsilon \delta_2} $~~~
				& --- \\
				SRHT            &~~~$\lambda k \frac{6+2\eta}{3\eta^2} \log \frac{k}{\delta_1-0.01}$~~~
				& ~~~$\frac{\lambda k}{\epsilon (\delta_2-0.01)}$~~~
				& ~~~$ \lambda' k \frac{24 + 4 \sqrt{2\epsilon'}}{3 \epsilon'} \log \frac{2 {d} }{\delta_3-0.01 }$~~~\\
				Gaussian Projection\ &~~~$\frac{9 \big(\sqrt{k} + \sqrt{2 \log (2/\delta_1)} \big)^2}{\eta^2}$~~~
				& ~~~$\frac{18k}{\epsilon \delta_2}$~~~
				& ~~~$\frac{36  k}{\epsilon' }  \Big(1 + \sqrt{k^{-1} \log \frac{2{d}}{k\delta_3}} \Big)^2$~~~    \\
				Count Sketch &~~~$\frac{k^2 + k}{\delta_1 \eta^2}$~~~
				& ~~~$\frac{2 k}{\epsilon \delta_2}$~~~
				& ~~~---~~~    \\
				\hline
			\end{tabular}
		\end{small}
	\end{center}
\end{table}


\subsection{Column Selection}

In this subsection we prove Property~1 and Property~2 of
leverage score sampling and uniform sampling.
We cite the following lemma from \citep{wang2014modified};
the lemma was firstly proved by the work \citet{drineas2008cur,gittens2011spectral,woodruff2014sketching}.

\begin{lemma}\label{lem:sampling_property}
	Let $\U\in \RB^{n\times k}$ be any fixed matrix with orthonormal columns.
	The column selection matrix $\S \in \RB^{n\times s}$ samples $s$
	columns according to arbitrary probabilities $p_1 , p_2 , \cdots , p_n$.
	Assume $\alpha \geq k$ and
	\begin{equation*}
	\max_{i\in [n]} \frac{\| \u_{i:} \|_2^2}{p_{i}}
	\; \leq \; \alpha .
	\end{equation*}
	If
	$s \, \geq \, \alpha \frac{6 + 2\eta}{3 \eta^2} \log (k/\delta_1)$,
	it holds that
	\[
	\PB \Big\{ \big\| \I_k - \U^T \S \S^T \U \big\|_2 \; \geq \; \eta \Big\}
	\; \leq \; \delta_1 .
	\]
	If $s \, \geq \, \frac{\alpha }{\epsilon \delta_2}$,
	it holds that
	\[
	\PB \Big\{ \big\| \U \B - \U^T \S \S^T \B \big\|_F^2 \; \geq \; \epsilon \|\B\|_F^2 \Big\}
	\; \leq \; \delta_2 .
	\]
\end{lemma}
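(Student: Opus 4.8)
The plan is to treat the two conclusions separately, since the first is a spectral‑norm concentration statement and the second is an approximate‑matrix‑multiplication (second‑moment) statement, although both rest on the same decomposition of the sampling operator into a sum of i.i.d.\ rank‑one terms. Throughout, the only structural fact I would use about $\U$ is $\U^T\U = \I_k$, and the only fact about the probabilities is the hypothesis $\max_i \|\u_{i:}\|_2^2/p_i \leq \alpha$.

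First I would write the sampled operator as such a sum. If the $t$‑th sampled index is $i_t$, drawn from $[n]$ with probability $p_{i_t}$, set $\z_t = \frac{1}{\sqrt{s\, p_{i_t}}}\, \u_{i_t:}^T \in \RB^k$; then $\U^T\S\S^T\U = \sum_{t=1}^{s} \z_t \z_t^T$ and $\EB[\z_t\z_t^T] = \frac1s \sum_{i=1}^n p_i \cdot \frac{1}{p_i}\u_{i:}^T \u_{i:} = \frac1s \I_k$, so the matrix is an unbiased estimate of $\I_k$. The two ingredients a matrix concentration bound needs are the uniform bound $\|\z_t\z_t^T\|_2 = \|\z_t\|_2^2 = \frac{\|\u_{i_t:}\|_2^2}{s\, p_{i_t}} \leq \frac{\alpha}{s}$, which is exactly the hypothesis, and the matrix variance: since $(\z_t\z_t^T)^2 = \|\z_t\|_2^2\, \z_t\z_t^T \preceq \frac{\alpha}{s}\, \z_t\z_t^T$, summing gives $\sum_t \EB[(\z_t\z_t^T)^2] \preceq \frac{\alpha}{s}\sum_t \EB[\z_t\z_t^T] = \frac{\alpha}{s}\I_k$, hence $\bigl\|\sum_t \EB[(\z_t\z_t^T)^2]\bigr\|_2 \leq \frac{\alpha}{s}$. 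Feeding $R = \sigma^2 = \alpha/s$ into the one‑sided matrix Chernoff bound for sums of independent positive semidefinite matrices yields a tail of the form $\PB\{\|\U^T\S\S^T\U - \I_k\|_2 \geq \eta\} \leq k \exp\!\bigl(-\tfrac{3\eta^2 s}{(6+2\eta)\alpha}\bigr)$, and requiring the right‑hand side to be at most $\delta_1$ produces the stated threshold $s \geq \alpha\,\frac{6+2\eta}{3\eta^2}\log(k/\delta_1)$.

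For the second conclusion I would reuse the decomposition with $\B$ replacing the second copy of $\U$: the estimator $\U^T\S\S^T\B = \sum_{t=1}^s \frac{1}{s\, p_{i_t}}\u_{i_t:}^T \bb_{i_t:}$ is unbiased for $\U^T\B$, and since its $s$ summands are i.i.d.\ a direct entrywise variance computation gives $\EB\bigl\|\U^T\B - \U^T\S\S^T\B\bigr\|_F^2 = \frac1s\bigl(\sum_{i=1}^n \frac{\|\u_{i:}\|_2^2\|\bb_{i:}\|_2^2}{p_i} - \|\U^T\B\|_F^2\bigr) \leq \frac{\alpha}{s}\sum_{i=1}^n \|\bb_{i:}\|_2^2 = \frac{\alpha}{s}\|\B\|_F^2$, where $\|\u_{i:}\|_2^2/p_i \leq \alpha$ is used once more. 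With $s \geq \alpha/(\epsilon\delta_2)$ this is at most $\epsilon\delta_2\|\B\|_F^2$, and Markov's inequality converts the first‑moment estimate into $\PB\{\|\U^T\B - \U^T\S\S^T\B\|_F^2 \geq \epsilon\|\B\|_F^2\} \leq \delta_2$.

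The moment computations are routine; the only delicate point is matching the exact constant $\frac{6+2\eta}{3\eta^2}$ together with the $\log(k/\delta_1)$ — rather than $\log(2k/\delta_1)$ — dependence, which is sensitive to which version of the matrix Chernoff/Bernstein inequality one invokes (the one‑sided PSD Chernoff bound carries a dimension factor of $k$ rather than the $2k$ a two‑sided Bernstein bound would give) and to the observation that the uniform bound and the variance proxy can both be taken equal to $\alpha/s$, so the Chernoff exponent collapses to the stated form. Since the statement is quoted from \citep{wang2014modified} and ultimately from \citet{drineas2008cur,gittens2011spectral,woodruff2014sketching}, in the write‑up I would either reproduce this short argument or simply defer to those references for the precise bookkeeping.
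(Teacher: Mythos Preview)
Your proposal is correct and is precisely the standard argument. Note, however, that the paper does not actually prove this lemma: it simply cites it from \citet{wang2014modified} (with original credit to \citet{drineas2008cur,gittens2011spectral,woodruff2014sketching}) and uses it as a black box. Your sketch --- writing $\U^T\S\S^T\U$ as a sum of i.i.d.\ rank-one PSD matrices, bounding $R$ and $\sigma^2$ both by $\alpha/s$, and invoking the matrix Bernstein/Chernoff inequality for the first part; then a direct second-moment computation plus Markov for the second part --- is exactly how those references establish the result, so your final remark that one could ``simply defer to those references'' already matches the paper's treatment.

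One minor caveat worth flagging for your write-up: the paper's sampling model in Section~\ref{sec:sketching_sampling} is Bernoulli (each index $i$ is included independently with probability $sp_i$), not the with-replacement i.i.d.\ model you implicitly use when writing $\sum_{t=1}^s \z_t\z_t^T$. The two models yield the same bounds up to the bookkeeping you already identified as delicate, and the cited sources cover both; just be consistent about which one you fix before matching constants.
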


Leverage score sampling satisfies
$\max_{i\in [n]} \frac{\| \u_{i:} \|_2^2}{p_{i}} \leq k$.
Uniform sampling satisfies
$\max_{i\in [n]} \frac{\| \u_{i:} \|_2^2}{p_{i}} \leq \mu (\U) k$,
where $\mu (\U)$ is the row coherence of $\U$.
Then Property~1 and Property~2 of the two column sampling methods
follow from Lemma~\ref{lem:sampling_property}.

\begin{remark} \label{remark:column_sampling}
	Let $p_1 , \cdots , p_n$ be the sampling probabilities corresponding to the leverage score sampling or uniform sampling,
	and let $\tilde{p}_i \in [p_i, 1]$ for all $i\in [n]$ be arbitrary.
	For all $i\in [n]$, if the $i$-th column is sampled with probability $s \tilde{p}_i$
	and scaled by $\frac{1}{\sqrt{ s \tilde{p}_i}}$ if it gets sampled,
	then Lemma~\ref{lem:product} still holds.
	This can be easily seen from the proof of the above lemma (in \citep{wang2014modified}).
	Intuitively, it indicates that if we increase the sampling probabilities, the resulting error bound will not get worse.
\end{remark}


\subsection{Count Sketch}

Count sketch stems from the data stream literature \citep{charikar2004finding,thorup2012tabulation}.
Theoretical guarantees were first shown by
\citet{weinberger2009feature,pham2013fast,clarkson2013low}.
\citet{meng2013low,nelson2013osnap} strengthened and simplified the proofs.
Because the proof is involved,
we will not show the proof here.
The readers can refer to \citep{meng2013low,nelson2013osnap,woodruff2014sketching} for the proof.


\subsection{Property 1 and Property 2 of SRHT}

The properties of SRHT were established in the previous work \citep{drineas2011faster,lu2013faster,tropp2011improved}.
Following \citep{tropp2011improved}, we show a simple proof
of the properties of SRHT.
Our analysis is based on the following two key observations.
\begin{itemize}
	\item
	The scaled Walsh-Hadamard matrix $\frac{1}{\sqrt{n}} \H_n$ and the diagonal matrix $\D$ are both orthogonal,
	so $\frac{1}{\sqrt{n}} \D \H_n$ is also orthogonal.
	If $\U$ has orthonormal columns,
	the matrix $\frac{1}{\sqrt{n}} (\D \H_n)^T \U$ has orthonormal columns.
	\item
	For any fixed matrix $\U \in \RB^{n\times k}$ ($k\ll n$) with orthonormal columns,
	the matrix $\frac{1}{\sqrt{n}} (\D \H_n)^T \U \in \RB^{n\times k}$
	has low row coherence with high probability.
	\citet{tropp2011improved} showed that the row coherence of $\frac{1}{\sqrt{n}} (\D \H_n)^T \U$
	satisfies
	\[
	\mu
	\; \triangleq \; \frac{n}{k} \, \max_{i \in [n]}
	\Big\| \big( \frac{1}{\sqrt{n}} (\D \H_n )^T \U \big)_{i:} \Big\|_2^2
	\; \leq \; \bigg( 1 + \sqrt{\frac{8 \log (n / \delta)}{k}} \bigg)^2
	\]
	with probability at least $1 - \delta$.
	In other words, the randomized Hadamard transform flats out the leverage scores.
	Consequently uniform sampling can be safely applied to form a sketch.
\end{itemize}

In the following, we use the properties of uniform sampling and
the bound on the coherence $\mu$ to analyze SRHT.
Let $\V \triangleq \frac{1}{\sqrt{n}} (\D \H_n)^T \U \in \RB^{n\times k}$,
$\bar\B \triangleq \frac{1}{\sqrt{n}} (\D \H_n)^T \B \in \RB^{n\times d}$,
and $\mu$ be the row coherence of $\V$.
It holds that
\begin{align*}
& \V^T \V = \U^T \U = \I_k, \qquad \V^T \PP \PP^T \V = \U^T \S \S^T \U , \\
& \V^T \bar\B = \U^T \B, \quad \V^T \PP \PP^T \bar{\B} = \U^T \S \S^T \B ,
\quad \| \bar{\B} \|_F = \|\B \|_F , \\
& \PB \Big\{ \mu \: > \:  \big( 1 + \sqrt{8 k^{-1} \log (100 n ) } \big)^2 \Big\}
\; \leq \; 0.01.
\end{align*}
Therefore it suffices to prove that
\begin{align*}
& \PB \Big\{ \big\| \I_k - \V^T \PP \PP^T \V \big\|_2 \; \geq \; \eta \Big\}
\; \leq \; \delta_1 - 0.01 , \\
& \PB \Big\{ \big\| \V \bar\B - \V^T \PP \PP^T \bar\B \big\|_F^2
\; \geq \; \epsilon \|\bar\B\|_F^2 \Big\}
\; \leq \; \delta_2 - 0.01 .
\end{align*}
The above inequalities follows from the two properties
of uniform sampling.


\subsection{Property 1 and Property 2 of Gaussian Projection}

The two properties of Gaussian projection can be found in \citep{woodruff2014sketching}.
In the following we prove Property~1 in a much simpler way than \citep{woodruff2014sketching}.

The concentration of the singular values of standard Gaussian matrix is very well known.
Let $\G$ be an $n \times s$ ($n> s$) standard Gaussian matrix.
For any fixed matrix $\U \in \RB^{n\times k}$ with orthonormal columns,
the matrix $\N = \G^T \U \in \RB^{s\times k}$ is also standard Gaussian matrix.
\citet{vershynin2010introduction} showed that
for every $t \geq 0$, the following holds with probability at least $1-2 e^{-t^2/2}$:
\[
\sqrt{s} - \sqrt{k} - t
\leq \sigma_{k} (\N)
\leq \sigma_{1} (\N)
\leq \sqrt{s} + \sqrt{k} + t .
\]
Therefore, for any $\eta \in (0, 1)$,
if $s = 9 \eta^{-2} \big(\sqrt{k} + \sqrt{2 \log (2/\delta_1)} \big)^2$,
then
\[
\sigma_i (\U^T \S \S^T \U)
\; = \; \sigma_i^2 (\S^T\U)
\; \in \; 1 \pm \eta
\qquad \textrm{ for all } i \in [n]
\]
hold simultaneously with probability at least $1-\delta_1$.
Hence
\begin{align*}
& \PB \Big\{ \big\| \I_k - \U^T \S \S^T \U \big\|_2 \; \geq \; \eta \Big\}
\; \leq \; \delta_1  .
\end{align*}
This concludes Property~1 of Gaussian projection.


\subsection{Property~3 of SRHT and Gaussian Projection}

The following lemma is the main result of \citep{cohen2015optimal}.
If a sketching method satisfies Property~1 for arbitrary column orthogonal matrix $\U$,
then it satisfies Property~3 due to the following lemma.
Notice that the lemma does not apply to the leverage score and uniform sampling
because they depends on the leverage scores or matrix coherence of specific
column orthogonal matrix $\U$.
The lemma is inappropriate for count sketch
because Property~1 of count sketch holds with constant probability rather than arbitrary high probability.

\begin{lemma} \label{lem:product_spectral}
	Let $\A \in \RB^{n\times k}$ and $\B \in \RB^{n\times d}$ be any fixed matrices and $r$ be any fixed integer.
	Let $\tilde{k}\geq k$ and $\tilde{d} \geq d$ be the least integer divisible by $r$.
	Let $\S \in \RB^{n\times s}$ be a certain data-independent sketching matrix
	satisfying
	\[
	\PB \Big\{ \big\| \U^T \S \S^T \U - \I \big\|_2^2
	\; \geq \; \eta \Big\}
	\; \leq \; \frac{r^2 \delta_3 }{\tilde{k} \tilde{d}}
	\]
	for any fixed matrix $\U\in \RB^{n\times 2r}$ with orthonormal columns. Then
	\[
	\big\| \A^T \S \S^T \B -  \A^T \B \big\|_2^2
	\; \leq \; \eta \bigg( \|\A\|_2^2 + \frac{\| \A \|_F^2 - \| \A \|_2^2}{r} \bigg)  \bigg( \|\B\|_2^2 + \frac{\| \B \|_F^2 - \| \B \|_2^2}{r} \bigg)
	\]
	holds with probability at least $1-\delta_3$.
\end{lemma}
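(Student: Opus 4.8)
The plan is to deduce the spectral-norm approximate matrix product bound from the oblivious subspace-embedding hypothesis by a singular-value block decomposition. Write the condensed SVDs of $\A$ and $\B$ and group their singular triples into consecutive blocks of size $r$: let $\A_{(i)}$ ($i\ge 0$) be the restriction of $\A$ to its $i$-th block of $r$ singular triples, so that $\A=\sum_i\A_{(i)}$, $\rk(\A_{(i)})\le r$, $\|\A_{(i)}\|_2=\sigma_{ir+1}(\A)$, the $\A_{(i)}$ have pairwise orthogonal column spaces (in $\RB^n$) and pairwise orthogonal row spaces (in $\RB^k$), and there are at most $\lceil k/r\rceil=\tilde k/r$ nonzero blocks; define $\B_{(j)}$ analogously, with at most $\tilde d/r$ blocks. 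Then $\A^T\S\S^T\B-\A^T\B=\sum_{i,j}E_{ij}$ with $E_{ij}:=\A_{(i)}^T(\S\S^T-\I)\B_{(j)}$, and the row space of $E_{ij}$ lies in that of $\B_{(j)}$ while its column space lies in that of $\A_{(i)}^T$.

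First I would bound each $\|E_{ij}\|_2$. The columns of $\A_{(i)}$ span an at most $r$-dimensional subspace and likewise for $\B_{(j)}$, so I would choose an orthonormal $\U_{ij}\in\RB^{n\times 2r}$ whose columns span (a superspace of) the sum of these two subspaces, factor $\A_{(i)}$ and $\B_{(j)}$ through $\U_{ij}$, and use $\U_{ij}^T\U_{ij}=\I_{2r}$ to get $\|E_{ij}\|_2\le\|\A_{(i)}\|_2\,\|\B_{(j)}\|_2\,\|\U_{ij}^T\S\S^T\U_{ij}-\I_{2r}\|_2$. Applying the hypothesis with $\U=\U_{ij}$ (a fixed matrix, since $\A,\B$ are fixed and $\S$ is data independent), the bad event $\{\|\U_{ij}^T\S\S^T\U_{ij}-\I_{2r}\|_2^2\ge\eta\}$ has probability at most $r^2\delta_3/(\tilde k\tilde d)$; a union bound over the at most $\tilde k\tilde d/r^2$ pairs gives, with probability at least $1-\delta_3$, that $\|E_{ij}\|_2^2\le\eta\,\|\A_{(i)}\|_2^2\,\|\B_{(j)}\|_2^2$ simultaneously for all $i,j$. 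I would condition on this event.

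The key step, where a crude bound would be lossy, is how the $E_{ij}$ are aggregated. The triangle inequality would yield $\sum_{i,j}\|E_{ij}\|_2$, hence a factor $\big(\sum_i\|\A_{(i)}\|_2\big)^2$, which is \emph{not} controlled by $\|\A\|_2^2+(\|\A\|_F^2-\|\A\|_2^2)/r$. Instead I would use the orthogonality of the blocks twice. For fixed $i$, the matrices $\{E_{ij}\}_j$ have pairwise orthogonal row spaces (inherited from $\B_{(j)}$), so $\|F_i\|_2^2\le\sum_j\|E_{ij}\|_2^2$ where $F_i:=\sum_jE_{ij}$; and the $\{F_i\}_i$ have pairwise orthogonal column spaces (inherited from $\A_{(i)}^T$), so $\big\|\sum_iF_i\big\|_2^2\le\sum_i\|F_i\|_2^2$. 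Both estimates are the Pythagorean fact that matrices with pairwise orthogonal column spaces satisfy $\|\sum_l M_l\|_2^2\le\sum_l\|M_l\|_2^2$ (test against a unit vector). Chaining, $\big\|\A^T\S\S^T\B-\A^T\B\big\|_2^2\le\sum_{i,j}\|E_{ij}\|_2^2\le\eta\big(\sum_i\|\A_{(i)}\|_2^2\big)\big(\sum_j\|\B_{(j)}\|_2^2\big)$.

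Finally I would invoke the elementary estimate $\sum_{i\ge0}\|\A_{(i)}\|_2^2=\sigma_1(\A)^2+\sum_{i\ge1}\sigma_{ir+1}(\A)^2\le\|\A\|_2^2+\frac1r\sum_{j\ge2}\sigma_j(\A)^2=\|\A\|_2^2+\frac{\|\A\|_F^2-\|\A\|_2^2}{r}$, valid because $\sigma_{ir+1}^2$ is at most the average of $\sigma_{(i-1)r+2}^2,\dots,\sigma_{ir+1}^2$ and these index ranges (over $i\ge1$) partition $\{2,3,\dots\}$; the same holds for $\B$. Substituting gives exactly the claimed bound. The main obstacle is the aggregation step: recognizing that the block-pair errors must be combined via the orthogonality/Pythagoras argument rather than the triangle inequality is precisely what replaces the lossy $\|\A\|_F/\sqrt r$-type factor with the sharp $\sqrt{\|\A\|_2^2+(\|\A\|_F^2-\|\A\|_2^2)/r}$.
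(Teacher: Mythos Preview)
Your proof is correct and is essentially the argument of Cohen--Nelson--Woodruff (2015), which the paper cites rather than proves: the paper states this lemma as ``the main result of \citep{cohen2015optimal}'' and does not give its own proof. Your SVD block decomposition, per-block subspace-embedding bound via a joint $2r$-dimensional orthonormal basis, union bound over the $\tilde k\tilde d/r^2$ block pairs, Pythagorean aggregation exploiting the pairwise orthogonal row/column spaces of the $E_{ij}$, and the final estimate $\sum_{i\ge0}\sigma_{ir+1}^2\le\|\A\|_2^2+(\|\A\|_F^2-\|\A\|_2^2)/r$ are exactly the ingredients of that reference, so there is nothing to compare against within the present paper.
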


SRHT and Gaussian projection enjoys Property 1 with high probability for arbitrary
column orthogonal matrix $\U$.
Thus Property~3 can be immediately obtained by applying the above lemma
with the setting $r=k$.


\section{Proof of Theorem~\ref{thm:faster_spsd}} \label{sec:proof_faster_spsd}

Let $\K \in \RB^{n\times n}$ be any fixed SPSD matrix, $\C \in \RB^{n\times c}$ be any fixed matrix,
$\S \in \RB^{n\times s}$ be a sketching matrix, and
\begin{eqnarray*}
	\U^\star & = & \argmin_\U \big\| \K - \C \U \C^T \big\|_F^2
	\; = \; \C^\dag \K (\C^T)^\dag,\\
	\tilde\U & = & \argmin_\U \big\| \S^T ( \K - \C \U \C^T ) \S \big\|_F^2
	\; = \; (\S^T \C)^\dag (\S^T \K \S)  (\C^T \S)^\dag .
\end{eqnarray*}
Lemma~\ref{lem:spsd_det} is a direct consequence of Lemma~\ref{lem:curtype_det}.

\begin{lemma} \label{lem:spsd_det}
	Let $\K \in \RB^{n\times n}$ be any fixed SPSD matrix, $\C \in \RB^{n\times c}$ be any fixed matrix,
	and $\C = \U_\C \Si_\C \V_\C^T$ be the SVD.
	Assume that $\S^T \U_\C$ has full column rank.
	Let $\U^\star$ and $\tilde\U$ be defined in the above.
	Then the following inequality holds:
	\begin{eqnarray*}
		\|\K - \C \tilde\U \C^T \|_F^2
		& \leq & \| \A - \C \U^\star \C^T \|_F^2 + \Big( 2 f \sqrt{ h }  + f^2 \sqrt{g_2 g_F} \Big)^2 ,
	\end{eqnarray*}
	where $\alpha \in [0, 1]$ is arbitrary and
	\begin{align*}
	&f = \sigma_{\min}^{-1} (\U_\C^T \S \S^T \U_\C),
	\qquad h = \big\|\U_\C^T \S \S^T (\K - \U_\C \U_\C^T \K) \big\|_F^2, \\
	&g_2 = \big\| \U_\C^T \S  \S^T (\I_n - \U_\C \U_\C^T) \K^{\alpha} \big\|_2^2 ,
	\qquad g_F = \big\| \U_\C^T \S  \S^T (\I_n - \U_\C \U_\C^T) \K^{1-\alpha} \big\|_F^2.
	\end{align*}
\end{lemma}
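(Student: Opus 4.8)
The plan is to peel both optimization problems down to the $\rho$-dimensional subspace $\range(\C)=\range(\U_\C)$, where $\rho=\rk(\C)$, and then control three error terms. Set $\Phi = \S^T\U_\C \in \RB^{s\times\rho}$; the hypothesis says $\Phi$ has full column rank, so $\Phi^\dag\Phi = \I_\rho$, $\Phi^T(\Phi^T)^\dag = \I_\rho$, $\Phi^\dag = (\Phi^T\Phi)^{-1}\Phi^T$, and $f = \|(\Phi^T\Phi)^{-1}\|_2$. Writing $\C = \U_\C\Si_\C\V_\C^T$ and using that $\Si_\C\V_\C^T$ has full row rank, the pseudoinverse factorizations $\C(\S^T\C)^\dag = \U_\C\Phi^\dag$ and $(\C^T\S)^\dag\C^T = (\Phi^T)^\dag\U_\C^T$ hold; together with $\C\C^\dag = \U_\C\U_\C^T$ they give $\C\U^\star\C^T = \U_\C(\U_\C^T\K\U_\C)\U_\C^T$ and $\C\tilde\U\C^T = \U_\C\tilde\M\U_\C^T$ with $\tilde\M = \Phi^\dag(\S^T\K\S)(\Phi^T)^\dag$. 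I expect this pseudoinverse bookkeeping, in particular the case $\rk(\C)<c$, to be the most error-prone step; it is exactly where the rank hypothesis on $\S^T\U_\C$ enters.

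Second, I would invoke the Pythagorean identity in the Frobenius inner product: the matrices $\U_\C\M\U_\C^T$ with $\M\in\RB^{\rho\times\rho}$ form a subspace onto which $\K$ projects to $\U_\C(\U_\C^T\K\U_\C)\U_\C^T$, so for every $\M$ one has $\|\K - \U_\C\M\U_\C^T\|_F^2 = \|\K - \C\U^\star\C^T\|_F^2 + \|\U_\C^T\K\U_\C - \M\|_F^2$, the last step using that $\U_\C$ has orthonormal columns. Taking $\M=\tilde\M$ reduces the lemma to the bound $\|\tilde\M - \U_\C^T\K\U_\C\|_F \le 2f\sqrt{h} + f^2\sqrt{g_2 g_F}$.

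Third, I would expand $\tilde\M - \U_\C^T\K\U_\C$ by substituting the orthogonal splittings $\K = \U_\C\U_\C^T\K + (\I_n-\U_\C\U_\C^T)\K$ on the left and $\K = \K\U_\C\U_\C^T + \K(\I_n-\U_\C\U_\C^T)$ on the right inside $\S^T\K\S$, repeatedly using $\S^T\U_\C = \Phi$ and $\Phi^\dag\Phi = \Phi^T(\Phi^T)^\dag = \I_\rho$. The fully projected term collapses to $\U_\C^T\K\U_\C$, leaving two ``cross'' terms $T_1 = \U_\C^T\K(\I_n-\U_\C\U_\C^T)\S(\Phi^T)^\dag$ and $T_2 = \Phi^\dag\S^T(\I_n-\U_\C\U_\C^T)\K\U_\C$, which are transposes of each other because $\K$ is symmetric, plus the doubly projected term $T_3 = \Phi^\dag\S^T(\I_n-\U_\C\U_\C^T)\K(\I_n-\U_\C\U_\C^T)\S(\Phi^T)^\dag$. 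For the cross terms, rewrite $\Phi^\dag\S^T = (\Phi^T\Phi)^{-1}\U_\C^T\S\S^T$ and use submultiplicativity of the Frobenius norm against the spectral norm together with $\|\U_\C\|_2=1$ to get $\|T_2\|_F \le f\,\|\U_\C^T\S\S^T(\I_n-\U_\C\U_\C^T)\K\|_F = f\sqrt{h}$, hence $\|T_1\|_F+\|T_2\|_F \le 2f\sqrt{h}$. For $T_3$, use $\K = \K^\alpha\K^{1-\alpha}$ (valid for SPSD $\K$ and any $\alpha\in[0,1]$), write $T_3$ as the product of $\Phi^\dag\S^T(\I_n-\U_\C\U_\C^T)\K^\alpha$ and $\K^{1-\alpha}(\I_n-\U_\C\U_\C^T)\S(\Phi^T)^\dag$, substitute $\Phi^\dag\S^T = (\Phi^T\Phi)^{-1}\U_\C^T\S\S^T$ in both factors (transposing the second), and apply $\|AB\|_F\le\|A\|_2\|B\|_F$ to obtain $\|T_3\|_F \le (f\sqrt{g_2})(f\sqrt{g_F}) = f^2\sqrt{g_2 g_F}$. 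A triangle inequality yields the claimed bound on $\|\tilde\M-\U_\C^T\K\U_\C\|_F$, and substituting it into the Pythagorean identity and squaring finishes the proof.

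Finally, I would observe that this is exactly the symmetric specialization of the general CUR estimate, Lemma~\ref{lem:curtype_det}, obtained by taking $\A=\K$, $\R=\C^T$, and $\S_C=\S_R=\S$: symmetry of $\K$ makes the two ``left'' and ``right'' cross terms of the general bound transposes of one another, which is precisely the origin of the factor $2$ multiplying $\sqrt{h}$.
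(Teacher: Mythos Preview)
Your proof is correct and is essentially the paper's own argument. The paper derives Lemma~\ref{lem:spsd_det} as the symmetric specialization of Lemma~\ref{lem:curtype_det} (taking $\A=\K$, $\R=\C^T$, $\S_C=\S_R=\S$, so that $f_C=f_R=f$, $h_C=h_R=h$, $g_C'=g_2$, $g_R=g_F$), and your steps---the reduction $\C\tilde\U\C^T=\U_\C\tilde\M\U_\C^T$ via the pseudoinverse factorization, the Pythagorean split, and the three-term expansion of $\tilde\M-\U_\C^T\K\U_\C$---are exactly the proof of Lemma~\ref{lem:curtype_det} carried out directly in the symmetric case; you also identify this correspondence explicitly in your final paragraph.
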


The following lemma shows that $\tilde\X$ is nearly as good as $\X^\star$ in terms of objective function value
if $\S$ satisfies Assumption~\ref{assumption:faster_spsd1}.

\begin{assumption} \label{assumption:faster_spsd1}
	Let $\B$ be any fixed matrix.
	Let $\C \in \RB^{m\times c}$ and $\C = \U_\C \Si_\C \V_\C^T$ be the SVD.
	Assume that the sketching matrix $\S \in \RB^{m\times s}$ satisfies
	\begin{eqnarray*}
		\PB \Big\{ \big\| \U_\C \S \S^T \U_\C - \I  \big\|_2
		\; \geq \; \frac{1}{10} \Big\}
		& \leq & \delta_{1} \\
		\PB \Big\{ \big\| \U_\C^T \S \S^T \B - \U_\C^T \B \big\|_F^2
		\; \geq \; \epsilon \|\B\|_F^2 \Big\}
		& \leq & \delta_{2}
	\end{eqnarray*}
	for any $\delta_1, \delta_2 \in (0, 1/3)$.
\end{assumption}

\begin{lemma} \label{lemma:spsd_det1}
	Let $\K \in \RB^{n\times n}$ be any fixed SPSD matrix, $\C \in \RB^{n\times c}$ be any fixed matrix,
	and $\C = \U_\C \Si_\C \V_\C^T$ be the SVD.
	Let $\U^\star$ and $\tilde\U$ be defined in the above, respectively.
	Let $\S \in \RB^{n\times s}$ be certain sketching matrix satisfying Assumption~\ref{assumption:faster_spsd1}.
	Assume that $\epsilon^{-1} = o(n)$.
	Then
	\begin{align*}
	& \big\|\K - \C \tilde\U \C^T \big\|_F^2 - \big\| \A - \C \U^\star \C^T \big\|_F^2 \\
	& \leq \;  \Big( \frac{20\sqrt{\epsilon}}{9} \big\| \A - \C \U^\star \C^T \big\|_F
	+ \frac{100\epsilon}{81} \big\| (\I_n - \U_\C \U_\C^T ) \K  \big\|_* \Big)^2 \\
	& \leq \; 4 \epsilon^2 n    \big\| \A - \C \U^\star \C^T \big\|_F^2 .
	\end{align*}
	holds with probability at least $1-\delta_1 - 2\delta_2$.
\end{lemma}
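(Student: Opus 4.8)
The plan is to run a deterministic-to-random argument on top of Lemma~\ref{lem:spsd_det}: apply that lemma with its free parameter fixed at $\alpha=\tfrac12$, and then bound each of the quantities $f$, $h$, $g_2$, $g_F$ appearing there using the two properties in Assumption~\ref{assumption:faster_spsd1}. Writing $\PP_\C=\U_\C\U_\C^T$ for the orthogonal projection onto the column space of $\C$, I will use two elementary identities throughout. First, $\C\U^\star\C^T=\C\C^\dag\K(\C^T)^\dag\C^T=\PP_\C\K\PP_\C$. Second, $\K-\PP_\C\K\PP_\C=(\I_n-\PP_\C)\K+\PP_\C\K(\I_n-\PP_\C)$, whose two summands are Frobenius-orthogonal (because $(\I_n-\PP_\C)\PP_\C=\0$), so that $\|(\I_n-\PP_\C)\K\|_F\le\|\K-\C\U^\star\C^T\|_F$. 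A third fact, $\U_\C^T(\I_n-\PP_\C)=\0$, is what makes Property~2 of the Assumption applicable to the matrices inside $h$, $g_2$, $g_F$.

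\textbf{Bounding the four quantities.} On the event that Property~1 holds, $\sigma_{\min}(\U_\C^T\S\S^T\U_\C)\ge1-\tfrac1{10}$, so $\S^T\U_\C$ has full column rank (hence Lemma~\ref{lem:spsd_det} applies) and $f\le\tfrac{10}{9}$. Applying Property~2 with $\B=(\I_n-\PP_\C)\K$, for which $\U_\C^T\B=\0$, gives $h\le\epsilon\|(\I_n-\PP_\C)\K\|_F^2\le\epsilon\|\K-\C\U^\star\C^T\|_F^2$. Choosing $\alpha=\tfrac12$ makes $g_2$ and $g_F$ the squared spectral and Frobenius norms of the \emph{same} matrix $\U_\C^T\S\S^T(\I_n-\PP_\C)\K^{1/2}$, so $\sqrt{g_2g_F}\le g_F$; applying Property~2 with $\B=(\I_n-\PP_\C)\K^{1/2}$ gives $g_F\le\epsilon\|(\I_n-\PP_\C)\K^{1/2}\|_F^2=\epsilon\,\tr\big((\I_n-\PP_\C)\K\big)\le\epsilon\,\|(\I_n-\PP_\C)\K\|_*$, where the equality uses $\K^{1/2}\K^{1/2}=\K$ and the last step is $\tr(\Q\M)\le\|\Q\|_2\|\M\|_*$ with $\Q=\I_n-\PP_\C$. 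A union bound over the one use of Property~1 and the two uses of Property~2 costs failure probability $\delta_1+2\delta_2$, giving the stated confidence $1-\delta_1-2\delta_2$.

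\textbf{Assembling.} Substituting $f\le\tfrac{10}{9}$, $2f\sqrt h\le\tfrac{20\sqrt\epsilon}{9}\|\K-\C\U^\star\C^T\|_F$, and $f^2\sqrt{g_2g_F}\le\tfrac{100}{81}\epsilon\,\|(\I_n-\PP_\C)\K\|_*$ into $(2f\sqrt h+f^2\sqrt{g_2g_F})^2$ and then into Lemma~\ref{lem:spsd_det} yields the first displayed inequality verbatim. For the second, I would use $\|(\I_n-\PP_\C)\K\|_*\le\sqrt{\rk((\I_n-\PP_\C)\K)}\,\|(\I_n-\PP_\C)\K\|_F\le\sqrt n\,\|\K-\C\U^\star\C^T\|_F$; the bracket then becomes $(\tfrac{20\sqrt\epsilon}{9}+\tfrac{100\epsilon\sqrt n}{81})\|\K-\C\U^\star\C^T\|_F$, and the hypothesis $\epsilon^{-1}=o(n)$, i.e.\ $\epsilon n\to\infty$, makes the $\sqrt\epsilon$-term negligible beside the $\epsilon\sqrt n$-term, so the bracket is at most $2\epsilon\sqrt n\,\|\K-\C\U^\star\C^T\|_F$ and squaring gives $4\epsilon^2 n\,\|\K-\C\U^\star\C^T\|_F^2$.

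\textbf{Main obstacle.} There is essentially no hard step once the reduction is set up correctly; the one genuinely substantive decision is to take $\alpha=\tfrac12$ in Lemma~\ref{lem:spsd_det}, so that $g_2,g_F$ collapse onto a single matrix and the symmetric trace identity $\|(\I_n-\PP_\C)\K^{1/2}\|_F^2=\tr((\I_n-\PP_\C)\K)$ converts the spectral-times-Frobenius cross term into the exact nuclear-norm quantity in the statement rather than into a loose $n$-dependent Frobenius bound, together with the observation $\U_\C^T(\I_n-\PP_\C)=\0$ which lets Property~2 be invoked with the ``true'' product equal to the zero matrix. Everything else is the projection formula for $\C\U^\star\C^T$ and standard Schatten-norm inequalities.
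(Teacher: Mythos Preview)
Your proof is correct and follows the same route as the paper's: fix $\alpha=\tfrac12$ in Lemma~\ref{lem:spsd_det}, bound $f$, $h$, and $g_F$ (hence $g_2\le g_F$) via the two properties in Assumption~\ref{assumption:faster_spsd1}, then pass from the nuclear norm to $\sqrt{n}$ times the Frobenius norm and absorb the $\sqrt\epsilon$-term using $\epsilon^{-1}=o(n)$. One small slip: your justification of $\tr\big((\I_n-\PP_\C)\K\big)\le\|(\I_n-\PP_\C)\K\|_*$ via ``$\tr(\Q\M)\le\|\Q\|_2\|\M\|_*$ with $\Q=\I_n-\PP_\C$'' would only give $\|\K\|_*$ on the right; use $|\tr(\A)|\le\|\A\|_*$ directly, or, as the paper does, write $\tr\big((\I_n-\PP_\C)\K(\I_n-\PP_\C)\big)=\|(\I_n-\PP_\C)\K(\I_n-\PP_\C)\|_*\le\|(\I_n-\PP_\C)\K\|_*$.
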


\begin{proof}
	Let $f$, $h$, $g_2$, $g_F$, $\alpha$ be defined in Lemma~\ref{lem:spsd_det} and fix $\alpha = 1/2$.
	Under Assumption~\ref{assumption:faster_spsd1} it holds simultaneously with probability at least $1-\delta_1 -2 \delta_2$ that
	\begin{align*}
	f \leq \frac{10}{9},
	\qquad
	h \leq \epsilon \big\| (\I_n - \U_\C \U_\C^T ) \K \big\|_F^2 ,
	\qquad
	g_2 \leq g_F \leq \epsilon \big\| (\I_n - \U_\C \U_\C^T ) \K^{1/2} \big\|_F^2 .
	\end{align*}
	It follows that
	\begin{eqnarray*}
		g_2 \leq g_F
		& \leq & \epsilon \cdot \tr \Big( (\I_n - \U_\C \U_\C^T ) \K^{1/2} \K^{1/2} (\I_n - \U_\C \U_\C^T ) \Big) \\
		& \leq & \epsilon \cdot \tr \Big( (\I_n - \U_\C \U_\C^T ) \K (\I_n - \U_\C \U_\C^T ) \Big) \\
		& = & \epsilon \big\| (\I_n - \U_\C \U_\C^T ) \K (\I_n - \U_\C \U_\C^T ) \big\|_* \\
		& \leq & \epsilon \big\| (\I_n - \U_\C \U_\C^T ) \K  \big\|_* .
	\end{eqnarray*}
	It follows from Lemma~\ref{lem:spsd_det} and the assumption $\epsilon^{-1} = o (n)$ that
	\begin{align*}
	& \big\|\K - \C \tilde\U \C^T \big\|_F^2 - \big\| \A - \C \U^\star \C^T \big\|_F^2 \\
	& \leq \;  \Big( \frac{20\sqrt{\epsilon}}{9} \big\| \A - \C \U^\star \C^T \big\|_F
	+ \frac{10^2 \epsilon}{9^2} \big\| (\I_n - \U_\C \U_\C^T ) \K  \big\|_* \Big)^2 \\
	& \leq \;  \Big( \frac{20\sqrt{\epsilon}}{9} \big\| \A - \C \U^\star \C^T \big\|_F
	+ \frac{10^2 \epsilon \sqrt{n}}{9^2} \big\| (\I_n - \U_\C \U_\C^T ) \K  \big\|_F \Big)^2 \\
	& = \; \frac{10^4 \epsilon^2 n}{9^4} \big( 1+ o(1) \big)  \big\| \A - \C \U^\star \C^T \big\|_F^2 ,
	\end{align*}
	by which the lemma follows.
\end{proof}

Under both Assumption~\ref{assumption:faster_spsd1} and Assumption~\ref{assumption:faster_spsd2},
the error bound can be further improved.
We show the improved bound in Lemma~\ref{lemma:spsd_det2}.

\begin{assumption} \label{assumption:faster_spsd2}
	Let $\B$ be any fixed matrix.
	Let $\C \in \RB^{m\times c}$, $k_c=\rk (\C)$, and $\C = \U_\C \Si_\C \V_\C^T$ be the SVD.
	Assume that the sketching matrix $\S \in \RB^{m\times s}$ satisfies
	\begin{eqnarray*}
		\PB \Big\{ \big\| \U_\C^T \S \S^T \B - \U_\C^T \B \big\|_2^2
		\; \geq \; \epsilon \|\B\|_2^2 + \frac{\epsilon}{k_c} \|\B\|_F^2 \Big\}
		& \leq & \delta_{3}
	\end{eqnarray*}
	for any $\delta_3\in (0, 1/3)$.
\end{assumption}

\begin{lemma} \label{lemma:spsd_det2}
	Let $\K \in \RB^{n\times n}$ be any fixed SPSD matrix, $\C \in \RB^{n\times c}$ be any fixed matrix,
	$k_c =\rk (\C)$, and $\C = \U_\C \Si_\C \V_\C^T$ be the SVD.
	Let $\U^\star$ and $\tilde\U$ be defined in the beginning of this section.
	Let $\S \in \RB^{n\times s}$ be certain sketching matrix satisfying both Assumption~\ref{assumption:faster_spsd1} and Assumption~\ref{assumption:faster_spsd2}.
	Assume that $\epsilon = o (n/k_c)$.
	Then
	\begin{align*}
	\big\|\K - \C \tilde\U \C^T \big\|_F^2
	\; \leq \; \big\| \A - \C \U^\star \C^T \big\|_F^2 +  4 \epsilon^2 n / k_c \,  \big\| \A - \C \U^\star \C^T \big\|_F^2
	\end{align*}
	holds with probability at least $1-\delta_1 - \delta_2 -  \delta_3$.
\end{lemma}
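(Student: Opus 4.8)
The plan is to reduce everything to the deterministic bound of Lemma~\ref{lem:spsd_det}, taken with $\alpha=\tfrac12$,
\[
\big\|\K - \C\tilde\U\C^T\big\|_F^2 \;\le\; \big\|\K - \C\U^\star\C^T\big\|_F^2 + \Big(2f\sqrt{h} + f^2\sqrt{g_2 g_F}\Big)^2 ,
\]
and then to control $f,h,g_2,g_F$ with Properties~1--3. Throughout I would write $\PP_\C = \U_\C\U_\C^T = \C\C^\dag$ and $E = \|\K - \C\U^\star\C^T\|_F^2$, and use — exactly as in the proof of Lemma~\ref{lemma:spsd_det1} — that $\C\U^\star\C^T = \PP_\C\K\PP_\C$, that $\K-\PP_\C\K\PP_\C = (\I_n-\PP_\C)\K + \PP_\C\K(\I_n-\PP_\C)$ is an orthogonal splitting, and hence (by symmetry of $\K$) that $\|(\I_n-\PP_\C)\K\|_F^2 \le E \le 2\|(\I_n-\PP_\C)\K\|_F^2$.

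The $f$ and $h$ terms I would dispatch exactly as in Lemma~\ref{lemma:spsd_det1}. On the event of Property~1 with $\eta=\tfrac1{10}$ (Assumption~\ref{assumption:faster_spsd1}), $\S^T\U_\C$ has full column rank — which also verifies the hypothesis of Lemma~\ref{lem:spsd_det} — and $f=\sigma_{\min}^{-1}(\U_\C^T\S\S^T\U_\C)\le\tfrac{10}{9}$; applying Property~2 to $\B=(\I_n-\PP_\C)\K$, for which $\U_\C^T\B=0$, gives $h\le\epsilon\|(\I_n-\PP_\C)\K\|_F^2\le\epsilon E$.

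The one genuinely new step is the $\sqrt{g_2g_F}$ term. Writing $\M=\U_\C^T\S\S^T(\I_n-\PP_\C)\K^{1/2}\in\RB^{k_c\times n}$, one has $g_2=\|\M\|_2^2$ and $g_F=\|\M\|_F^2\le k_c\|\M\|_2^2$ because $\M$ has only $k_c$ rows, so $\sqrt{g_2g_F}\le\sqrt{k_c}\,g_2$. I would then invoke Property~3 (Assumption~\ref{assumption:faster_spsd2}) with $\B=(\I_n-\PP_\C)\K^{1/2}$, again with $\U_\C^T\B=0$, to get $g_2\le\epsilon\|(\I_n-\PP_\C)\K^{1/2}\|_2^2+\tfrac{\epsilon}{k_c}\|(\I_n-\PP_\C)\K^{1/2}\|_F^2$, and bound these two norms in terms of $E$: since $(\I_n-\PP_\C)\K(\I_n-\PP_\C)\succeq0$,
\[
\|(\I_n-\PP_\C)\K^{1/2}\|_F^2 = \tr\!\big((\I_n-\PP_\C)\K(\I_n-\PP_\C)\big) \le \sqrt{n}\,\|(\I_n-\PP_\C)\K\|_F \le \sqrt{nE},
\]
and likewise $\|(\I_n-\PP_\C)\K^{1/2}\|_2^2=\|(\I_n-\PP_\C)\K(\I_n-\PP_\C)\|_2\le\|(\I_n-\PP_\C)\K\|_F\le\sqrt{E}$. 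This yields $g_2\le\epsilon\sqrt{E}+\tfrac{\epsilon}{k_c}\sqrt{nE}$ and hence $\sqrt{g_2g_F}\le\sqrt{k_c}\big(\epsilon\sqrt{E}+\tfrac{\epsilon}{k_c}\sqrt{nE}\big)$. This is precisely where the improvement over Lemma~\ref{lemma:spsd_det1} lives: there $g_F$ is bounded only by $\epsilon\|(\I_n-\PP_\C)\K^{1/2}\|_F^2\le\epsilon\sqrt{nE}$, so the product $g_2g_F$ carries a full factor $n$, whereas Property~3 moves the $1/k_c$ into $g_2$ and the product carries only $n/k_c$.

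Finally I would assemble the pieces: plugging $f\le\tfrac{10}{9}$ and the above into $(2f\sqrt h+f^2\sqrt{g_2g_F})^2$ produces a bound of the form $\big(\tfrac{20}{9}\sqrt{\epsilon E}+\tfrac{100}{81}\epsilon\sqrt{k_cE}+\tfrac{100}{81}\epsilon\sqrt{n/k_c}\,\sqrt{E}\big)^2$, and then, under the hypothesis $\epsilon^{-1}=o(n/k_c)$ (i.e.\ $\epsilon\gg k_c/n$, which is anyway what makes a sketch of size $o(n)$ worthwhile) together with the mild regime $k_c^2\lesssim n$, the last term dominates, the other two being $o\big(\epsilon\sqrt{n/k_c}\,\sqrt{E}\big)$, so the square is $(1+o(1))\tfrac{10^4}{81^2}\tfrac{\epsilon^2n}{k_c}E\le4\,\epsilon^2n/k_c\cdot\|\K-\C\U^\star\C^T\|_F^2$. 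A union bound over the failure events of Properties~1, 2, 3 gives the stated probability $1-\delta_1-\delta_2-\delta_3$. The part I expect to be the real obstacle is exactly this last one — verifying that the $h$-term ($\sim\epsilon E$) and the $\epsilon^2k_cE$ cross term are genuinely lower order than the target $\epsilon^2n/k_c\cdot E$, which is where the precise form of the asymptotic hypothesis (and the implicit $k_c^2\lesssim n$) gets used, in direct analogy with the role of $\epsilon^{-1}=o(n)$ in Lemma~\ref{lemma:spsd_det1}; everything else is mechanical substitution into Lemma~\ref{lem:spsd_det}.
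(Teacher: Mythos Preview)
Your argument is close in spirit but has a genuine gap: you take $\alpha=\tfrac12$ in Lemma~\ref{lem:spsd_det}, whereas the paper takes $\alpha=0$, and your choice forces the extra hypothesis $k_c^2=o(n)$ that is not part of the lemma. Concretely, your route produces the cross term $\epsilon\sqrt{k_c}\,\sqrt{E}$ (from $\sqrt{k_c}\cdot\epsilon\|(\I_n-\PP_\C)\K^{1/2}\|_2^2$), and to make it $o\big(\epsilon\sqrt{n/k_c}\,\sqrt{E}\big)$ you need $\sqrt{k_c}=o(\sqrt{n/k_c})$, i.e.\ $k_c^2=o(n)$. You flag this as ``the mild regime $k_c^2\lesssim n$'', but the lemma as stated does not assume it, so this is an unproved step.

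The paper sidesteps the issue entirely by setting $\alpha=0$. Then $g_F=\big\|\U_\C^T\S\S^T(\I_n-\PP_\C)\K\big\|_F^2=h$, so Property~2 (applied once, to $\B=(\I_n-\PP_\C)\K$) already controls both $h$ and $g_F$ by $\epsilon\,\|(\I_n-\PP_\C)\K\|_F^2\le\epsilon E$. Meanwhile $g_2=\big\|\U_\C^T\S\S^T(\I_n-\PP_\C)\big\|_2^2$ involves no factor of $\K$ at all, so Property~3 applied to $\B=\I_n-\PP_\C$ gives $g_2\le\epsilon\cdot 1+\tfrac{\epsilon}{k_c}(n-k_c)=\epsilon n/k_c$. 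Hence $\sqrt{g_2g_F}\le\epsilon\sqrt{n/k_c}\,\sqrt{E}$ directly, with no stray $\sqrt{k_c}$ term, and the only asymptotic needed is the stated one, $\epsilon^{-1}=o(n/k_c)$, to absorb the $2f\sqrt{h}$ term. Your step $g_F\le k_c g_2$ and the detour through $\|(\I_n-\PP_\C)\K^{1/2}\|$ are both unnecessary once $\alpha=0$ is chosen; the rest of your outline (union bound, the $f\le 10/9$ and $h$ estimates) is exactly right.
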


\begin{proof}
	Let $f$, $h$, $g_2$, $g_F$, $\alpha$ be defined in Lemma~\ref{lem:spsd_det} and fix $\alpha = 0$.
	Under Assumption~\ref{assumption:faster_spsd1} it holds simultaneously with probability at least $1-\delta_1 - \delta_2$ that
	\begin{align*}
	f \leq \frac{10}{9},
	\qquad
	h = g_F \leq \epsilon \big\| (\I_n - \U_\C \U_\C^T ) \K \big\|_F^2 .
	\end{align*}
	Under Assumption~\ref{assumption:faster_spsd2}, it holds with probability at least $1-\delta_3$ that
	\begin{eqnarray*}
		g_2
		& = & \big\| \U_\C^T \S  \S^T (\I_n - \U_\C \U_\C^T)   + \underbrace{\U_\C^T  (\I_m - \U_\C \U_\C^T) }_{=\0} \big\|_2^2 \\
		& \leq & \epsilon \big\| \I_n - \U_\C \U_\C^T \big\|_2^2 + \frac{\epsilon}{k_c} \big\| \I_n - \U_\C \U_\C^T \big\|_F^2
		\; \leq \; \epsilon + \frac{\epsilon}{k_c} (n-k_c )
		\; = \; \frac{\epsilon n}{k_c }.
	\end{eqnarray*}
	It follows from Lemma~\ref{lem:spsd_det} and the assumption $\epsilon^{-1} = o (n / k_c)$ that
	\begin{align*}
	& \big\|\K - \C \tilde\U \C^T \big\|_F^2 - \big\| \A - \C \U^\star \C^T \big\|_F^2 \\
	& \leq \;  \Big( \frac{20\sqrt{\epsilon}}{9} \big\| \A - \C \U^\star \C^T \big\|_F
	+ \frac{10^2 \epsilon}{9^2} \sqrt{n / k_c } \, \big\| \A - \C \U^\star \C^T \big\|_F \Big)^2 \\
	& \leq \; 4 \epsilon^2 {n/k_c } \, \big\| \A - \C \U^\star \C^T \big\|_F^2 ,
	\end{align*}
	by which the lemma follows.
\end{proof}

Finally, we prove Theorem~\ref{thm:faster_spsd} using Lemma~\ref{lemma:spsd_det1} and Lemma~\ref{lemma:spsd_det2}.
Leverage score sampling, uniform sampling, and count sketch satisfy Assumption~\ref{assumption:faster_spsd1},
and the bounds follow by setting $\epsilon = 0.5 \sqrt{\epsilon' / n}$ and applying Lemma~\ref{lemma:spsd_det1}.
For the three sketching methods, we set $\delta_{1} = 0.01$ and $\delta_{2} = 0.095$.

Gaussian projection and SRHT satisfy  Assumption~\ref{assumption:faster_spsd1} and Assumption~\ref{assumption:faster_spsd2},
and their bounds follow by setting $\epsilon = 0.5 \sqrt{ \epsilon' k_c / n}$ and applying Lemma~\ref{lemma:spsd_det2}.
For Gaussian projection, we set $\delta_{1}  = 0.01$, $\delta_{2} = 0.09$, and $\delta_{3}  = 0.1$.
For SRHT, we set $\delta_{1} = 0.02$, $\delta_{2}   = 0.08$, and $\delta_{3}  = 0.1$.


\section{Proof of Theorem~\ref{thm:exact}} \label{sec:proof_exact}

Since $\C = \K \PP \in \RB^{n\times c}$, $\W = \PP^T \C \in \RB^{c \times c}$,
and $\rk (\S^T \C)  \geq \rk (\W)$, we have that
\begin{equation} \label{eq:thm:connection:4}
\rk(\K) \;\geq\; \rk(\C) \;\geq \; \rk (\S^T \C) \; \geq \; \rk(\W) \textrm{.}
\end{equation}
If $\rk (\C) = \rk (\K)$, there exists a matrix $\X$ such that
$\K = \C \X$.
By left multiplying both sides by $\PP^T$, it follows that
\[
\C^T \; = \; \PP^T \K
\; = \; \PP^T \C \X
\; = \; \W \X ,
\]
and thus $\rk (\W) = \rk (\S^T \C) = \rk (\C) =  \rk (\K)$.
It follows from $\K = \C \X$ and $\C = \X^T \W$ that
\begin{eqnarray*}
\K
\; = \; \X^T \W \X .
\end{eqnarray*}
We let $\Ph = \X \S$,
and it holds that
\begin{eqnarray*}
\Kgen
& = & \C (\S^T \C)^\dag (\S^T \K \S) (\C^T \S)^\dag \C^T \\
& = & \X^T \W (\S^T \X^T \W)^\dag (\S^T \X^T \W \X \S) (\W \X \S)^\dag \W \X \\
& = & \X^T \W (\Ph^T \W)^\dag (\Ph^T \W \Ph) (\W \Ph)^\dag \W \X .
\end{eqnarray*}
Let $\rk (\W) = \rk (\C) = \rk (\S^T \C) = \rk (\K) = \rho$.
Since $\W$ is symmetric, we denote the rank-$\rho$ eigenvalue decomposition of $\W$ by
\[
\W \; = \; \underbrace{\U_\W}_{c\times \rho} \underbrace{\Lam_\W }_{\rho \times \rho} \underbrace{\U_\W^T}_{\rho \times c} .
\]
Since $\S^T \C = \Ph^T \W$ and $\rk (\S^T \C) = \rk (\W) = \rho $,
we have that $\rk (\Ph^T \W) = \rk (\W) = \rho$.
The $n\times \rho$ matrix $\Ph^T \U_\W$ must have full column rank,
otherwise $\rk (\Ph^T \W) < \rho$.
Thus we have
\[
(\Ph^T \W)^\dag
\; = \; (\Ph^T \U_\W \Lam_\W \U_\W^T)^\dag
\; = \; (\Lam_\W \U_\W^T)^\dag (\Ph^T \U_\W)^\dag .
\]
It follows that
\begin{eqnarray*}
\Kgen
& = & \X^T \W \underbrace{(\Lam_\W \U_\W^T)^\dag}_{c\times \rho} \underbrace{(\Ph^T \U_\W)^\dag}_{\rho\times n} \underbrace{(\Ph^T \U_\W) }_{n\times \rho}
    \Lam_\W  (\U_\W^T \Ph) (\U_\W^T \Ph)^\dag  (\U_\W \Lam_\W)^\dag    \W \X \\
& = & \X^T \U_\W \Lam_\W \U_\W \X
\; = \; \X^T \W \X
\; = \; \K .
\end{eqnarray*}
This shows that the fast model is exact.
To this end, we have shown that if $\rk (\C) = \rk (\K)$, then the fast model is exact.

Conversely, if the fast model is exact, that is,
$\K = \C (\S^T \C)^\dag (\S^T \K \S) (\C^T \S)^\dag \C^T$,
we have that $\rk(\K) \leq \rk(\C)$.
It follows from \eqref{eq:thm:connection:4} that $\rk (\K) = \rk (\C)$.


\section{Proof of Theorem~\ref{thm:lower_bound}} \label{sec:proof_lower_bound}

We prove Theorem~\ref{thm:lower_bound} by constructing an adversarial case.
Theorem~\ref{thm:lower_bound} is a direct consequence of the following theorem.

\begin{theorem} \label{thm:lower_bound_sketch_A}
Let $\A$ be the $n\times n$ symmetric matrix defined in Lemma~\ref{lem:nystrom_residual_new} with $\alpha \rightarrow 1$
and $k$ be any positive integer smaller than $n$.
Let $\PM$ be any subset of $[n]$ with cardinality $c$
and $\C\in \RB^{n\times c}$ contain $c$ columns of $\A$ indexed by $\PM$.
Let $\S$ be any $n\times s$ column selection matrix satisfying $\PM \subset \SM$, where $\SM \subset [n]$ is the index set formed by $\S$.
Then the following inequality holds:
\begin{eqnarray*}
\frac{\|\A - \C (\S^T \C)^\dag (\S^T \A \S) (\C^T \S)^\dag \C^T \|_F^2}{\| \A - \A_k\|_F^2 }
& \geq & \frac{n-c}{n-k} \Big( 1 + \frac{2 k }{c} \Big) + \frac{n-s}{n-k} \frac{k (n-s)}{s^2} .
\end{eqnarray*}
\end{theorem}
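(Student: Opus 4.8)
The plan is to split the fast-model error exactly into a \emph{prototype part} and a \emph{sketching part}, and then to evaluate both parts on the adversarial matrix $\A$ of Lemma~\ref{lem:nystrom_residual_new}, which is block-diagonal with each block a shifted all-ones matrix and $\alpha\to 1$. The starting observation, valid for \emph{any} $n\times n$ matrix $\A$, any $\C\in\RB^{n\times c}$, and any sketching matrix $\S$: writing $\C\C^\dag$ for the (symmetric) orthogonal projector onto $\range(\C)$, the prototype approximation is $\Kmod=\C\Umod\C^T=\C\C^\dag\A\,\C\C^\dag$ and the fast model is $\Kgen=\C\Ugen\C^T$, so $\Kmod-\Kgen=\C(\Umod-\Ugen)\C^T$ has both its column space and its row space inside $\range(\C)$. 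Since $\A-\Kmod=\A-\C\C^\dag\A\,\C\C^\dag$ is Frobenius-orthogonal to every such matrix (the cross term vanishes because $\C^T\C\C^\dag=\C^T$ and $\C\C^\dag\C=\C$), one gets the identity
\[
\big\|\A-\Kgen\big\|_F^2 \;=\; \big\|\A-\Kmod\big\|_F^2 \;+\; \big\|\C(\Umod-\Ugen)\C^T\big\|_F^2 ,
\]
which needs no assumption on $\rk(\S^T\C)$ and isolates exactly the extra error incurred by solving the sketched system $\min_\U\|\S^T(\A-\C\U\C^T)\S\|_F$ instead of the exact one.

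Next I would lower bound the prototype term $\|\A-\Kmod\|_F^2$ on this $\A$. Because $\A$ is block-diagonal and $\C$ consists of $c$ of its columns, the projector $\C\C^\dag$ respects the block partition, so $\C\C^\dag\A\,\C\C^\dag$ and its residual decompose block by block; in the limit $\alpha\to 1$ this reproduces the prototype-model lower bound of \citet{wang2014modified}, namely $\|\A-\Kmod\|_F^2\ge\frac{n-c}{n-k}\big(1+\tfrac{2k}{c}\big)\|\A-\A_k\|_F^2$. If the closed-form residual is already packaged in Lemma~\ref{lem:nystrom_residual_new}, this step is essentially bookkeeping.

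The genuinely new estimate is the lower bound on the sketching term $\|\C(\Umod-\Ugen)\C^T\|_F^2$. Here the hypothesis $\PM\subset\SM$ is essential: it forces the rows of $\S^T\C$ indexed by $\PM$ to sit exactly on the blocks carrying the selected columns, so that $(\S^T\C)^\dag$, $(\C^T\S)^\dag$, and the triple product $\Ugen=(\S^T\C)^\dag(\S^T\A\S)(\C^T\S)^\dag$ can all be written out explicitly in the $\alpha\to 1$ limit and compared against $\Umod=\C^\dag\A(\C^\dag)^T$. Pushing the discrepancy through $\C(\cdot)\C^T$, I expect to recover exactly the residual term $\frac{n-s}{n-k}\cdot\frac{k(n-s)}{s^2}\|\A-\A_k\|_F^2$, the factor $(n-s)^2/s^2$ being the footprint of the distortion an $s$-column sketch induces on the $n-s$ coordinates it misses. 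Adding the two estimates and letting $\alpha\to 1$ yields the claimed inequality, and Theorem~\ref{thm:lower_bound} follows since its $\Kgen$ is precisely $\C(\S^T\C)^\dag(\S^T\A\S)(\C^T\S)^\dag\C^T$; the special cases $s=c$ and $s=n$ recover the Nystr\"om and prototype lower bounds of \citet{wang2013improving} and \citet{wang2014modified}.

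The main obstacle is this last step: evaluating the Moore--Penrose inverses of $\S^T\C$ and $\C^T\S$ for the adversarial matrix in the degenerate limit $\alpha\to 1$, where $\A$ collapses to low rank and these inverses are discontinuous unless the rank pattern is tracked carefully, and then verifying that the interaction between the columns chosen by $\PP$ and the extra rows chosen by $\S$ produces no cancellation weakening the bound---it is exactly the containment $\PM\subset\SM$ that rules this out, and turning that observation into a rigorous inequality is the crux of the argument.
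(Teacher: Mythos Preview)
Your Pythagorean split $\|\A-\Kgen\|_F^2=\|\A-\Kmod\|_F^2+\|\Kmod-\Kgen\|_F^2$ is correct (since $\C^T(\A-\Kmod)\C=0$) and is a clean way to organize the argument. The paper does not make this split explicit; instead it first exploits the block-diagonal structure of $\A=\diag(\B,\dots,\B)$ to reduce to a single $p\times p$ block $\B$ with $c_i$ selected columns and $s_i$ sketch columns, computes the \emph{full} fast-model error on that block in closed form (Lemma~\ref{lem:lower_bound_sketch_B}), obtaining in the limit $(p-c_i)(1+2/c_i)+(p-s_i)^2/s_i^2$, and then minimizes $\sum_i$ over allocations $(c_i)$ and $(s_i)$ with $\sum c_i=c$, $\sum s_i=s$ via Jensen/AM--HM. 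Your split would recover exactly the same two summands---the first is the per-block prototype error, the second is the per-block sketching error---so the difference is organizational, not mathematical. What your decomposition does make transparent (and the paper's proof hides) is that on this adversarial matrix the sketching penalty per block depends only on $s_i$, not on $c_i$; this is why the two minimizations decouple.

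The part you flag as the main obstacle is indeed where all the work lies, and your plan does not yet say how to do it. The paper's answer is brute force: keep $\alpha<1$ fixed, use the partitioned Moore--Penrose inverse formula (Lemma~\ref{lem:pinv_partitioned_matrix}) together with the Sherman--Morrison identity $(a\I_c+b\1_c\1_c^T)^{-1}=a^{-1}\I_c-\tfrac{b}{a(a+bc)}\1_c\1_c^T$ to write $(\S^T\C)^\dag$ and then $\Ugen$ explicitly as rank-one perturbations of scalar multiples of the identity, expand $\|\B-\tilde\B\|_F^2$ symbolically, and only then let $\alpha\to 1$. Your worry about discontinuity of the pseudo-inverse at $\alpha=1$ is exactly why the limit must come last: for $\alpha<1$ the matrix $\W=(1-\alpha)\I_c+\alpha\1_c\1_c^T$ is nonsingular, all inverses are honest inverses, and every intermediate quantity is a rational function of $\alpha$ whose limit is finite. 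There is no shortcut here that avoids this computation; your Pythagorean split lets you skip re-deriving the prototype piece, but the sketching piece still requires the same explicit evaluation that constitutes Lemma~\ref{lem:lower_bound_sketch_B}.
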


\begin{proof}
Let $\A$ and $\B$ be defined in Lemma~\ref{lem:nystrom_residual_new}.
We prove the theorem using Lemma~\ref{lem:nystrom_residual_new} and Lemma~\ref{lem:lower_bound_sketch_B}.
Let $n = p k$.
Let $\C$ consist of $c$ column sampled from $\A$
and $\hat{\C}_i$ consist of $c_i$ columns sampled from the $i$-th diagonal block of $\A$.
Thus $\C = \diag (\hat{\C}_1 , \cdots , \hat{\C}_k)$.
Without loss of generality, we assume $\hat{\C}_i$ consists of the first $c_i$ columns of $\B$.
Let $\hat\S = \diag \big(\hat\S_1, \cdots , \hat\S_k \big)$ be an $n\times s$ column selection matrix,
where $\hat\S_i$ is a $p\times s_i$ column selection matrix and $s_1 + \cdots s_k = s$.
Then the $\U$ matrix is computed by
\begin{eqnarray}
\U
& = & \big(\S^T \C \big)^\dag \big(\S^T \A \S \big) \big(\C^T \S \big)^\dag \nonumber\\
& = &
\big[ \diag\big( \hat{\S}_1^T \hat{\C}_1 , \cdots , \hat{\S}_k^T \hat{\C}_k \big) \big]^\dag
        \diag\big( \hat{\S}_1^T \B \hat\S_1 , \cdots , \hat{\S}_k^T \B \hat\S_k \big)
        \big[ \diag\big( \hat{\C}_1^T \hat{\S}_1 , \cdots , \hat{\C}_k^T \hat{\S}_k \big) \big]^\dag
         \nonumber\\
& = & \diag\Big( \big( \hat{\S}_1^T \hat{\C}_1\big)^\dag \big( \hat{\S}_1^T \B \hat\S_1 \big) \big( \hat{\C}_1^T \hat{\S}_1\big)^\dag ,
    \cdots , \big( \hat{\S}_k^T \hat{\C}_k\big)^\dag \big( \hat{\S}_k^T \B \hat\S_k \big) \big( \hat{\C}_k^T \hat{\S}_k \big)^\dag \Big)  \textrm{.} \nonumber
\end{eqnarray}
The approximation formed by the fast model is the block-diagonal matrix whose the $i$-th ($i\in [k]$) diagonal block is the $p\times p$ matrix
\begin{eqnarray*}
\big[\tilde{\A}^{\textrm{fast}}_{c,s} \big]_{i i}
& = &  \hat\C_i \big( \hat{\S}_i^T \hat{\C}_i \big)^\dag
    \big( \hat{\S}_i^T \B \hat\S_i \big)
    \big( \hat{\C}_i^T \hat{\S}_i \big)^\dag \hat\C_i^T . \nonumber
\end{eqnarray*}
It follows from Lemma~\ref{lem:lower_bound_sketch_B} that for any $i\in [k]$,
\begin{align}
\lim_{\alpha \rightarrow 1} \frac{\|\B - \big[\tilde{\A}^{\textrm{fast}}_{c,s} \big]_{i i} \|_F^2}{(1-\alpha)^2 }
\; = \; (p-c_i) \Big(1+ \frac{2}{c_i} \Big) + \frac{(p-s_i)^2}{ s_i^2} . \nonumber
\end{align}
Thus
\begin{eqnarray*}
\lim_{\alpha \rightarrow 1} \frac{\|\A - \tilde{\A}^{\textrm{fast}}_{c,s} \|_F^2}{(1-\alpha)^2 }
& = & \lim_{\alpha \rightarrow 1} \sum_{i=1}^k  \frac{\|\B - \big[\tilde{\A}^{\textrm{fast}}_{c,s} \big]_{i i} \|_F^2}{(1-\alpha)^2 } \\
& = & \sum_{i=1}^k (p-c_i) \Big(1+ \frac{2}{c_i} \Big) + \frac{(p-s_i)^2}{ s_i^2} \\
& = & \bigg( \sum_{i=1}^k p - c_i - 2 \bigg) +  \bigg( 2p \sum_{i=1}^k \frac{1}{c_i} \bigg)
    + \bigg( p^2 \sum_{i=1}^k \frac{1}{s^2_i} \bigg)  - \bigg( 2 p \sum_{i=1}^k \frac{1}{s_i} \bigg) + k \\
& \geq & n - c - 2k + \frac{2 n k}{c} + \frac{k n^2}{s^2} - \frac{2 n k}{s} + k \\
& = & (n-c) \Big( 1 + \frac{2 k }{c} \Big) + \frac{k (n-s)^2}{s^2} .
\end{eqnarray*}
Here the inequality follows by minimizing over $c_1 , \cdots , c_k$ and $s_1 , \cdots , s_k$
with constraints $\sum_i c_i = c$ and $\sum_i s_i = s$.
Finally, it follows from Lemma~\ref{lem:nystrom_residual_new} that
\begin{eqnarray*}
\lim_{\alpha \rightarrow 1} \frac{\|\A - \tilde{\A}^{\textrm{fast}}_{c,s} \|_F^2}{\| \A - \A_k\|_F^2 }
& \geq & \frac{n-c}{n-k} \Big( 1 + \frac{2 k }{c} \Big) + \frac{n-s}{n-k} \frac{k (n-s)}{s^2} .
\end{eqnarray*}
\end{proof}

\subsection{Key Lemmas} \label{sec:proof:lower_bound:lemma}

Lemma \ref{lem:pinv_partitioned_matrix} provides a useful tool for expanding the Moore-Penrose inverse of partitioned matrices.

\begin{lemma}[Page~179 of \cite{adi2003inverse}] \label{lem:pinv_partitioned_matrix}
Given a matrix $\X\in \RBmn$ of rank $c$ which has a nonsingular $c\times c$ submatrix $\X_{1 1}$.
By rearrangement of columns and rows by permutation matrices $\PP$ and $\Q$,
the submatrix $\X_{1 1}$ can be bought to the top left corner of $\X$, that is,
\[
\PP \X \Q \; =\; \left[
             \begin{array}{cc}
               \X_{1 1} & \X_{1 2} \\
               \X_{2 1} & \X_{2 2} \\
             \end{array}
           \right].
\]
Then the Moore-Penrose inverse of $\X$ is
\begin{align}
\X^\dag \; =\;
\Q \left[
     \begin{array}{c}
       \I_c \\
       \T^T \\
     \end{array}
   \right]
\big( \I_c + \T \T^T \big)^{-1} \X_{1 1}^{-1}   \big( \I_c + \H^T \H \big)^{-1}
\left[
  \begin{array}{cc}
    \I_c & \H^T  \\
  \end{array}
\right] \PP ,\nonumber
\end{align}
where $\T = \X_{1 1}^{-1} \X_{1 2}$ and
$\H = \X_{2 1} \X_{1 1}^{-1}$.
\end{lemma}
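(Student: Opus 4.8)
The plan is to derive a full-rank factorization of $\X$ from the rank hypothesis, read off $\X^\dag$ via the textbook formula for the pseudoinverse of such a factorization, and then undo the permutations.

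First I would use that $\rk(\X) = c$ together with the nonsingularity of $\X_{11}$: the first $c$ rows of $\PP\X\Q$ then span its entire row space, and the first $c$ columns span its entire column space. Writing the bottom block of rows as a linear combination of the top block and matching the first block-column identifies the multiplier as $\H = \X_{21}\X_{11}^{-1}$, after which the second block-column forces the Schur-complement identity $\X_{22} = \H\X_{12} = \X_{21}\X_{11}^{-1}\X_{12}$; symmetrically, the right block of columns equals the left block times $\T = \X_{11}^{-1}\X_{12}$. Putting these together gives
\[
\PP\X\Q \;=\; \F\,\G, \qquad \F \;=\; \begin{bmatrix} \I_c \\ \H \end{bmatrix}, \qquad \G \;=\; \X_{11}\begin{bmatrix} \I_c & \T \end{bmatrix},
\]
with $\F$ of full column rank $c$ and $\G$ of full row rank $c$.

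Next I would invoke the standard fact that a full-rank factorization $\M = \F\G$ satisfies $\M^\dag = \G^\dag\F^\dag$ with $\F^\dag = (\F^T\F)^{-1}\F^T$ and $\G^\dag = \G^T(\G\G^T)^{-1}$, both Gram matrices being invertible. Here $\F^T\F = \I_c + \H^T\H$, hence $\F^\dag = (\I_c + \H^T\H)^{-1}\begin{bmatrix} \I_c & \H^T \end{bmatrix}$; and $\G\G^T = \X_{11}(\I_c + \T\T^T)\X_{11}^T$, hence, after cancelling the invertible factor $\X_{11}^T$, $\G^\dag = \begin{bmatrix} \I_c \\ \T^T \end{bmatrix}(\I_c + \T\T^T)^{-1}\X_{11}^{-1}$. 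The product $\G^\dag\F^\dag$ is precisely the bracketed expression of the statement, i.e.\ equals $(\PP\X\Q)^\dag$. Finally, permutation matrices are orthogonal, so the identity $(\PP^T\M\Q^T)^\dag = \Q\,\M^\dag\,\PP$ applied with $\M = \PP\X\Q$ yields $\X^\dag = \Q(\PP\X\Q)^\dag\PP$, which is the claimed formula.

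The only genuinely load-bearing step is the full-rank factorization: it is valid because $\rk(\X)$ is \emph{exactly} $c$, not merely because $\X_{11}$ is invertible — this is what pins down $\X_{22}$ in terms of the other blocks and lets the whole matrix be written through its leading $c$ rows and columns. Everything after that is routine algebra, relying only on the two quotable identities $\M^\dag = \G^\dag\F^\dag$ for a full-rank factorization and $(\PP^T\M\Q^T)^\dag = \Q\M^\dag\PP$ for orthogonal $\PP,\Q$.
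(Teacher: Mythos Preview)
Your proof is correct. Note, however, that the paper does not actually prove this lemma: it is quoted directly from \cite{adi2003inverse} (Ben-Israel and Greville) and used as a black-box tool in the lower-bound construction. Your argument via the full-rank factorization $\PP\X\Q = \F\G$ with $\F = \bigl[\begin{smallmatrix}\I_c\\ \H\end{smallmatrix}\bigr]$ and $\G = \X_{11}[\,\I_c\ \T\,]$, followed by the identity $(\F\G)^\dag = \G^\dag\F^\dag$ for full-rank factors, is precisely the standard derivation and is essentially how the result is obtained in the cited reference. The key observation you correctly isolate --- that $\rk(\X)=c$ forces the Schur complement $\X_{22}-\X_{21}\X_{11}^{-1}\X_{12}$ to vanish --- is exactly what makes the factorization go through.
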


Lemmas~\ref{lem:nystrom_residual_new} and \ref{lem:lower_bound_sketch_B} will be used to prove Theorem~\ref{thm:lower_bound_sketch_A}.

\begin{lemma}[{Lemma~19 of \cite{wang2013improving}}] \label{lem:nystrom_residual_new}
Given $n$ and $k$, we let $\B$ be an $\frac{n}{k} \times \frac{n}{k}$ matrix
whose diagonal entries equal to one and off-diagonal entries equal to $\alpha \in [0, 1)$.
We let $\A$ be an $n\times n$ block-diagonal matrix
\begin{eqnarray}\label{eq:construction_bad_nystrom_blk}
\A \; =\;  \diag(\underbrace{\B, \cdots , \B}_{k \textrm{ blocks}}) .
\end{eqnarray}
Let $\A_k$ be the best rank-$k$ approximation to the matrix $\A$,
then we have that
\begin{equation}
\| \A - \A_k \|_F^2 \; =\; (1-\alpha)^2 (n-k)  \textrm{.} \nonumber
\end{equation}
\end{lemma}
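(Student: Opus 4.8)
The plan is to diagonalize $\A$ explicitly and then read off the truncation error from its spectrum. Write $p = n/k$ for the block size. The key structural fact is that $\B = (1-\alpha)\I_p + \alpha\,\1_p\1_p^T$, i.e.\ $\B$ is a rank-one perturbation of a multiple of the identity. First I would compute its eigendecomposition: the vector $\1_p$ is an eigenvector of $\B$ with eigenvalue $1+(p-1)\alpha$, while every vector orthogonal to $\1_p$ is an eigenvector with eigenvalue $1-\alpha$. Hence $\B$ has the eigenvalue $1+(p-1)\alpha$ with multiplicity one and the eigenvalue $1-\alpha$ with multiplicity $p-1$.

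Next, since $\A = \diag(\B,\dots,\B)$ is block diagonal with $k$ identical blocks, its spectrum is the union (counted with multiplicity) of the spectra of the blocks: $k$ copies of $1+(p-1)\alpha$ and $k(p-1) = n-k$ copies of $1-\alpha$. Because $\alpha\in[0,1)$ and $p\geq 1$, we have $1+(p-1)\alpha \geq 1-\alpha \geq 0$, so $\A$ is symmetric positive semidefinite and these eigenvalues are precisely its singular values; moreover the $k$ largest singular values are exactly the $k$ copies of $1+(p-1)\alpha$.

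Finally I would invoke the optimality of the truncated SVD (recalled in Section~\ref{sec:notation}): the best rank-$k$ approximation $\A_k$ in Frobenius norm retains the $k$ largest singular values of $\A$ and zeroes out the rest, so
\[
\|\A - \A_k\|_F^2 \;=\; \sum_{i=k+1}^{n} \sigma_i^2(\A) \;=\; (n-k)(1-\alpha)^2,
\]
which is the stated identity.

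There is no genuinely hard step here; the only point that deserves a moment's care is verifying the ordering of the eigenvalues, i.e.\ that $1+(p-1)\alpha$ is never below $1-\alpha$, so that ``the top $k$'' singular values really do pick out one eigenvalue from each of the $k$ blocks rather than mixing the two eigenvalue levels. This inequality is equivalent to $p\alpha\geq 0$ and thus holds for all $\alpha\in[0,1)$ and all $p\geq 1$, so the computation needs no boundary-case analysis; in the degenerate case $\alpha = 0$ one simply has $\A = \I_n$ and the formula correctly reduces to $n-k$.
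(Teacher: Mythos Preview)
Your proof is correct and is the standard spectral argument. Note, however, that the paper does not actually supply its own proof of this lemma: it is stated as Lemma~19 of \cite{wang2013improving} and invoked without proof in Appendix~\ref{sec:proof:lower_bound:lemma}. So there is no in-paper argument to compare against; your computation of the spectrum of $\B=(1-\alpha)\I_p+\alpha\1_p\1_p^T$, the block-diagonal lift to $\A$, and the appeal to the Eckart--Young characterization of $\A_k$ is exactly how one would expect the cited result to be established.
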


\begin{lemma} \label{lem:inverse_expand}
The following equality holds for any nonzero real number $a$:
\begin{eqnarray*}
\big( a \I_c + b \1_c \1_c^T \big)^{-1}
& = & a^{-1} \I_c - \frac{b}{a (a + b c)} \1_c \1_c^T .
\end{eqnarray*}
\end{lemma}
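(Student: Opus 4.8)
The plan is to prove the identity by direct verification: I will multiply $a \I_c + b \1_c \1_c^T$ by the matrix claimed to be its inverse and check that the product is $\I_c$. This avoids invoking any named result, though the identity is of course just the instance of the Sherman--Morrison formula $(\A + \u\v^T)^{-1} = \A^{-1} - \frac{\A^{-1} \u \v^T \A^{-1}}{1 + \v^T \A^{-1} \u}$ obtained by taking $\A = a \I_c$, $\u = \1_c$, and $\v = b \1_c$.

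First I would record the one algebraic fact that drives the computation: $\1_c^T \1_c = c$, and hence $\1_c \1_c^T \1_c \1_c^T = c\,\1_c \1_c^T$, i.e.\ $\1_c \1_c^T$ is $c$ times an idempotent. Then I would expand
\[
\big( a \I_c + b \1_c \1_c^T \big)\Big( a^{-1} \I_c - \tfrac{b}{a(a+bc)} \1_c \1_c^T \Big)
\]
into four terms, use the fact above to collapse the $\1_c \1_c^T \1_c \1_c^T$ term to $c\,\1_c \1_c^T$, and collect the coefficient of $\1_c \1_c^T$. A one-line computation shows this coefficient equals $\frac{-ab + b(a+bc) - b^2 c}{a(a+bc)} = 0$, so the product is $\I_c$. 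Since $a \I_c + b \1_c \1_c^T$ is symmetric, the identical computation applies on the other side, so the claimed matrix is a genuine two-sided inverse.

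There is essentially no obstacle: the only caveats are that $a \neq 0$ (assumed in the statement) and that $a + bc \neq 0$, which is precisely the condition for $a \I_c + b \1_c \1_c^T$ to be invertible in the first place and which holds in the applications of this lemma (for instance with $a = 1-\alpha$, $b = \alpha$, $\alpha \in [0,1)$, so that $a + bc = 1 + (c-1)\alpha > 0$). I would state this caveat explicitly so that the displayed inverse is well defined wherever the lemma is used.
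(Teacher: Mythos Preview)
Your proof is correct. The paper's own proof takes a slightly different route: it simply cites the Sherman--Morrison--Woodbury identity
\[
(\X + \Y \Z \R)^{-1}  = \X^{-1} - \X^{-1} \Y (\Z^{-1} + \R \X^{-1} \Y)^{-1} \R \X^{-1}
\]
and observes that the lemma follows directly by specialization. You instead verify the identity by direct multiplication, using $\1_c \1_c^T \1_c \1_c^T = c\,\1_c \1_c^T$. Your approach is more elementary and self-contained (no outside result is invoked), and it makes the invertibility condition $a+bc\neq 0$ explicit, which the paper leaves implicit. The paper's approach is a one-line citation, which is shorter but relies on the reader supplying the specialization. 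Either is perfectly adequate for a lemma of this size.
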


\begin{proof}
The lemma directly follows from the Sherman-Morrison-Woodbury matrix identity
\[
(\X + \Y \Z \R)^{-1}  = \X^{-1} - \X^{-1} \Y (\Z^{-1} + \R \X^{-1} \Y)^{-1} \R \X^{-1} .
\]
\end{proof}

\begin{lemma} \label{lem:lower_bound_sketch_B}
Let $\B$ be any $n\times n$ matrix with diagonal entries equal to one and off-diagonal entries equal to $\alpha$.
Let $\C = \B \PP \in \RB^{n\times c}$;
let $\tilde{\B} = \C (\S^T \C)^\dag (\S^T \K \S) (\C^T \S)^\dag \C^T$
be the fast SPSD matrix approximation model of $\B$.
Let $\PM$ and $\SM$ be the index sets formed by $\PP$ and $\S$, respectively.
If $\PM \subset \SM$,
the error incurred by the fast model satisfies
\begin{align}
\lim_{\alpha \rightarrow 1} \frac{\|\B - \tilde\B \|_F^2}{(1-\alpha)^2 }
\; \geq \; (n-c) \Big(1+ \frac{2}{c} \Big) + \frac{(n-s)^2}{ s^2} \textrm{.} \nonumber
\end{align}
\end{lemma}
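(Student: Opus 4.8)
The plan is to reduce to a canonical configuration, compute $\tilde\B$ in closed form via the partitioned pseudo-inverse formula, and then expand in powers of $1-\alpha$.

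\textbf{Step 1 (reduction).} Put $\beta=1-\alpha$, so $\B=\beta\I_n+\alpha\1_n\1_n^T$ and $\beta\to 0$ as $\alpha\to 1$. Because $\B$ is invariant under simultaneous row/column permutations (and, for $\beta>0$, because rescaling $\PP$ on the right by an invertible diagonal matrix leaves $\tilde\B$ unchanged, as $\S^T\C$ then has full column rank), we may assume $\PM=[c]\subseteq\SM=[s]$ and that $\PP,\S$ are the corresponding $0/1$ selection matrices. Then
\[
\C=\beta\begin{bmatrix}\I_c\\\0\end{bmatrix}+\alpha\1_n\1_c^T,\qquad
\S^T\C=\begin{bmatrix}\beta\I_c+\alpha\1_c\1_c^T\\\alpha\1_{s-c}\1_c^T\end{bmatrix},\qquad
\S^T\B\S=\beta\I_s+\alpha\1_s\1_s^T,
\]
and every matrix that appears below lies in the commutative algebra generated by identity blocks and all-ones outer products on the three-way index partition $[c]\sqcup([s]\setminus[c])\sqcup([n]\setminus[s])$.

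\textbf{Step 2 (pseudo-inverse).} For $\beta>0$ the top $c\times c$ block $\X_{11}=\beta\I_c+\alpha\1_c\1_c^T$ of $\S^T\C$ is nonsingular, with $\X_{11}^{-1}=\beta^{-1}\I_c-\frac{\alpha}{\beta(\beta+\alpha c)}\1_c\1_c^T$ by Lemma~\ref{lem:inverse_expand}. Applying Lemma~\ref{lem:pinv_partitioned_matrix} with this submatrix --- the permutations are trivial and, since $\S^T\C$ has only $c$ columns, the ``$\T$'' block is empty --- gives
\[
(\S^T\C)^\dag=\X_{11}^{-1}\bigl(\I_c+\H^T\H\bigr)^{-1}\begin{bmatrix}\I_c & \H^T\end{bmatrix},\qquad \H=\alpha\1_{s-c}\1_c^T\X_{11}^{-1}=\tfrac{\alpha}{\beta+\alpha c}\1_{s-c}\1_c^T .
\]
Since $\H^T\H$ is a scalar multiple of $\1_c\1_c^T$, every factor here is of the form $a\I_c+b\1_c\1_c^T$ (or a block of such), so $(\S^T\C)^\dag$, and by transposing $(\C^T\S)^\dag$, is given explicitly.

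\textbf{Step 3 (assemble and take the limit).} Substitute the Step-2 expressions into $\tilde\B=\C(\S^T\C)^\dag(\S^T\B\S)(\C^T\S)^\dag\C^T$ and multiply out. Because all the factors remain inside the same low-dimensional algebra, $\tilde\B$ --- hence $\B-\tilde\B$ --- is an explicit matrix that is block-constant on the index partition with entries rational in $\beta$; summing squared entries with the right block multiplicities gives $\|\B-\tilde\B\|_F^2$ as a rational function of $\beta$. One checks it vanishes at $\beta=0$ (as it must, since $\tilde\B=\B$ when $\alpha=1$), divides by $(1-\alpha)^2=\beta^2$, and sends $\beta\to 0$ to obtain exactly $(n-c)\bigl(1+\tfrac2c\bigr)+\tfrac{(n-s)^2}{s^2}$; this yields the asserted inequality, and the edge cases $s=c$ and $s=n$ fall out of the same formula.

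\textbf{Expected main obstacle.} The work is the bookkeeping in Steps~2--3 combined with the $\beta\to 0$ limit: $(\S^T\C)^\dag$ contains $\beta^{-1}$ terms, the four-fold product produces terms as singular as $\beta^{-2}$, and these must cancel against compensating growth in the other factors so that the $\beta^2$-normalized residual converges to a finite value. Keeping the whole computation inside the commutative algebra of $\{\I,\1\1^T\}$-type blocks on $[c]\sqcup([s]\setminus[c])\sqcup([n]\setminus[s])$ --- so that each matrix product collapses to a few scalar coefficients --- is what makes the cancellations visible and the limit computable; Lemmas~\ref{lem:pinv_partitioned_matrix} and~\ref{lem:inverse_expand} supply exactly the identities needed.
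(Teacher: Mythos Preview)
Your proposal is correct and follows essentially the same route as the paper: assume without loss of generality that $\PM=[c]\subset\SM=[s]$, compute $(\S^T\C)^\dag$ via Lemma~\ref{lem:pinv_partitioned_matrix} with exactly the same $\X_{11}=\beta\I_c+\alpha\1_c\1_c^T$ and $\H=\frac{\alpha}{\beta+\alpha c}\1_{s-c}\1_c^T$, multiply out $\tilde\B$ block by block, and take $\alpha\to 1$ (the paper in fact obtains equality, and offloads the final symbolic simplification to MATLAB). Your ``commutative algebra of $\{\I,\1\1^T\}$-blocks'' framing is a tidy way to organize the same bookkeeping the paper carries out explicitly through the constants $\gamma_1,\dots,\gamma_6$.
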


\begin{proof}
Let $\B_{1} = \S^T \B \S\in \RB^{s\times s}$
and $\C_1 = \S^T \C = \S^T \B \PP \in \RB^{s\times c}$.
Without loss of generality, we assume that $\PP$ selects the first $c$ columns and $\S$ selects the first $s$ columns.
We partition $\B$ and $\C$ by:
\[
\B = \left[
       \begin{array}{cc}
         \B_{1} & \B_{3}^T \\
         \B_{3} & \B_{2} \\
       \end{array}
     \right]
\qquad
\textrm{ and }
\qquad
\C = \left[
       \begin{array}{c}
         \C_1  \\
         \C_{2} \\
       \end{array}
     \right]
      = \left[
       \begin{array}{c}
         \W  \\
         \C_{1 2} \\
         \C_{2} \\
       \end{array}
     \right]
     \textrm{.}
\]
We further partition $\B_1\in \RB^{s\times s}$ by
\[
\B_1 = \left[
       \begin{array}{cc}
         \W & \C_{1 2}^T \\
         \C_{1 2} & \B_{1 2} \\
       \end{array}
     \right] ,
\]
where
\begin{eqnarray*}
\C_{1 2} = \alpha \1_{s-c} \1_c^T
\quad \textrm{ and } \quad
\B_{1 2} = (1-\alpha) \I_{s-c} + \alpha \1_{s-c} \1_{s-c}^T .
\end{eqnarray*}
The $\U$ matrix is computed by
\[
\U = (\S^T \C)^\dag (\S^T \B \S) (\C^T \S)^\dag
= \C_1^\dag \B_{1} (\C_1^\dag)^T .
\]
It is not hard to see that $\C_1$ contains the first $c$ rows of $\B_{1}$.

We expand the Moore-Penrose inverse of $\C_1$ by Lemma~\ref{lem:pinv_partitioned_matrix} and obtain
\[
\C_1^\dag = \W^{-1} \big( \I_c + \H^T \H \big)^{-1}
\left[
  \begin{array}{cc}
    \I_c & \H^T \\
  \end{array}
\right],
\]
where
\begin{equation*}
\W^{-1}
\; = \; \Big( (1-\alpha) \I_c + \alpha \1_c \1_c^T \Big)^{-1}
\; = \; \frac{1}{1-\alpha} \I_c - \frac{\alpha}{(1-\alpha)(1-\alpha+c\alpha)} \1_{c} \1_{c}^T
\end{equation*}
and
\[
\H = \C_{1 2} \W^{-1} = \frac{\alpha}{1-\alpha + c\alpha} \1_{s-c} \1_c^T.
\]
It is easily verified that $\H^T \H = \big( \frac{\alpha}{1-\alpha + c\alpha} \big)^2 (s-c) \1_c \1_c^T$.
It follows from Lemma~\ref{lem:inverse_expand} that
\begin{eqnarray*}
(\I_c + \H^T \H)^{-1}
& = & \I_c - \frac{(s-c) \alpha^2}{c (s-c) \alpha^2 + (1-\alpha + c\alpha)^2} \1_c \1_c^T.
\end{eqnarray*}
Then we obtain
\begin{eqnarray} \label{eq:c1_pinv}
\C_1^\dag
& = & \W^{-1} \big( \I_c + \H^T \H \big)^{-1}
\left[
  \begin{array}{cc}
    \I_c & \H^T \\
  \end{array}
\right] \nonumber \\
& = & \Big( \frac{1}{1-\alpha} \I_c + \gamma_1 \1_c \1_c^T \Big)
\left[
  \begin{array}{cc}
    \I_c & \H^T \\
  \end{array}
\right] ,
\end{eqnarray}
where
\begin{eqnarray*}
\gamma_1
& = & c \gamma_2 \gamma_3 - \gamma_2 - \frac{\gamma_3}{1-\alpha} ,\\
\gamma_2
& = & \frac{\alpha}{(1-\alpha)(1-\alpha+c\alpha)} ,\\
\gamma_3
& = & \frac{(s-c) \alpha^2}{c (s-c) \alpha^2 + (1-\alpha + c\alpha)^2}.
\end{eqnarray*}

Then
\begin{eqnarray} \label{eq:expand_middle}
[\I_c , \H^T ] \B_1 [\I_c , \H^T]^T
& = & \W + \B_{1 3}^T \H + \H^T \B_{1 3} + \H^T \B_{1 2} \H \nonumber \\
& = & (1-\alpha) \I_c + \gamma_4 \1_c \1_c^T ,
\end{eqnarray}
where
\begin{small}
\[
\gamma_4 = \frac{\alpha (3 \alpha s - \alpha c - 2 \alpha + \alpha^2 c - 3 \alpha^2 s + \alpha^2 + \alpha^2 s^2 + 1)}{(\alpha c - \alpha + 1)^2}.
\]
\end{small}
It follows from \eqref{eq:c1_pinv} \eqref{eq:expand_middle} that
\begin{eqnarray}
\U = \C_1^\dag \B_1 (\C_1^\dag)^T
& = & \Big( \frac{1}{1-\alpha} \I_c + \gamma_1 \1_c \1_c^T \Big)
\Big( (1-\alpha) \I_c + \gamma_4 \1_c \1_c^T \Big)
\Big( \frac{1}{1-\alpha} \I_c + \gamma_1 \1_c \1_c^T \Big) \nonumber \\
& = & \frac{1}{1-\alpha} \I_c + \gamma_5 \1_c \1_c^T ,\nonumber
\end{eqnarray}
where
\begin{eqnarray*}
\gamma_5 & = &
\gamma_1 + \Big( c\gamma_1 + \frac{1}{1-\alpha} \Big)
\Big( c \gamma_1 \gamma_4 + \gamma_1 (1-\alpha) + \frac{\gamma_4}{1-\alpha} \Big) .
\end{eqnarray*}
Then we have
\begin{eqnarray}
\W \U & = & \I_c + \gamma_6 \1_c \1_c^T, \nonumber \\
\gamma_6 & = & (1-\alpha + \alpha c) \gamma_5 + \frac{\alpha}{1-\alpha} . \nonumber
\end{eqnarray}

We partition the fast SPSD matrix approximation model by
\begin{eqnarray*}
\tilde\B = \left[
       \begin{array}{cc}
         \tilde\W & \tilde\B_{2 1}^T \\
         \tilde\B_{2 1} & \tilde\B_{2 2} \\
       \end{array}
     \right],
\end{eqnarray*}
where
\begin{eqnarray*}
\tilde\B_{1 1}
& = & \W \U \W
\; = \; (1-\alpha)\I_c + \big( \alpha + (1-\alpha + c\alpha) \gamma_6 \big) \1_c \1_c^T , \\
\tilde\B_{2 1} & = & \W \U \big( \alpha \1_c \1_{n-c}^T \big)
\; = \; \alpha (1+c\gamma_6) \1_c \1_{n-c}^T , \\
\tilde\B_{2 2} & = & \big( \alpha \1_{n-c} \1_{c}^T \big) \U \big( \alpha \1_c \1_{n-c}^T \big)
\; = \; \alpha^2 c \Big( \frac{1}{1-\alpha} + \gamma_5 c \Big) \1_c \1_{n-c}^T \\
\end{eqnarray*}

The approximate error is
\begin{eqnarray*}
\big\|\B - \tilde\B \big\|_F^2
& = & \big\| \W - \tilde\W \big\|_F^2 + 2 \big\| \B_{2 1} - \tilde\B_{2 1} \big\|_F^2 + \big\| \B_{2 2} - \tilde\B_{2 2} \big\|_F^2 ,
\end{eqnarray*}
where
\begin{small}
\begin{eqnarray*}
\big\| \W - \tilde\W \big\|_F^2
& = & \big\| (1-\alpha + c\alpha) \gamma_6  \1_c \1_c^T \big\|_F^2
\; = \; c^2 (1-\alpha + c\alpha)^2 \gamma_6^2  ,\\
\big\| \B_{2 1} - \tilde\B_{2 1} \big\|_F^2
& = & \big\| \alpha c \gamma_6 \1_c \1_{n-c}^T \big\|_F^2
\; = \; \alpha^2 c^3 (n-c) \gamma_6^2  ,\\
\big\| \B_{2 2} - \tilde\B_{2 2} \big\|_F^2
& = & \underbrace{(n-c) (n-c-1) \alpha^2 \Big( \frac{\alpha c}{1-\alpha} + \alpha c^2 \gamma_5 - 1 \Big)^2}_{\textrm{off-diagonal}}
    + \underbrace{(n-c) \Big( \frac{\alpha^2 c}{1-\alpha} + \alpha^2 c^2 \gamma_5 - 1 \Big)^2}_{\textrm{diagonal}}.
\end{eqnarray*}
\end{small}
We let
\begin{eqnarray*}
\eta \triangleq \frac{\|\B - \tilde\B \|_F^2}{(1-\alpha)^2 } ,
\end{eqnarray*}
which is a symbolic expression of $\alpha$, $n$, $s$, and $c$.
We then simplify the expression using MATLAB and substitute the $\alpha$ in $\eta$ by $1$, and we obtain
\begin{eqnarray*}
\lim_{\alpha \rightarrow 1} \eta = (n-c) (1+2/c) + (n-s)^2 / s^2,
\end{eqnarray*}
by which the lemma follows.
\end{proof}


\section{Proof of Theorem~\ref{thm:optimal_cur}} \label{sec:optimal_cur_proof}

We define the projection operation $\PM_{\C , k} (\A) = \C \X$ where $\X$ is defined by
\[
\X
\; = \; \argmin_{\rk (\X) \leq k} \big\| \A - \C \X \big\|_F^2 .
\]

By sampling $c = 2 k \epsilon^{-1} \big( 1+o(1) \big)$ columns of $\A$ by the near-optimal algorithm of \cite{boutsidis2011NOCshort}
to form $\C \in \RB^{m\times c_1}$,
we have that
\begin{equation*}
\EB \big\| \A - \PM_{\C, k} (\A) \big\|_F^2
\; \leq \; (1+\epsilon) \big\| \A - \A_k \big\|_F^2 .
\end{equation*}
Applying Lemma 3.11 of \cite{boutsidis2014optimal},
there exists a much smaller column orthogonal matrix $\Z\in \RB^{m\times k}$
such that $\range (\Z) \subset \range (\C)$ and
\begin{equation*}
\EB \big\| \A - \C \C^\dag \A \big\|_F^2
\; \leq \;
\EB \big\| \A - \Z \Z^T \A \big\|_F^2
\; \leq \;
\big\| \A - \PM_{\C, k} (\A) \big\|_F^2.
\end{equation*}
Notice that the algorithm does not compute $\Z$.

Let $\R_1^T \in \RB^{n \times r_1}$ be columns of $\A^T$ selected by the randomized dual-set sparsification algorithm of \cite{boutsidis2011NOCshort}.
When $r_1 = \OM (k)$, it holds that
\[
\EB \big\| \A - \R_1 \R_1^T \A \big\|_F^2
\; \leq \; 2 (1+o(1)) \| \A - \A_k\|_F^2 .
\]
Let $\R_2^T \in \RB^{n\times r_2}$ be columns of $\A^T$ selected by adaptive sampling according to the residual $\A^T - \R_1^T (\R_1^T)^\dag \A^T$.
Set $r_2 = 2 k \epsilon^{-1} \big( 1+o(1) \big)$.
Let $\R^T = [\R_1^T , \R_2^T]$.
By the adaptive sampling theorem of \cite{wang2013improving}, we obtain
\begin{eqnarray} \label{eq:opt_cur_eq1}
\EB \big\| \A - \Z \Z^T \A \R^\dag \R \big\|_F^2
& \leq & \EB  \big\| \A - \Z \Z^T  \A \big\|_F^2 + \frac{k}{r_2}\EB  \big\| \A - \A \R_1^\dag \R_1^T \big\|_F^2 \nonumber \\
& \leq & (1+ \epsilon) \big\| \K - \K_k \big\|_F^2 + \epsilon \big\| \K - \K_k \big\|_F^2 \nonumber \\
& \leq & (1+ 2 \epsilon) \big\| \K - \K_k \big\|_F^2.
\end{eqnarray}
Obviously $\R^T$ contains
\[
r = r_1 + r_2= 2 k \epsilon^{-1} \big( 1+o(1) \big)
\]
columns of $\A^T$.

It remains to show $\| \A - \C \C^\dag \A \R^\dag \R \|_F^2 \leq \| \A - \Z \Z^T \A \R^\dag \R \|_F^2 $.
Since the columns of $\Z$ are contained in the column space of $\C$,
for any matrix $\Y$ the inequality $\| (\I_m - \C \C^\dag ) \Y \|_F^2 \leq (\I_m - \Z \Z^T ) \Y \|_F^2$ holds.
Then we obtain
\begin{eqnarray} \label{eq:opt_cur_eq2}
\| \A - \C \C^\dag \A \R^\dag \R \|_F^2
& = & \| \A - \A \R^\dag \R + \A \R^\dag \R  - \C \C^\dag \A \R^\dag \R \|_F^2 \nonumber \\
& = & \| \A (\I_n -  \R^\dag \R ) \|_F^2 + \|  (\I_m - \C \C^\dag ) \A \R^\dag \R \|_F^2 \nonumber \\
& \leq & \| \A (\I_n -  \R^\dag \R ) \|_F^2 + \|  (\I_m - \Z \Z^T ) \A \R^\dag \R \|_F^2 \nonumber  \\
& = & \| \A (\I_n -  \R^\dag \R ) +  (\I_m - \Z \Z^T ) \A \R^\dag \R \|_F^2  \nonumber \\
& = & \| \A - \Z \Z^T \A \R^\dag \R \|_F^2.
\end{eqnarray}
The theorem follows from \eqref{eq:opt_cur_eq1} and \eqref{eq:opt_cur_eq2} and by setting $\epsilon' = 2 \epsilon$.


\section{Proof of Theorem~\ref{thm:cur}} \label{sec:proof_cur}

In Section~\ref{sec:proof_cur_lemma} we establish a key lemma to decompose the error incurred by the approximation.
In Section~\ref{sec:proof_cur_main} we prove Theorem~\ref{thm:cur} using the key lemma.


\subsection{Key Lemma}  \label{sec:proof_cur_lemma}

We establish the following lemma for decomposing the error of the approximate solution.

\begin{lemma} \label{lem:curtype_det}
Let $\A \in \RB^{m\times n}$, $\C \in \RB^{m\times c}$, and $\R \in \RB^{r\times n}$ be any fixed matrices,
and $\A = \U_\A \Si_\A \V_\A^T$, $\C = \U_\C \Si_\C \V_\C^T$, $\R = \U_\R \Si_\R \V_\R^T$ be the SVD.
Assume that $\S_C^T \U_\C$ and $\S_R^T \V_\R$ have full column rank.
Let $\U^\star$ and $\tilde\U$ be defined in \eqref{def:cur_u_opt} and \eqref{def:cur_u_sn}, respectively.
Then the following inequalities hold:
\begin{eqnarray*}
\|\A - \C \tilde\U \R\|_F^2
& \leq & \| \A - \C \U^\star \R \|_F^2 + \Big( f_R \sqrt{ h_R } + f_C \sqrt{h_C} + f_C f_R \sqrt{ g_C' g_R}  \Big)^2 ,\\
\|\A - \C \tilde\U \R\|_F^2
& \leq & \| \A - \C \U^\star \R \|_F^2 + \Big( f_R \sqrt{ h_R } + f_C \sqrt{h_C} + f_C f_R \sqrt{ g_C g_R'}  \Big)^2 ,
\end{eqnarray*}
where $\alpha \in [0, 1]$ is arbitrary, and
\begin{eqnarray*}
f_{C} \; = \; \sigma_{\min}^{-1} (\U_\C^T \S_C \S_C^T \U_\C),
& \quad &  f_{R} \; = \; \sigma_{\min}^{-1} (\V_\R^T \S_R \S_R^T \V_\R),\\
h_{C} = \big\|\U_\C^T \S_C \S_C^T (\A - \U_\C \U_\C^T \A) \big\|_F^2,
& \quad &   h_{R} = \big\|(\A - \A \V_\R \V_\R^T) \S_C \S_C^T \V_\R \big\|_F^2 ,\\
 g_{C} = \big\| \U_\C^T \S_C  \S_C^T (\I_m - \U_\C \U_\C^T) \U_\A \Si_\A^\alpha \big\|_F^2 ,
& \quad &  g_{R} = \big\| \Si_\A^{1-\alpha} \V_\A (\I_n -\V_\R  \V_\R^T) \S_R \S_R^T \V_\R\big\|_F^2 ,\\
g_{C}' = \big\| \U_\C^T \S_C  \S_C^T (\I_m - \U_\C \U_\C^T) \U_\A \Si_\A^\alpha \big\|_2^2 ,
& \quad &  g_{R}' = \big\| \Si_\A^{1-\alpha} \V_\A (\I_n -\V_\R  \V_\R^T) \S_R \S_R^T \V_\R\big\|_2^2 .
\end{eqnarray*}
\end{lemma}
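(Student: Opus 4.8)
\noindent\emph{Proof idea.}
The plan is to collapse $\C\tilde\U\R$ to an explicit sketched projection of $\A$, peel off the optimal residual by a Pythagorean identity, and then bound the remaining perturbation factor by factor. First I would invoke the hypotheses that $\S_C^T\U_\C$ and $\S_R^T\V_\R$ have full column rank: writing the SVDs $\C=\U_\C\Si_\C\V_\C^T$ and $\R=\U_\R\Si_\R\V_\R^T$, the reverse-order law for the Moore--Penrose inverse of a full-column-rank times full-row-rank product gives $\C(\S_C^T\C)^\dag=\U_\C(\S_C^T\U_\C)^\dag$ and $(\R\S_R)^\dag\R=(\V_\R^T\S_R)^\dag\V_\R^T$. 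Setting $\Pi_C:=\U_\C(\S_C^T\U_\C)^\dag\S_C^T$ and $\Pi_R:=\S_R(\V_\R^T\S_R)^\dag\V_\R^T$, this yields $\C\tilde\U\R=\Pi_C\A\Pi_R$, whereas $\C\U^\star\R=\U_\C\U_\C^T\A\V_\R\V_\R^T$. I would then record a handful of elementary identities: $\Pi_C\U_\C=\U_\C$, $\V_\R^T\Pi_R=\V_\R^T$, $\V_\R\V_\R^T\Pi_R=\V_\R\V_\R^T$, $\Pi_C-\U_\C\U_\C^T=\Pi_C(\I-\U_\C\U_\C^T)$, together with the closed forms $\Pi_C=\U_\C(\U_\C^T\S_C\S_C^T\U_\C)^{-1}\U_\C^T\S_C\S_C^T$ and $\Pi_R-\V_\R\V_\R^T=(\I-\V_\R\V_\R^T)\S_R\S_R^T\V_\R(\V_\R^T\S_R\S_R^T\V_\R)^{-1}\V_\R^T$.

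Second, a Pythagorean step. Because $\C(\U^\star-\tilde\U)\R$ has column space contained in $\range(\U_\C)$ and row space contained in $\range(\V_\R)$, it has the form $\U_\C\M\V_\R^T$; and $\A-\C\U^\star\R=\A-\U_\C\U_\C^T\A\V_\R\V_\R^T$ is orthogonal, in the trace inner product, to every such matrix, which is a short cyclic-trace computation using $\U_\C^T\U_\C=\I$ and $\V_\R^T\V_\R=\I$. Hence $\|\A-\C\tilde\U\R\|_F^2=\|\A-\C\U^\star\R\|_F^2+\|\Delta\|_F^2$ with $\Delta:=\C\tilde\U\R-\C\U^\star\R=\Pi_C\A\Pi_R-\U_\C\U_\C^T\A\V_\R\V_\R^T$, and it remains to show $\|\Delta\|_F\le f_R\sqrt{h_R}+f_C\sqrt{h_C}+f_Cf_R\sqrt{g_C' g_R}$, and the same bound with $\sqrt{g_C g_R'}$ in place of $\sqrt{g_C' g_R}$.

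Third, I would split $\Delta=\Delta_1+\Delta_2$ with $\Delta_1:=\U_\C\U_\C^T\A(\Pi_R-\V_\R\V_\R^T)$, which is the one-sided sketched approximation $\C\hat\U\R=\U_\C\U_\C^T\A\Pi_R$ minus $\C\U^\star\R$, and $\Delta_2:=(\Pi_C-\U_\C\U_\C^T)\A\Pi_R=\Pi_C(\I-\U_\C\U_\C^T)\A\Pi_R$, and estimate each term by repeated use of $\|\X\Y\Z\|_F\le\|\X\|_2\|\Y\|_F\|\Z\|_2$. Substituting the closed form of $\Pi_R-\V_\R\V_\R^T$ into $\Delta_1$ and pulling out $(\V_\R^T\S_R\S_R^T\V_\R)^{-1}$, whose operator norm is $f_R$, gives $\|\Delta_1\|_F\le f_R\sqrt{h_R}$. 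For $\Delta_2$, I would pull out the inverse factor of $\Pi_C$ as $f_C$ and then decompose $\A\Pi_R=\A\V_\R\V_\R^T+\A(\Pi_R-\V_\R\V_\R^T)$: the first piece is bounded by $\sqrt{h_C}$ because right multiplication by the orthogonal projector $\V_\R\V_\R^T$ does not increase the Frobenius norm, while for the second piece I would substitute the closed form of $\Pi_R-\V_\R\V_\R^T$ again, pull out $(\V_\R^T\S_R\S_R^T\V_\R)^{-1}$ as $f_R$, and factor the surviving middle term as $(\I-\U_\C\U_\C^T)\A(\I-\V_\R\V_\R^T)=\big[(\I-\U_\C\U_\C^T)\U_\A\Si_\A^{\alpha}\big]\big[\Si_\A^{1-\alpha}\V_\A^T(\I-\V_\R\V_\R^T)\big]$; applying $\|\X\Y\|_F\le\|\X\|_2\|\Y\|_F$ to this product produces the term $f_Cf_R\sqrt{g_C' g_R}$, and $\|\X\Y\|_F\le\|\X\|_F\|\Y\|_2$ produces the $f_Cf_R\sqrt{g_C g_R'}$ variant. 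Summing $\|\Delta\|_F\le\|\Delta_1\|_F+\|\Delta_2\|_F$ and squaring then proves both displayed inequalities.

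The step I expect to be the main obstacle is the bookkeeping in the $\Delta_2$ estimate. One must not bound $\|\,\cdot\,\Pi_R\|_F$ directly, since the oblique projector $\Pi_R$ has operator norm larger than $1$; instead one peels off its orthogonal part $\V_\R\V_\R^T$ first, which is exactly what produces the two separate terms $f_C\sqrt{h_C}$ and $f_Cf_R\sqrt{g_C' g_R}$, and one must keep careful track of which side each inverse factor lies on so that it can legitimately be extracted with its operator norm $f_C$ or $f_R$. A second point to handle with care is the orthogonality claim, which is precisely what upgrades a crude triangle-inequality estimate of the form $\big(\|\A-\C\U^\star\R\|_F+E\big)^2$ to the asserted $\|\A-\C\U^\star\R\|_F^2+E^2$. (In the statement, $h_R$ is to be read as $\big\|(\A-\A\V_\R\V_\R^T)\S_R\S_R^T\V_\R\big\|_F^2$.)
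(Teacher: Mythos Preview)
Your proposal is correct and follows essentially the same route as the paper's own proof. The paper works in coordinates, introducing $\Z^\star=\U_\C^T\A\V_\R$ and $\tilde\Z=(\S_C^T\U_\C)^\dag(\S_C^T\A\S_R)(\V_\R^T\S_R)^\dag$, proves the same Pythagorean identity, derives the normal-equation relation $(\U_\C^T\S_C\S_C^T\U_\C)(\tilde\Z-\Z^\star)(\V_\R^T\S_R\S_R^T\V_\R)=\U_\C^T\S_C\S_C^T\A^\perp\S_R\S_R^T\V_\R$, and then splits $\A^\perp$ into the same three pieces you obtain via $\Delta_1$ and the two parts of $\Delta_2$; your oblique-projector packaging with $\Pi_C,\Pi_R$ is simply the ambient-space version of the same computation, and your identification of the typo in $h_R$ is correct.
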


\begin{proof}
Let $k_c = \rk (\C) \leq c$ and $k_r = \rk (\R) \leq r$.
Let $\U_\C \in \RB^{m\times k_c}$ be the left singular vectors of $\C$ and $\V_\R \in \RB^{n\times k_r}$ be the right singular vectors of $\R$.
Define $\Z^\star, \tilde\Z \in \RB^{k_c\times k_r}$ by
\begin{eqnarray*}
\Z^\star
=  \U_\C^T \A \V_\R, \qquad
\tilde\Z
=  (\S_C^T \U_\C)^\dag (\S_C^T \A \S_R) (\V_\R^T \S_R)^\dag .
\end{eqnarray*}
We have that $\C \U^\star \R = \C \C^\dag \A \R^\dag \R = \U_\C \U_\C^T \A \V_\R \V_\R^T = \U_\C \Z^\star \V_\R^T$.
By definition, it holds that that
\begin{eqnarray*}
\tilde\U
& = & (\S_C^T \C)^\dag (\S_C^T \A \S_R) (\R \S_R)^\dag \\
& = & (\S_C^T \U_\C \Si_\C \V_\C^T)^\dag (\S_C^T \A \S_R) (\U_\R \Si_\R \V_\R^T \S_R)^\dag \\
& = & (\Si_\C \V_\C^T)^\dag (\S_C^T \U_\C )^\dag (\S_C^T \A \S_R) ( \V_\R^T \S_R)^\dag (\U_\R \Si_\R )^\dag\\
& = & (\Si_\C \V_\C^T)^\dag \tilde\Z (\U_\R \Si_\R )^\dag ,
\end{eqnarray*}
where the third equality follows from that $\S_C^T \U_\C$ and $\S_R^T \V_\R$ have full column rank
and that $\Si_\C \V_\C^T$ and $\V_\R^T \S_R$ have full row rank.
It follows that
\begin{eqnarray*}
\C \tilde\U \R
& = & \U_\C \Si_\C \V_\C^T (\Si_\C \V_\C^T)^\dag \tilde{\Z} (\U_\R \Si_\R )^\dag  \U_\R \Si_\R \V_\R^T
\; = \; \U_\C \tilde\Z \V_\R^T .
\end{eqnarray*}
Since $\C \U^\star \R = \U_\C \Z^\star \V_\R^T$ and $\C \tilde\U \R = \U_\C \tilde\Z \V_\R^T$,
it suffices to prove the two inequalities:
\begin{eqnarray} \label{eq:curtype:0}
\|\A -\U_\C \tilde\Z \V_\R^T \|_F^2
& \leq & \| \A - \U_\C \Z^\star \V_\R^T \|_F^2 + \Big( f_R \sqrt{ h_R } + f_C \sqrt{h_C} + f_C f_R \sqrt{ g_C g_R'}  \Big)^2 , \nonumber \\
\|\A -\U_\C \tilde\Z \V_\R^T \|_F^2
& \leq & \| \A - \U_\C \Z^\star \V_\R^T \|_F^2 + \Big( f_R \sqrt{ h_R } + f_C \sqrt{h_C} + f_C f_R \sqrt{ g_C' g_R}  \Big)^2 .
\end{eqnarray}
The left-hand side can be expressed as
\begin{align*}
& \big\|\A -\U_\C \tilde\Z \V_\R^T \big\|_F^2
\; = \; \big\| (\A - \U_\C \Z^\star \V_\R^T) +\U_\C ( \Z^\star  - \tilde\Z ) \V_\R^T \big\|_F^2 \\
& = \; \big\| (\I_m - \U_\C \U_\C^T)\A + \U_\C \U_\C^T \A (\I_n -  \V_\R \V_\R^T) +\U_\C ( \Z^\star  - \tilde\Z ) \V_\R^T \big\|_F^2 \\
& = \; \big\| (\I_m - \U_\C \U_\C^T)\A \big\|_F^2 +\big\| \U_\C \U_\C^T \A (\I_n -  \V_\R \V_\R^T) +\U_\C ( \Z^\star  - \tilde\Z ) \V_\R^T \big\|_F^2 \\
& = \; \big\| (\I_m - \U_\C \U_\C^T)\A \big\|_F^2 +\big\| \U_\C \U_\C^T \A (\I_n -  \V_\R \V_\R^T)\big\|_F^2  + \big\| \U_\C ( \Z^\star  - \tilde\Z ) \V_\R^T \big\|_F^2 \\
& = \; \big\| (\I_m - \U_\C \U_\C^T)\A + \U_\C \U_\C^T \A (\I_n -  \V_\R \V_\R^T)\big\|_F^2  + \big\| \U_\C ( \Z^\star  - \tilde\Z ) \V_\R^T \big\|_F^2 \\
& = \; \big\| \A - \U_\C \U_\C^T \A  \V_\R \V_\R^T\big\|_F^2  + \big\| \U_\C ( \Z^\star  - \tilde\Z ) \V_\R^T \big\|_F^2 .
\end{align*}
From \eqref{eq:curtype:0} we can see that it suffices to prove the two inequalities:
\begin{eqnarray} \label{eq:curtype:2}
\big\|\Z^\star  - \tilde\Z  \big\|_F
& \leq & f_R \sqrt{ h_R } + f_C \sqrt{h_C} + f_C f_R \sqrt{ g_C g_R'} , \nonumber \\
\big\|\Z^\star  - \tilde\Z  \big\|_F
& \leq & f_R \sqrt{ h_R } + f_C \sqrt{h_C} + f_C f_R \sqrt{ g_C' g_R} .
\end{eqnarray}

We left multiply both sides of $\tilde\Z =  (\S_C^T \U_\C)^\dag (\S_C^T \A \S_R) (\V_\R^T \S_R)^\dag$ by $(\S_C^T \U_\C)^T (\S_C^T \U_\C)$
and right multiply by $(\V_\R^T \S_R) (\V_\R^T \S_R)^T$.
We obtain
\begin{align*}
& (\U_\C^T \S_C \S_C^T \U_\C) \tilde\Z (\V_\R^T \S_R \S_R^T \V_\R)\\
&=\;  (\S_C^T \U_\C)^T (\S_C^T \U_\C) (\S_C^T \U_\C)^\dag (\S_C^T \A \S_R) (\V_\R^T \S_R)^\dag (\V_\R^T \S_R) (\V_\R^T \S_R)^T\\
&=\;  (\S_C^T \U_\C)^T  (\S_C^T \A \S_R) (\V_\R^T \S_R)^T \\
&=\;  \U_\C^T \S_C  \S_C^T (\A^\perp + \U_\C \Z^\star \V_\R^T) \S_R \S_R^T \V_\R .
\end{align*}
Here the second equality follows from that $\Y^T \Y \Y^\dag = \Y^T$ and $\Y^\dag \Y \Y^T=\Y^T$ for any $\Y$,
and the last equality follows by defining $\A^\perp = \A - \U_\C \Z^\star \V_\R^T$.
It follows that
\begin{align*}
& (\U_\C^T \S_C \S_C^T \U_\C) ( \tilde\Z - \Z^\star) (\V_\R^T \S_R \S_R^T \V_\R)
\;=\;  \U_\C^T \S_C  \S_C^T \A^\perp  \S_R \S_R^T \V_\R .
\end{align*}
We decompose $\A^\perp$ by
\begin{align*}
& \A^\perp
\; = \; \A - \U_\C \U_\C^T \A + \U_\C \U_\C^T \A - \U_\C \U_\C^T \A \V_\R \V_\R^T \\
& = \; \U_\C \U_\C^T \A (\I_n - \V_\R \V_\R^T) + (\I_m - \U_\C \U_\C^T) \A \V_\R \V_\R^T +  (\I_m - \U_\C \U_\C^T) \A (\I_n - \V_\R \V_\R^T) .
\end{align*}
It follows that
\begin{align*}
& (\U_\C^T \S_C \S_C^T \U_\C) ( \tilde\Z - \Z^\star) (\V_\R^T \S_R \S_R^T \V_\R)\\
& = \; \U_\C^T \S_C  \S_C^T \U_\C \U_\C^T \A (\I_n - \V_\R \V_\R^T)  \S_R \S_R^T \V_\R\\
& \quad  + \U_\C^T \S_C  \S_C^T (\I_m - \U_\C \U_\C^T) \A \V_\R  \V_\R^T \S_R \S_R^T \V_\R  \\
& \quad + \U_\C^T \S_C  \S_C^T (\I_m - \U_\C \U_\C^T) \A (\I_n -\V_\R  \V_\R^T) \S_R \S_R^T \V_\R ,
\end{align*}
and thus
\begin{align*}
&\tilde\Z - \Z^\star
 = \U_\C^T \A (\I_n - \V_\R \V_\R^T)  \S_R \S_R^T \V_\R (\V_\R^T \S_R \S_R^T \V_\R)^{-1} \\
&\;  + (\U_\C^T \S_C \S_C^T \U_\C)^{-1} \U_\C^T \S_C  \S_C^T (\I_m - \U_\C \U_\C^T) \A \V_\R \\
&\;  + (\U_\C^T \S_C \S_C^T \U_\C)^{-1} \U_\C^T \S_C  \S_C^T (\I_m - \U_\C \U_\C^T) \A (\I -\V_\R  \V_\R^T) \S_R \S_R^T \V_\R (\V_\R^T \S_R \S_R^T \V_\R)^{-1}.
\end{align*}
It follows that
\begin{small}
\begin{align*}
&\| \tilde\Z - \Z^\star \|_F
\; \leq \; \sigma_{\min}^{-1} (\V_\R^T \S_R \S_R^T \V_\R) \big\|\A (\I_n - \V_\R \V_\R^T)  \S_R \S_R^T \V_\R \big\|_F \\
&  +  \sigma_{\min}^{-1} (\U_\C^T \S_C \S_C^T \U_\C) \big\| \U_\C^T \S_C  \S_C^T (\I_m - \U_\C \U_\C^T) \A \V_\R \big\|_F \\
& + \sigma_{\min}^{-1} (\U_\C^T \S_C \S_C^T \U_\C) \sigma_{\min}^{-1} (\V_\R^T \S_R \S_R^T \V_\R)
        \big\| \U_\C^T \S_C  \S_C^T (\I_m - \U_\C \U_\C^T) \A (\I -\V_\R  \V_\R^T) \S_R \S_R^T \V_\R\big\|_F .
\end{align*}
\end{small}%
This proves \eqref{eq:curtype:2} and thereby concludes the proof.
\end{proof}


\subsection{Proof of the Theorem}  \label{sec:proof_cur_main}

Assumption~\ref{assumption:faster_cur1} assumes that the sketching matrices $\S_C$ and $\S_R$ satisfy the first two approximate matrix multiplication properties.
Under the assumption, we obtain Lemma~\ref{lem:faster_cur_det},
which shows that $\tilde\U$ is nearly as good as $\U^\star$ in terms of objective function value.

\begin{assumption} \label{assumption:faster_cur1}
Let $\B$ be any fixed matrix.
Let $\C \in \RB^{m\times c}$ and $\C = \U_\C \Si_\C \V_\C^T$ be the SVD.
Assume that a certain sketching matrix $\S_C \in \RB^{m\times s_c}$ satisfies
\begin{eqnarray*}
\PB \Big\{ \big\| \U_\C \S_C \S_C^T \U_\C - \I  \big\|_2
\; \geq \; \frac{1}{10} \Big\}
& \leq & \delta_{1} \\
\PB \Big\{ \big\| \U_\C^T \S_C \S_C^T \B - \U_\C^T \B \big\|_F^2
\; \geq \; \epsilon \|\B\|_F^2 \Big\}
& \leq & \delta_{2}
\end{eqnarray*}
for any $\delta_{1}, \delta_{2} \in (0, 0.2)$.
Let $\R \in \RB^{r\times n}$ and $\R = \U_\R \Si_\R \V_\R^T$ be the SVD.
Similarly, assume $\S_R \in \RB^{n\times s_r}$ satisfies
\begin{eqnarray*}
\PB \Big\{ \big\| \V_\R^T \S_R \S_R^T \V_\R - \I  \big\|_2
\; \geq \; \frac{1}{10} \Big\}
& \leq & \delta_{1} \\
\PB \Big\{ \big\| \V_\R^T \S_R \S_R^T \B - \V_\R^T \B \big\|_F^2
\; \geq \; \epsilon \|\B \|_F^2 \Big\}
& \leq & \delta_{2} .
\end{eqnarray*}
\end{assumption}

\begin{lemma} \label{lem:faster_cur_det}
Let $\A \in \RB^{m\times n}$, $\C \in \RB^{m\times c}$, and $\R \in \RB^{r\times n}$ be any fixed matrices.
Let $\U^\star$ and $\tilde\U$ be defined in \eqref{def:cur_u_opt} and \eqref{def:cur_u_sn}, respectively.
Let $k_c = \rk (\C)$, $k_r = \rk (\R)$, $q= \min\{m,n\}$, and $\epsilon \in (0,1)$ be the error parameter.
Assume that the sketching matrices $\S_C $ and $\S_R$ satisfy Assumption~\ref{assumption:faster_cur1} and that $\epsilon^{-1} = o (q)$.
Then
\begin{eqnarray*}
\|\A - \C \tilde\U \R\|_F^2
& \leq & (1+ 4 \epsilon^2 q )\: \| \A - \C \U^\star \R \|_F^2
\end{eqnarray*}
holds with probability at least $1-2\delta_{1}  - 3 \delta_{2} $.
\end{lemma}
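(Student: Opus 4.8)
The plan is to read off Lemma~\ref{lem:faster_cur_det} from the deterministic decomposition in Lemma~\ref{lem:curtype_det}, substituting the subspace-embedding and approximate-multiplication bounds guaranteed by Assumption~\ref{assumption:faster_cur1}. Fix $\alpha = 1/2$ in Lemma~\ref{lem:curtype_det}, and abbreviate $E := \|\A - \C\U^\star\R\|_F$, $\P_C := \U_\C\U_\C^T$, $\P_R := \V_\R\V_\R^T$. First I would dispose of the easy factors. Property~1 for $\S_C$ w.r.t.\ $\U_\C$ (and for $\S_R$ w.r.t.\ $\V_\R$) gives $\sigma_{\min}(\U_\C^T\S_C\S_C^T\U_\C)\ge 9/10$, hence $f_C\le 10/9$ and $f_R\le 10/9$; in particular $\S_C^T\U_\C$ and $\S_R^T\V_\R$ have full column rank, so Lemma~\ref{lem:curtype_det} is applicable. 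Since $\U_\C^T(\I_m-\P_C)=\0$, Property~2 for $\S_C$ applied to $\B_1 := (\I_m-\P_C)\A$ gives $h_C\le\epsilon\|(\I_m-\P_C)\A\|_F^2$, and the one-line identity $(\I_m-\P_C)\A=(\I_m-\P_C)(\A-\C\U^\star\R)$ together with the contractivity of the projection $\I_m-\P_C$ yields $\|(\I_m-\P_C)\A\|_F\le E$; symmetrically $h_R\le\epsilon E^2$. Thus $f_R\sqrt{h_R}+f_C\sqrt{h_C}\le \tfrac{20}{9}\sqrt\epsilon\,E$.

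The crux is the cross term $f_Cf_R\sqrt{g_C g_R'}$. Using $g_R'\le g_R$, Property~2 for $\S_C$ applied to $(\I_m-\P_C)\U_\A\Si_\A^{1/2}$ and for $\S_R$ applied to $(\I_n-\P_R)\V_\A\Si_\A^{1/2}$ (the $\U_\C^T$/$\V_\R^T$ pieces again vanish) bounds $g_C\le\epsilon\|(\I_m-\P_C)\U_\A\Si_\A^{1/2}\|_F^2$ and $g_R\le\epsilon\|(\I_n-\P_R)\V_\A\Si_\A^{1/2}\|_F^2$. The naive estimate $\|(\I_m-\P_C)\U_\A\Si_\A^{1/2}\|_F^2\le\|\U_\A\Si_\A^{1/2}\|_F^2=\|\A\|_*$ is too weak, as $\|\A\|_*$ need not be comparable to $E$. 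Instead I would write $\|(\I_m-\P_C)\U_\A\Si_\A^{1/2}\|_F^2=\sum_i\sigma_i(\A)\,a_i$ with $a_i:=\|(\I_m-\P_C)\u_i^{(\A)}\|_2^2\in[0,1]$ and apply Cauchy--Schwarz, $\sum_i\sigma_i a_i\le(\sum_i\sigma_i^2 a_i)^{1/2}(\sum_i a_i)^{1/2}$: here $\sum_i\sigma_i^2 a_i=\|(\I_m-\P_C)\A\|_F^2\le E^2$ and $\sum_i a_i=\|(\I_m-\P_C)\U_\A\|_F^2\le\|\U_\A\|_F^2=\rk(\A)\le q$, so $\|(\I_m-\P_C)\U_\A\Si_\A^{1/2}\|_F^2\le E\sqrt q$. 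The same computation on the $\R$ side gives $\|(\I_n-\P_R)\V_\A\Si_\A^{1/2}\|_F^2\le E\sqrt q$. Hence $\sqrt{g_C g_R'}\le\epsilon\sqrt{(E\sqrt q)(E\sqrt q)}=\epsilon\sqrt q\,E$ and $f_Cf_R\sqrt{g_C g_R'}\le\tfrac{100}{81}\,\epsilon\sqrt q\,E$.

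Substituting the three estimates into Lemma~\ref{lem:curtype_det} yields
\[
\|\A-\C\tilde\U\R\|_F^2\;\le\;E^2+\Big(\tfrac{20}{9}\sqrt\epsilon+\tfrac{100}{81}\epsilon\sqrt q\Big)^2 E^2 .
\]
Since the hypothesis $\epsilon^{-1}=o(q)$ means $\epsilon q\to\infty$, the $\epsilon\sqrt q$ term dominates $\sqrt\epsilon$ inside the parentheses, so $\big(\tfrac{20}{9}\sqrt\epsilon+\tfrac{100}{81}\epsilon\sqrt q\big)^2=\big(\tfrac{100}{81}\big)^2\epsilon^2 q\,(1+o(1))\le 4\epsilon^2 q$, giving $\|\A-\C\tilde\U\R\|_F^2\le(1+4\epsilon^2 q)E^2$. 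The probability bound comes from a union bound over the invocations of Assumption~\ref{assumption:faster_cur1}: two uses of Property~1 (each failing with probability $\le\delta_1$) and three uses of Property~2 (each $\le\delta_2$), where on one of the two sketches the two Property~2 calls are merged into a single application on a suitably normalized concatenation of the two relevant matrices; this gives total failure probability $2\delta_1+3\delta_2$. The only genuinely delicate step is the cross-term bound: one must avoid the crude $\|\A\|_*$ estimate and instead split $\Si_\A=\Si_\A^{1/2}\Si_\A^{1/2}$ and invoke Cauchy--Schwarz to trade a factor $\sqrt q$ (the rank bound $\rk(\A)\le q$) against a factor $E$ — and it is precisely this step that forces the assumption $\epsilon^{-1}=o(q)$ and the appearance of $q=\min\{m,n\}$.
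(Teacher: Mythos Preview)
Your argument is correct, but it is organized differently from the paper's proof. The paper does \emph{not} use $\alpha = 1/2$. Instead it fixes $\alpha = 1$, whence $g_C = h_C$ automatically (same random event), and bounds the remaining spectral-norm factor crudely via $g_R' \le \|(\I_n - \V_\R\V_\R^T)\S_R\S_R^T\V_\R\|_F^2$, applying Property~2 for $\S_R$ to the matrix $\I_n - \V_\R\V_\R^T$; this gives $g_R' \le \epsilon\,\|\I_n - \V_\R\V_\R^T\|_F^2 \le \epsilon n$, hence a cross term $\le \tfrac{100}{81}\,\epsilon\sqrt{n}\,E$. Repeating with $\alpha = 0$ gives the analogous bound with $m$ in place of $n$, and one then takes whichever is smaller to obtain $q = \min\{m,n\}$. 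Your route---split $\Si_\A = \Si_\A^{1/2}\Si_\A^{1/2}$ and Cauchy--Schwarz against $\rk(\A)\le q$---reaches $\sqrt{q}$ directly and symmetrically, without the two-case argument.

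The one place to be careful is the probability count. In the paper's route the identity $g_C = h_C$ (for $\alpha = 1$) means only three applications of Property~2 are needed, giving $2\delta_1 + 3\delta_2$ without any merging. Your choice $\alpha = 1/2$ breaks that coincidence, so you genuinely need four Property~2 events; the ``normalized concatenation'' trick you invoke to merge two of them into one does work, but it costs a factor~$2$ on the merged side (the concatenated target has squared norm $2$), which you should track explicitly. The final constant is still below~$4$, so the stated bound survives, but the paper's $\alpha\in\{0,1\}$ device avoids this bookkeeping entirely.
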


\vspace{1mm}

\begin{proof}
Let $f_C$, $f_R$, $h_C$, $h_R$, $g_C$, $g_R$, $g_C'$, $g_R'$ be defined Lemma~\ref{lem:curtype_det}.
Under Assumption~\ref{assumption:faster_cur1}, we have that
\begin{align*}
f_C  \leq  \frac{10}{9} ,
&\qquad h_C \leq \epsilon \|\A - \U_\C \U_\C^T \A \|_F^2 \leq \epsilon \|\A - \C \U^\star \R^T \|_F^2 ,\\
f_R \leq \frac{10}{9} ,
&\qquad h_R \leq \epsilon \|\A - \A \V_\R \V_\R^T \|_F^2 \leq \epsilon \|\A - \C \U^\star \R^T \|_F^2 ,
\end{align*}
hold simultaneously with probability at least $1- 2 \delta_{ 1}  - 2 \delta_{ 2}$.

We fix $\alpha = 1$, then $g_C = h_C$, and $g_{R}' \leq \big\|(\I_n -\V_\R  \V_\R^T) \S_R \S_R^T \V_\R\big\|_2^2$.
Under Assumption~\ref{assumption:faster_cur1}, we have that
\begin{eqnarray*}
\sqrt{g_R'}
&  \leq & \big\| (\I_n -\V_\R  \V_\R^T) \S_R \S_R^T \V_\R - (\I_n -\V_\R  \V_\R^T) \V_\R \big\|_F  \\
& \leq & \sqrt{\epsilon} \big\| (\I_n -\V_\R  \V_\R^T)  \big\|_F
\; \leq \; \sqrt{\epsilon n }
\end{eqnarray*}
holds with probability at least $1- \delta_{ 2}$.
It follows from Lemma~\ref{lem:curtype_det} that
\begin{align*}
& \|\A - \C \tilde\U \R\|_F^2 - \| \A - \C \U^\star \R \|_F^2 \\
& \leq \;  \Big( f_R \sqrt{ h_R } + f_C \sqrt{h_C} + f_C f_R \sqrt{ g_C g_R'}  \Big)^2 \\
& \leq \;  \Big( \frac{20}{9} \sqrt{\epsilon} \|\A - \C \U^\star \R^T \|_F + \frac{10^2}{9^2} \epsilon \sqrt{n} \|\A - \C \U^\star \R^T \|_F \Big)^2 \\
& = \;\frac{10^4}{9^4} \epsilon^2 n \big( 1+ o(1) \big)  \|\A - \C \U^\star \R^T \|_F^2
\; \leq \; 4 \epsilon^2 n  \|\A - \C \U^\star \R^T \|_F^2
\end{align*}
holds with probability at least $1-2\delta_{ 1} -  3\delta_{ 2}$.
Here the equality follows from that $\epsilon^{-1} = o(n)$.

Alternatively, if we fix $\alpha = 0$, we will obtain that
\begin{align*}
\|\A - \C \tilde\U \R\|_F^2
\; \leq \;  \| \A - \C \U^\star \R \|_F^2 + 4 \epsilon^2 m  \|\A - \C \U^\star \R^T \|_F^2
\end{align*}
with probability $1-2\delta_{ 1} -  3\delta_{ 2}$.
Therefore, if $n\leq m$, we fix $\alpha = 1$; otherwise we fix $\alpha = 0$. This concludes the proof.
\end{proof}

In the following we further assume that the sketching matrices $\S_C$ and $\S_R$ satisfy the third approximate matrix multiplication property.
Under Assumption~\ref{assumption:faster_cur1} and Assumption~\ref{assumption:faster_cur2},
we obtain Lemma~\ref{lem:faster_cur_det2} which is stronger than Lemma~\ref{lem:faster_cur_det}.

\begin{assumption} \label{assumption:faster_cur2}
Let $\B$ be any fixed matrix.
Let $\C \in \RB^{m\times c}$, $k_c = \rk (\C)$, and $\C = \U_\C \Si_\C \V_\C^T$ be the SVD.
Assume that a certain sketching matrix $\S_C \in \RB^{n\times s_c}$ satisfies
\begin{align*}
\PB \Big\{ \big\| \U_\C^T \S_C \S_C^T \B - \U_\C^T \B \big\|_2^2
\; \geq \; \epsilon  \|\B\|_2^2 + \frac{\epsilon}{k_c} \|\B\|_F^2 \Big\}
\; \leq \; \delta_{3}
\end{align*}
for any $\epsilon \in (0, 1)$ and $ \delta_{3} \in (0, 0.2)$.
Let $\R \in \RB^{r\times n}$, $k_r = \rk (\R)$, and $\R = \U_\R \Si_\R \V_\R^T$ be the SVD.
Similarly, assume that $\S_R \in \RB^{n\times s_r}$ satisfies
\begin{align*}
\PB \Big\{ \big\| \V_\R^T \S_R \S_R^T \B - \V_\R^T \B \big\|_2^2
\; \geq \; \epsilon  \|\B\|_2^2 + \frac{\epsilon}{k_r} \|\B\|_F^2 \Big\}
\; \leq \; \delta_{3} .
\end{align*}
\end{assumption}

\begin{lemma} \label{lem:faster_cur_det2}
Let $\A \in \RB^{m\times n}$, $\C $, $\R $, $\U^\star$, $\tilde\U$, $k_c$, $k_r$ be defined in Lemma~\ref{lem:faster_cur_det}.
Let $q= \min\{m,n\}$ and $\tilde{q} = \min\{m/k_c, n/k_r \}$.
Assume that the sketching matrices $\S_C $ and $\S_R$ satisfy Assumption~\ref{assumption:faster_cur1} and Assumption~\ref{assumption:faster_cur2} and
that $\epsilon^{-1} = o \big( \tilde{q} \big)$.
Then
\begin{eqnarray*}
\|\A - \C \tilde\X \R\|_F^2
& \leq & (1+ 4 \epsilon^2 \tilde{q} )\: \| \A - \C \X^\star \R \|_F^2
\end{eqnarray*}
holds with probability at least $1-2\delta_1 - 2 \delta_{2} - \delta_{3}$.
\end{lemma}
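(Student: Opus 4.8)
The plan is to mirror the proof of Lemma~\ref{lem:faster_cur_det} verbatim up to the point where the ``spectral'' cross-term factor is estimated, and there to replace the crude Frobenius-norm bound by the sharper spectral-norm multiplication bound now supplied by Assumption~\ref{assumption:faster_cur2} (Property~3 of Lemma~\ref{lem:product}). The starting point is the deterministic decomposition of Lemma~\ref{lem:curtype_det}, which for any free parameter $\alpha\in[0,1]$ gives
\[
\|\A - \C\tilde\U\R\|_F^2 \;\leq\; \|\A - \C\U^\star\R\|_F^2 + \Big(f_R\sqrt{h_R} + f_C\sqrt{h_C} + f_C f_R\sqrt{g_C\,g_R'}\Big)^2
\]
together with the mirror inequality carrying $g_C'\,g_R$ instead of $g_C\,g_R'$.

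First I would handle the regime $n/k_r \le m/k_c$ by fixing $\alpha = 1$ and using the inequality with $g_C\,g_R'$. Assumption~\ref{assumption:faster_cur1} yields, with probability at least $1 - 2\delta_1 - 2\delta_2$, the subspace-embedding bounds $f_C, f_R \le 10/9$ together with $h_C \le \epsilon\|(\I_m - \U_\C\U_\C^T)\A\|_F^2 \le \epsilon\|\A - \C\U^\star\R\|_F^2$ and, analogously, $h_R \le \epsilon\|\A - \C\U^\star\R\|_F^2$ (using $\U_\C^T(\I_m-\U_\C\U_\C^T)\A = 0$ and the Pythagorean identity $\C\U^\star\R = \U_\C\U_\C^T\A\V_\R\V_\R^T$). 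With $\alpha = 1$ one has the identity $g_C = h_C$, since $\U_\A\Si_\A = \A\V_\A$ absorbs back into $\A$ (the columns of $\V_\A$ are orthonormal). The new ingredient is the bound on $g_R' \le \|(\I_n-\V_\R\V_\R^T)\S_R\S_R^T\V_\R\|_2^2$: transposing and using $\V_\R^T(\I_n-\V_\R\V_\R^T) = 0$, Assumption~\ref{assumption:faster_cur2} applied with the orthonormal matrix $\V_\R$ and $\B = \I_n - \V_\R\V_\R^T$ gives, with probability at least $1-\delta_3$,
\[
g_R' \;\le\; \epsilon\|\I_n-\V_\R\V_\R^T\|_2^2 + \frac{\epsilon}{k_r}\|\I_n-\V_\R\V_\R^T\|_F^2 \;=\; \epsilon + \frac{\epsilon(n-k_r)}{k_r} \;=\; \frac{\epsilon n}{k_r}.
\]
Substituting into Lemma~\ref{lem:curtype_det}, the cross term is $f_Cf_R\sqrt{g_C g_R'} \le (10/9)^2\,\epsilon\sqrt{n/k_r}\,\|\A-\C\U^\star\R\|_F$, and squaring $\tfrac{20}{9}\sqrt\epsilon + \tfrac{100}{81}\epsilon\sqrt{n/k_r}$ (times $\|\A-\C\U^\star\R\|_F$) produces a prefactor $\big(1+o(1)\big)\tfrac{10^4}{9^4}\epsilon^2(n/k_r) \le 4\epsilon^2(n/k_r)$, the last step using $\epsilon^{-1} = o(n/k_r)$, which follows from $\epsilon^{-1} = o(\tilde q)$ since $\tilde q \le n/k_r$ in this regime.

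For the complementary regime $m/k_c < n/k_r$ I would instead fix $\alpha = 0$ and use the mirror inequality with $g_C'\,g_R$: now $g_R = h_R$ is an identity (again $\Si_\A\V_\A^T = \U_\A^T\A$ absorbs into $\A$), while $g_C' \le \|\U_\C^T\S_C\S_C^T(\I_m-\U_\C\U_\C^T)\|_2^2 \le \epsilon m/k_c$ by the same use of Assumption~\ref{assumption:faster_cur2}, this time for $\U_\C$ with $\B = \I_m - \U_\C\U_\C^T$. The identical estimate then gives $\|\A - \C\tilde\U\R\|_F^2 \le (1 + 4\epsilon^2 m/k_c)\|\A-\C\U^\star\R\|_F^2$. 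Choosing $\alpha$ according to which of $n/k_r$ and $m/k_c$ is smaller yields the claimed $(1 + 4\epsilon^2\tilde q)$ factor, and a union bound over the five failure events (two subspace embeddings, two Frobenius multiplication bounds, one Property~3 bound) gives success probability at least $1 - 2\delta_1 - 2\delta_2 - \delta_3$.

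The routine parts are the two norm identities ($g_C = h_C$ when $\alpha=1$, $g_R = h_R$ when $\alpha=0$) and the elementary arithmetic bounding the squared sum of three nonnegative terms. I expect the main obstacle to be bookkeeping rather than difficulty: one must pair the \emph{spectral}-norm factor ($g_R'$ or $g_C'$) with exactly the side ($\S_R$ or $\S_C$) whose dimension is to be exploited, keep the other factor as the \emph{identity} $h$, and verify that feeding $\I - \V_\R\V_\R^T$ (resp.\ $\I - \U_\C\U_\C^T$) into Property~3 makes the $\|\cdot\|_2^2$ contribute only $O(\epsilon)$ while the $\tfrac{1}{k}\|\cdot\|_F^2$ term produces the $\epsilon n/k_r$ (resp.\ $\epsilon m/k_c$) — it is precisely this second term that upgrades $q$ to $\tilde q$ relative to Lemma~\ref{lem:faster_cur_det}.
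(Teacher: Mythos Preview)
Your proposal is correct and follows essentially the same route as the paper's proof: start from the deterministic decomposition of Lemma~\ref{lem:curtype_det}, use Assumption~\ref{assumption:faster_cur1} to control $f_C,f_R,h_C,h_R$, then fix $\alpha=1$ (resp.\ $\alpha=0$) so that one cross factor collapses to $h$ while the other spectral factor $g_R'$ (resp.\ $g_C'$) is bounded by $\epsilon n/k_r$ (resp.\ $\epsilon m/k_c$) via Assumption~\ref{assumption:faster_cur2} applied to $\B=\I-\V_\R\V_\R^T$ (resp.\ $\I-\U_\C\U_\C^T$). The paper does exactly this, including the same arithmetic $\big(\tfrac{20}{9}\sqrt{\epsilon}+\tfrac{100}{81}\epsilon\sqrt{n/k_r}\big)^2\le 4\epsilon^2 n/k_r$ under $\epsilon^{-1}=o(n/k_r)$ and the same union-bound count $1-2\delta_1-2\delta_2-\delta_3$.
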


\begin{proof}
Let $f_C$, $f_R$, $h_C$, $h_R$, $g_C$, $g_R$, $g_C'$, $g_R'$ be defined Lemma~\ref{lem:curtype_det}.
Under Assumption~\ref{assumption:faster_cur1}, we have shown in the proof of Lemma~\ref{lem:faster_cur_det} that
\begin{align*}
f_C  \leq  \frac{10}{9} ,
&\qquad h_C \leq \epsilon \|\A - \C \U^\star \R^T \|_F^2 ,\\
f_R \leq \frac{10}{9} ,
&\qquad h_R \leq \epsilon \|\A - \C \U^\star \R^T \|_F^2 ,
\end{align*}
hold simultaneously with probability at least $1-2\delta_{1} - 2\delta_{2}$.

We fix $\alpha = 1$, then $g_C = h_C$, and $g_{R}' \leq \big\|(\I_n -\V_\R  \V_\R^T) \S_R \S_R^T \V_\R\big\|_2^2$.
Under Assumption~\ref{assumption:faster_cur2}, we have that
\begin{eqnarray*}
{g_R'}
& \leq & \big\| (\I_n -\V_\R  \V_\R^T) \S_R \S_R^T \V_\R - \underbrace{(\I_n -\V_\R  \V_\R^T) \V_\R}_{=\0} \big\|_2^2  \\
& \leq & \epsilon \big\| \I_n -\V_\R  \V_\R^T \big\|_2^2 + \frac{\epsilon}{k_r} \big\|\I_n -\V_\R  \V_\R^T  \big\|_F^2
\; \leq \; \epsilon + \frac{\epsilon (n-k_r)}{k_r}
\; = \; \frac{\epsilon n}{k_r}
\end{eqnarray*}
holds with probability at least $1-\delta_3$.
It follows from Lemma~\ref{lem:curtype_det} that
\begin{align*}
& \|\A - \C \tilde\U \R\|_F^2 - \| \A - \C \U^\star \R \|_F^2 \\
& \leq \;  \Big( f_R \sqrt{ h_R } + f_C \sqrt{h_C} + f_C f_R \sqrt{ g_C g_R'}  \Big)^2 \\
& \leq \;  \Big( \frac{20}{9} \sqrt{\epsilon} \|\A - \C \U^\star \R^T \|_F + \frac{10^2}{9^2} \epsilon \sqrt{n / k_r} \|\A - \C \U^\star \R^T \|_F \Big)^2 \\
& = \;\frac{10^4}{9^4} \epsilon^2 n k_r^{-1} \big( 1+ o(1) \big)  \|\A - \C \U^\star \R^T \|_F^2
\; \leq \; 4 \epsilon^2 n k_r^{-1} \|\A - \C \U^\star \R^T \|_F^2
\end{align*}
holds with probability at least $1- 2 \delta_{1} - 2 \delta_{2} - \delta_{3}$.
Here the equality follows from that $\epsilon^{-1} = o(n / k_r)$.

Analogously, by fixing $\alpha = 0$ and assuming $\epsilon^{-1} = o(m / k_c)$, we can show that
\[
\|\A - \C \tilde\U \R\|_F^2 - \| \A - \C \U^\star \R \|_F^2
\; \leq \; 4 \epsilon^2 m k_c^{-1} \|\A - \C \U^\star \R^T \|_F^2
\]
holds with probability at least $1- 2 \delta_{ 1} - 2 \delta_{ 2} - \delta_{ 3}$.
This concludes the proof.
\end{proof}

Finally, we prove Theorem~\ref{thm:cur} using Lemma~\ref{lem:faster_cur_det} and Lemma~\ref{lem:faster_cur_det2}.

For leverage score sampling, uniform sampling, and count sketch,
Assumption~\ref{assumption:faster_cur1} is satisfied.
Then the bound follows by setting $\epsilon = 0.5 \sqrt{\epsilon' / q}$ and
applying Lemma~\ref{lem:faster_cur_det}.
Here $q= \min\{m,n\}$.
For the three sketching methods, we set $\delta_{1} = 0.01$ and $\delta_{2} = 0.093$.

For Gaussian projection and SRHT,
Assumption~\ref{assumption:faster_cur1} and Assumption~\ref{assumption:faster_cur2} are satisfied.
Then the bound follows by setting $\epsilon = 0.5 \sqrt{ \epsilon' / \tilde{q}}$ and applying Lemma~\ref{lem:faster_cur_det2}.
Here $\tilde{q} = \min\{m/k_c, n/k_r \}$.
For Gaussian projection, we set $\delta_{1}  = 0.01$, $\delta_{2} = 0.09$, and $\delta_{3}  = 0.1$.
For SRHT, we set $\delta_{1} = 0.02$, $\delta_{2}   = 0.08$, and $\delta_{3}  = 0.1$.


\bibliography{Efficient}


\end{document}